\DeclareMathOperator*{\argmin}{arg\,min}
\theoremstyle{plain}
\newtheorem{theorem}{Theorem}
\newtheorem*{theorem*}{Theorem}
\newtheorem{proposition}[theorem]{Proposition}
\newtheorem*{proposition*}{Proposition}
\newtheorem{lemma}[theorem]{Lemma}
\newtheorem*{lemma*}{Lemma}
\theoremstyle{definition}
\newtheorem{definition}{Definition}
\newtheorem{assumption}{Assumption}
\theoremstyle{remark}
\newtheorem{case}{Case}
\colorlet{Changes@Color}{red}
\title{\LARGE \bf Adaptive Complexity Model Predictive Control
}
\author{Joseph Norby,~\IEEEmembership{Member,~IEEE}, Ardalan Tajbakhsh,~\IEEEmembership{Student Member,~IEEE}, Yanhao Yang,~\IEEEmembership{Student Member,~IEEE}, and Aaron M. Johnson,~\IEEEmembership{Senior Member,~IEEE}
\thanks{*This material is based upon work supported in part by Chevron CTC and the National Science Foundation under Grants DGE-1745016 and ECCS-1924723.}
\thanks{The authors are with the Department of Mechanical Engineering, Carnegie Mellon University, Pittsburgh, PA 15213, USA,
        {\tt\small jcnorby@gmail.com, amj1@andrew.cmu.edu}}%
}
\newcommand\copyrighttext{%
 \textcopyright 2024 IEEE. Personal use of this material is permitted.
  Permission from IEEE must be obtained for all other uses, in any current or future
  media, including reprinting/republishing this material for advertising or promotional
  purposes, creating new collective works, for resale or redistribution to servers or
  lists, or reuse of any copyrighted component of this work in other works.
  DOI: \href{https://doi.org/10.1109/TRO.2024.3410408}{10.1109/TRO.2024.3410408}}
\newcommand\copyrightnotice{%
\begin{tikzpicture}[remember picture,overlay]
\node[anchor=south,yshift=5pt] at (current page.south) {\fbox{\parbox{\dimexpr\textwidth-\fboxsep-\fboxrule\relax}{  \footnotesize \copyrighttext}}};
\end{tikzpicture}%
}
\begin{document}
\bstctlcite{IEEEexample:BSTcontrol}

\allowdisplaybreaks

\thispagestyle{empty}
\setcounter{page}{0}
\begin{figure*}[t!]
\centering
\large
This paper has been published in IEEE Transactions on Robotics.\\

DOI: \href{https://doi.org/10.1109/TRO.2024.3410408}{10.1109/TRO.2024.3410408}\\

IEEE Explore: \href{https://ieeexplore.ieee.org/document/10551539}{https://ieeexplore.ieee.org/document/10551539}\\

~\\

Please cite the paper as:\\

Joseph Norby, Ardalan Tajbakhsh, Yanhao Yang, and Aaron M. Johnson. ``Adaptive Complexity Model Predictive Control.'' \emph{IEEE Transactions on Robotics}, 40: 4615-4634. 2024.\\

~\\

~\\

\copyrighttext
\vspace{400px}
\end{figure*}
\maketitle
\copyrightnotice

\maketitle
\thispagestyle{empty}
\pagestyle{empty}

\begin{abstract}
This work introduces a formulation of model predictive control (MPC) which adaptively reasons about the complexity of the model while maintaining feasibility and stability guarantees. Existing approaches often handle computational complexity by shortening prediction horizons or simplifying models, both of which can result in instability. Inspired by related approaches in behavioral economics, motion planning, and biomechanics, our method solves MPC problems with a simple model for dynamics and constraints over regions of the horizon where such a model is feasible and a complex model where it is not. The approach leverages an interleaving of planning and execution to iteratively identify these regions, which can be safely simplified if they satisfy an exact template/anchor relationship. We show that this method does not compromise the stability and feasibility properties of the system, and measure performance in simulation experiments on a quadrupedal robot executing agile behaviors over terrains of interest. We find that this adaptive method enables more agile motion (55\% increase in top speed) and expands the range of executable tasks compared to fixed-complexity implementations.
\end{abstract}

\begin{IEEEkeywords}
Optimization and Optimal Control, Legged Robots, Underactuated Robots, Dynamics.
\end{IEEEkeywords}

\section{Introduction} \label{sec:ac:intro}

As demand for robotic systems increases in industries like environmental monitoring, industrial inspection, disaster recovery, and material handling \cite{bellicoso2018advances,hutter2017anymal,kolvenbach2020towards}, so too has the need for motion planning and control algorithms that efficiently handle the complexity of their dynamics and constraints. Legged systems in particular are well suited for these applications due to their ability to traverse unstructured terrains with behaviors such as that shown in Fig.~\ref{fig:ac:intro}, yet they are so far largely restricted to conservative behaviors due to this complexity. A common approach to overcome these challenges is to break up the problem into a hierarchy of sub-problems which reason over progressively shorter horizons with increasing model complexity. This hierarchy improves computational efficiency which can be used to detect obstacles further away, react more quickly to disturbances, or reduce energy costs. However, this hierarchy is vulnerable to failures caused by omitting portions of the underlying model, raising a fundamental question of how to balance model fidelity and computational efficiency.

\begin{figure}[t]
  \centering
  \includegraphics[width=1.0\linewidth]{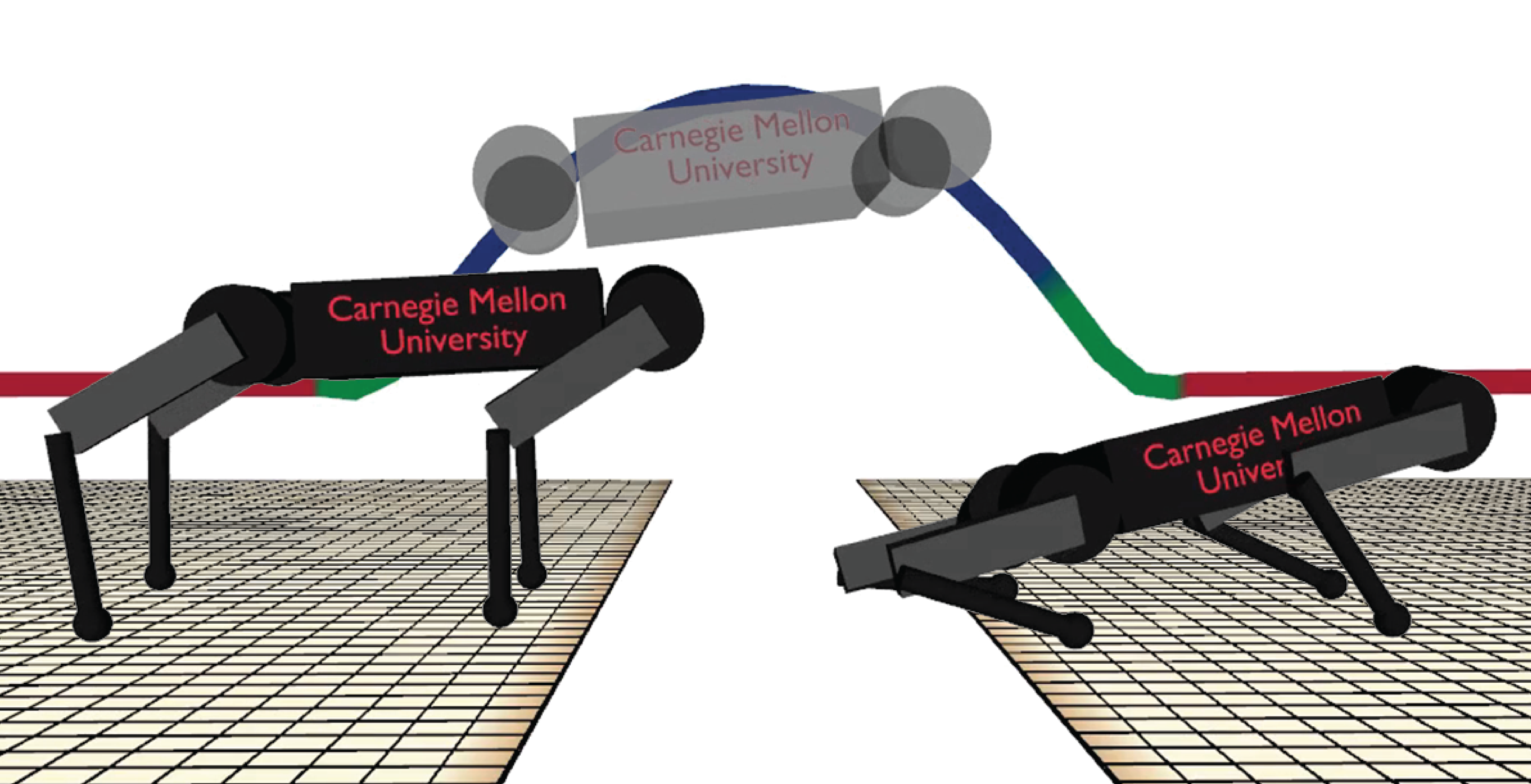}
  \caption[Adaptive complexity model predictive control summary]{Adaptive complexity model predictive control selectively simplifies the model to promote efficiency without sacrificing stability. For example, during a legged leaping task joint information may be required during takeoff and landing but can be omitted elsewhere without affecting the behavior.}
  \label{fig:ac:intro}
\end{figure}

Inspiration for answering this question can be drawn from other scientific fields, in particular behavioral economics and neuromechanics. The famous ``Thinking: Fast and Slow'' framework theorizes that human cognitive function can be described with two systems which respectively handle rapid, simple processing and slow, deliberative reasoning such that ``[the complex, slow system] is activated when an event is detected that violates the model of the world that [the simple, fast system] maintains'' \cite{kahneman2011thinking}. Extending this concept to the field of motion planning yields meta-planning methods which change their structure to leverage simple, fast models where possible and complex, slow ones where the simple model is inaccurate \cite{gochev2011path,fridovich2018planning}. However, it is not well understood under what exact conditions a given dynamical system may leverage a simple model without sacrificing stability and feasibility, nor is it clear when a more complex model should be used without making the computational overhead intractable. 

To investigate these questions, we employ the templates and anchors approach for analyzing model hierarchies \cite{full1999templates}. This framework describes relations between dynamical systems where complex behaviors (anchors) can be captured by simpler models (templates), and its connections to legged locomotion are well-studied in both animals and robots. From these observations we draw three hypotheses:

\begin{enumerate}
    \item From the templates and anchors relationship we can derive sufficient conditions which identify regions of a behavior where complex dynamics and constraints can be safely simplified without compromising stability and feasibility (i.e.\ regions where the robot is following the template model). \label{hyp:ac:conditions}
    \item Legged systems often operate in environments where these conditions are satisfied during the majority of behaviors and therefore stability and feasibility guarantees can often be retained even with simplified models. \label{hyp:ac:simple_frequency}
    \item During behaviors in which these conditions are not met, a controller which leverages adaptive complexity online can improve performance over fixed-complexity formulations by enabling more efficient motion planning while retaining stability guarantees.
    \label{hyp:ac:controller_performance}
\end{enumerate}

We evaluate these hypotheses by constructing a formulation of model predictive control (MPC) that actively adapts the model complexity to the task. This is achieved by iteratively identifying regions of the horizon where the behavior can be safely expressed with a simpler model. This formulation -- which we call adaptive complexity model predictive control (ACMPC) -- is most efficient when the two models satisfy a relationship known as ``exact anchoring'' \cite{libby2016comparative}, and in the worst case converges to the standard all-complex MPC formulation. We evaluate Hypothesis~\ref{hyp:ac:conditions} by showing that this algorithm provides formal stability and feasibility properties with respect to the complex system. We evaluate Hypotheses~\ref{hyp:ac:simple_frequency} and \ref{hyp:ac:controller_performance}  by applying this algorithm to a quadrupedal system and conducting simulation experiments on common environments which legged platforms may encounter. These results show that the majority of the behaviors in these environments admit feasible simplifications and that the resulting improvement in computational efficiency enables a 55\% increase in top speed compared to a system without these reductions. We also show that retaining knowledge of the dynamics and constraints in the complex system expands the range of executable tasks compared to a system which uniformly applies these reductions. In the particular case of legged leaping, this approach enables receding horizon execution of a body-length leaping behavior while considering joint constraints, which expands the leaping ability over prior methods which do not consider these constraints.

The organization of this paper is as follows: Section~\ref{sec:ac:related_work} covers related work in greater detail, and Section~\ref{sec:ac:preliminaries} formulates the problem and introduces notation. Section~\ref{sec:ac:ac_mpc} provides an overview of the algorithm, whose formal properties are explored in Section~\ref{sec:ac:theoretical_analysis}. This algorithm is applied to legged locomotion models in Section~\ref{sec:ac:application_to_legged_systems}, and performance for these systems is quantified in Section~\ref{sec:ac:experiments}. Section~\ref{sec:ac:conclusion} concludes by discussing limitations and future extensions enabled by this work.

\section{Related Work} \label{sec:ac:related_work}

Dynamic motion planning and control for systems with intermittent contact is inherently difficult. However, enabling agile autonomy for such systems is critical for real-world applications that require the robot to touch the world. In particular, legged robots have significant potential for real-world deployment. But, they often suffer from difficulties arising from hybrid dynamics, high state dimensionality, and non-convex constraints on their kinematics and dynamics. Such problems render even basic motion planning problems PSPACE-complete \cite{reif1979complexity}, and as a result existing algorithms to solve them globally for dynamic legged systems cannot operate in real-time \cite{hauser2008motion,mombaur2009using,posa2014direct,dai2014whole}.

In general, the most common approach used to address the planning and control challenges of legged locomotion has been through leveraging some kind of model reduction, wherein the problem is solved with a reduced subset of the state and dynamics. The solution of this simpler problem is then passed to another system with a more complex model which fills in the additional details. The hierarchical nature of this approach enables optimization of each algorithm based on salient features of the problem such as the timescales of the dynamics or rates of sensor information. Examples of this approach include efficient global planners focused on exploration \cite{bartoszyk2017terrain,fernbach2017kinodynamic,tonneau2018efficient,norby2020fast} to local planners that plan contact phases over a few gait cycles \cite{winkler2018gait} to whole-body controllers with full-order representations over very small horizons \cite{sentis2006whole,kuindersma2016optimization,neunert2018whole}. While these hierarchies have been shown to be capable of producing dynamic locomotion, they face a fundamental problem in that dynamics or constraints in the omitted space can render the desired motion sub-optimal or even infeasible. Conservative assumptions in the simple model may fail to produce solutions in difficult environments, and optimistic assumptions may lead to infeasible behaviors that the more myopic complex system may not recognize until too late.

Several methods have been explored to resolve the interface between these layers. One straightforward approach is to use the complex model to assess the simple solution, either by providing a boolean feasible/infeasible classification, computing some value function, or indicating new search directions \cite{zucker2011optimization,plaku2010motion,mcconachie2020learning, carpentier2017learning}. This can be efficient since checking a solution in a higher-order space is easier than searching for one, although it does not allow the simple model to directly reason about the dynamics or constraints in the complex space.

Another promising way to resolve this is to employ an adaptive planning framework to reason over different models based on the task and constraints. One flavor of this approach plans over a mixture of models of varying degrees of fidelity with some pre-defined rules that guide when to use each and how to transition between them \cite{kapadia2013multi,brandao2019multi,norby2020fast}. Other approaches leverage an adaptive composition of models with different safety bounds to trade between performance and robustness or expand the problem dimensionality as needed to find collision-free paths.  \cite{gochev2011path,zhang2012combining,styler2017plan, fridovich2018planning, dornbush2018single}. Our approach is more similar to the latter in terms of the underlying adaptation mechanism, but differs in that we derive the exact conditions under which transitioning between models of varying fidelity can be done without sacrificing stability and feasibility. Furthermore, we demonstrate how such a mechanism can be applied to a receding horizon framework for online planning and control. 

Similar planning and control problems can also be solved online using receding horizon methods. In particular, model predictive control (MPC) is an iterative receding-horizon optimization framework that has been commonly used to solve constrained optimal control problems \cite{allgower2012nonlinear}. In the context of dynamic legged locomotion, MPC often computes feasible body and/or joint trajectories in order to track a higher level reference plan while respecting dynamic, state, and control constraints. Works such as \cite{di2018dynamic, laurenzi2018quadrupedal, shi2019model} compute the desired ground reaction forces using a single rigid body model, which are realized as joint torques using a whole-body controller. Although computationally efficient, this approach typically uses a single simplified model with some simple heuristics to approximate the motion of the legs.
In comparison, this work blends a single rigid body dynamics model with a full kinematics model, allowing us to more exactly capture leg motion via joint limits, collision constraints, and motor models. More generally, our approach is not tied to a particular choice of models, but rather is rather a paradigm for selecting between related models to improve performance.

While simplified model MPC approaches assume a representative reduced-order model (template) exists that fairly approximates the full-order model (anchor), they often do not examine the validity of this approximation. Some approaches have studied safe controller synthesis for the template model while ensuring constraint satisfaction of the anchor model through bounding the differences of the two models using reachability analysis \cite{liu2020leveraging}, approximate simulation properties \cite{kurtz2019formal}, or learning any unmodeled differences \cite{pandala2022robust}. Another approach uses pre-defined ratios to mix the complex model for the immediate future with the simple model for longer horizon planning, which results in more robust locomotion compared to fixed-complexity formulations \cite{li2021model}. Our approach takes inspiration from these in that it defines the exact conditions under which the higher order model can be simplified without violation of formal guarantees, but critically differs in that we adaptively mix models of varying fidelity within any planning horizon based on local feasibility. This allows us to leverage the fidelity of the complex model when necessary while taking advantage of the computation speed enabled by the simple model for planning longer horizon motions.

Many of these hierarchical planning and control approaches have been shown to effectively perform agile and dynamic motions in simulation and hardware. However, they often struggle in generating and executing motions that require the robot to operate at its kinematic and dynamic limits. These motions are critical in enabling behaviours like stepping and leaping over gaps, stairs, and non-traversable obstacles, which are essential in navigating unstructured terrains. Previous approaches lack longer horizon planning, or do not consider joint kinematics or constraints \cite{johnson2013legged,nguyen2019optimized,kolvenbach2019towards,chignoli2022rapid,ponton2021efficient}. Our approach allows the robot to plan for these agile behaviours in a receding-horizon manner while reasoning about constraints over a significantly long horizon, which is shown to expand leaping capabilities in simulation.

Recently, some MPC approaches have been extended to generate dynamically feasible whole-body motion plans in real-time \cite{grandia2022perceptive,meduri2023biconmp,grandia2019feedback,sleiman2021unified,mastalli2020crocoddyl,mastalli2022agile,dantec2021whole}. Our approach is not meant to be a replacement for this class of solutions, but aims to complement them by providing systematic model reduction that reduces the computational load without compromising feasibility. Prior work applying mixed-complexity schedules to the structure-exploiting DDP-based methods common in these works suggests that the concepts are compatible \cite{li2021model}. This reduction in computational load can be beneficial in two ways. First, it can allow other modules (e.g. perception, navigation, autonomy, etc.) to use more resources as needed without exceeding the total computational requirements of the system and lowers the total power draw of onboard computation. Second, it could enable MPC to deploy longer prediction horizons and therefore better reason about future constraints given a fixed computational budget.

\section{Preliminaries} \label{sec:ac:preliminaries}

\begin{table}[t]
    \renewcommand{\arraystretch}{1.2}
    \caption{Key symbols used throughout this paper with section or equation number of the introduction marked}
    \centering
    \begin{tabular}{l l}
    \hline \hline
        \textbf{Core} & \textbf{Sections~\ref{sec:ac:preliminaries} -- \ref{sec:ac:ac_mpc} }\\
    \hline
        $\alpha_{L,I}, \alpha_{U,I}$ & Lower and upper bounds for intermediate cost (\ref{sec:ac:preliminaries}) \\
        $\alpha_{L,T}, \alpha_{U,T}$ & Lower and upper bounds for terminal cost (\ref{sec:ac:preliminaries}) \\
        $\text{ACOCP}$ & Adaptive complexity optimal control problem \eqref{eq:ac:adaptive_ocp} \\
        $f$ & System dynamics \eqref{eq:ac:original_system} \\
        $f_{\textnormal{mpc}}, f_{\textnormal{acmpc}}$ & Closed-loop dynamics under $u_{\textnormal{mpc}}$, $u_{\textnormal{acmpc}}$ \eqref{eq:ac:closed_loop_system}, \eqref{eq:ac:adaptive_system} \\
        $\mathcal{K}, \mathcal{K}_{\infty}$ & Strictly increasing functions (\ref{sec:ac:preliminaries}) \\
        $N$ & Horizon length \eqref{eq:ac:closed_loop_system} \\
        $\text{OCP}$ & Optimal control problem \eqref{eq:ac:ocp} \\
        $\psi_{x}, \psi_{u}, \psi$ & Reductions for states, controls, pairs (\ref{sec:ac:system_definitions}) \\ 
        $\psi^{\dagger}_{x}, \psi^{\dagger}_{u}, \psi^{\dagger}$ & Lifts for states, controls, pairs (\ref{sec:ac:system_definitions}) \\ 
        $S^a, S^f, S$ & Simplicity sets -- adaptive, fixed, total (\ref{sec:ac:algorithm_overview}, \ref{sec:ac:algorithm_overview:S_convergence}) \\
        $\mathcal{S}, \mathcal{S}_a$ & The set of all, admissible simplicity sets \eqref{eq:ac:admissibility_conditions} \\       
        $u_{\textnormal{mpc}}, u_{\textnormal{acmpc}}$ & Fixed, adaptive complexity MPC policies \eqref{eq:ac:control_law}, \eqref{eq:ac:adaptive_control_law} \\
        $u_{T}$ & Terminal set feedback policy (\ref{sec:ac:preliminaries}, \ref{sec:ac:feasibility}) \\
        $V_{N}$ & Cost function over horizon $N$ \eqref{eq:ac:ocp_cost} \\
        $V_{I}, V_{T}$ & Intermediate and terminal costs \eqref{eq:ac:ocp_cost} \\
        $x, u, z$ & State, control input, state/control pair \eqref{eq:ac:original_system}  \\
        $\mathcal{X}, \mathcal{U}, \mathcal{Z}$ & Manifolds for states, controls, pairs (\ref{sec:ac:preliminaries}) \\
        $\mathcal{X}_{f}, \mathcal{U}_{f}, \mathcal{Z}_{f}$ & Feasible sets of states, controls, pairs (\ref{sec:ac:preliminaries}) \\
        $\mathcal{X}_N, \mathcal{X}_{T}$ & Basin of attraction under $N$, terminal set (\ref{sec:ac:preliminaries}) \\
        $\mathbf{x}, \mathbf{u}, \mathbf{z}$ & Trajectories of states, controls, pairs (\ref{sec:ac:adaptive_system_definition})  \\
        $\mathbf{X}, \mathbf{U}, \mathbf{Z}$ & Spaces of trajectories of states, controls, pairs (\ref{sec:ac:adaptive_system_definition})  \\
        $(\cdot)^{*}$ & Quantities determined by solving an OCP (\ref{sec:ac:preliminaries}) \\
        $(\cdot)^{c,s,a}$ & Complex, simple, or adaptive quantities (\ref{sec:ac:ac_mpc}) \\
        $(\cdot)^{l}$ & Shorthand for lifted variables (i.e. $\psi^{\dagger}(\cdot)$) \eqref{eq:ac:adaptive_state_lift}\\
        $(\cdot)_{k,i}$ & Time w.r.t.~dynamical system, prediction horizon \eqref{eq:ac:original_system} \\
    \hline \hline
    \textbf{Analysis} & \textbf{Section~\ref{sec:ac:theoretical_analysis} }\\
    \hline
        $\phi_{f}$ & Rollout function for system $f$ (\ref{sec:ac:constraint_satisfaction}) \\
        $x_{T}$ & Terminal state of OCP solution (\ref{sec:ac:feasibility}) \\
        $\Tilde{\mathbf{x}}, \Tilde{\mathbf{u}}$ & State and control trajectories at successor state (\ref{sec:ac:feasibility}) \\
    \hline \hline
    \textbf{Applications} & \textbf{Section~\ref{sec:ac:application_to_legged_systems} }\\
    \hline
        $\omega, \hat{\omega}$ & Angular velocity vector, matrix \eqref{eq:ac:complex_system_state} \\
        $\theta, \tau$ & Joint positions, torques \eqref{eq:ac:slack_variables} \\
        $D$ & Contact selection matrix \eqref{eq:ac:legged_complex_constraints} \\
        $FK$ & Forward kinematics function \eqref{eq:ac:slack_variables} \\
        $\mathcal{FC}$ & Friction cone feasible set \eqref{eq:ac:legged_complex_constraints} \\
        $h$ & Terrain clearance function \eqref{eq:ac:legged_complex_constraints} \\
        $J$ & Leg Jacobian \eqref{eq:ac:slack_variables} \\
        $J_\omega$ & Map from angular velocity to Euler rates \eqref{eq:ac:legged_complex_dynamics} \\
        $M$ & Inertia matrix in body frame \eqref{eq:ac:ang_acc_map} \\
        $n, n_j$ & Number of limbs, joints per limb (\ref{sec:ac:application_to_legged_systems}) \\
        $Q_{I}, Q_{T}$ & State cost matrices -- intermediate and terminal \eqref{eq:ac:complex_system_state} \\
        $q_\textnormal{lin}, q_\textnormal{ang}, q_\textnormal{foot}$ & Body position, body orientation, foot position \eqref{eq:ac:complex_system_state} \\
        $R$ & Control cost matrix \eqref{eq:ac:complex_system_state} \\
        $R(q_\textnormal{ang})$ & Rotation matrix between body and world frames \eqref{eq:ac:ang_acc_map} \\
        $u_\textnormal{body}, u_\textnormal{foot}$ & Forces applied to body, feet \eqref{eq:ac:complex_system_state} \\
        $W$ & Shorthand for net angular acceleration map \eqref{eq:ac:ang_acc_map} \\
        $\bar{\mathbf{x}}, \bar{\mathbf{u}}$ & State and control references \eqref{eq:ac:legged_complex_cost}\\
        $(\cdot)_{\textnormal{min},\textnormal{max}}$ & Bounds for a given quantity \eqref{eq:ac:legged_complex_constraints} \\
    \hline \hline
    \end{tabular}
    \label{tab:ac:notation}
\end{table}

To clarify the operation of adaptive complexity MPC and its properties, we define a formulation for model predictive control and the closed-loop system it yields following \cite{rawlings2017model} and using notation described in Table~\ref{tab:ac:notation}. Consider a nonlinear, discrete-time, dynamical system which evolves on state manifold $\mathcal{X}$ under controls $\mathcal{U}$ and with dynamics $f$,
\begin{align} \label{eq:ac:original_system}
    x_{k+1} &= f(x_{k}, u_{k})
\end{align}
where $x_{k} \in \mathcal{X}, u_{k} \in \mathcal{U}$ are the current state and control at time $k  \in \mathbb{Z}$, and $x_{k+1} \in \mathcal{X}$ is the successor state. Let $\mathcal{X}_{f} \subseteq \mathcal{X}$ be the set of all feasible states and $\mathcal{U}_f \subseteq \mathcal{U}$ the set of feasible controls. For the sake of notational simplicity, let $z_{k} \coloneqq (x_{k},u_{k})$ define a state-control pair such that $x_{k+1} = f(z_{k})$. Let the space of all such pairs be $\mathcal{Z}=\mathcal{X} \times \mathcal{U}$ and the space of all feasible pairs be $\mathcal{Z}_{f} = \mathcal{X}_{f} \times \mathcal{U}_{f}$.

Let the set $\mathcal{X}_N$ denote the basin of attraction of the controller parameterized by $N$. We list a few standard assumptions on the system in \eqref{eq:ac:original_system} and the set $\mathcal{X}_N$:

\begin{assumption} \label{as:ac:system_conditions}
(A) $f(0,0) = 0$ (the origin is an equilibrium point). \\
(B) $\exists u \in \mathcal{U}_{f} \mid f(x,u) \in \mathcal{X}_N~\forall x \in \mathcal{X}_N$ ($\mathcal{X}_N$ is control positive invariant).\\
(C) $\mathcal{X}_{f}$ and $\mathcal{U}_{f}$ are compact and contain the origin in their interiors.
\end{assumption}

To formalize model predictive control, first define a predicted control trajectory with horizon $N$ and starting at time $k$ as $\mathbf{u}_{k} = [u_{0 \mid k}, u_{1 \mid k}, \dots, u_{i \mid k}, \dots, u_{N-1 \mid k} ]$ where $i$ denotes the index within the horizon. In many cases the subscript $k$ can be inferred from context, allowing the simpler form $\mathbf{u} = [u_{0 }, u_{1}, \dots, u_{i}, \dots, u_{N-1} ]$ . Likewise let $\mathbf{x} = [x_0, x_1, \dots, x_{i}, \dots, x_{N}  ]$ denote the predicted state trajectory, and their pairing as $\mathbf{z} = [z_0, z_1, \dots, z_{i}, \dots, z_{N-1} ]$ such that $z_i = (x_i, u_i )$. Let the spaces of all state, control, and paired trajectories be denoted as $\mathbf{X}$, $\mathbf{U}$, and $\mathbf{Z}$ respectively. The optimal control problem (OCP) solved in the standard NMPC formulation with terminal cost and region is thus $\text{OCP}_{N}(x_{k}) : \mathcal{X}_{f} \rightarrow \mathbf{Z}$,
\begin{subequations}
\label{eq:ac:ocp}
\begin{align}
    \mathbf{z}^*_{k} &= \text{OCP}_{N}(x_{k}) \nonumber \\
    &= \argmin_{\mathbf{z}_{k}} \left(V_{T}(x_{N}) + \sum_{i = 0}^{N - 1} V_{I}(z_{i})\right) \label{eq:ac:ocp_cost} \\
    \text{s.t.} \quad x_{i + 1} &= f(z_i) ~ \forall i = 0, \dots, N - 1 \label{eq:ac:ocp_dynamics} \\
    z_i &\in \mathcal{Z}_{f} ~ \forall i = 0, \dots, N - 1 \label{eq:ac:ocp_constraint} \\
    x_0 &= x_k \label{eq:ac:ocp_initial_state}\\
    x_N &\in \mathcal{X}_{T} \label{eq:ac:ocp_terminal_state}
\end{align}
\end{subequations}
where $\mathbf{z}^*_{k}$ contains the optimal state and control trajectories $\mathbf{x}^*_{k}$ and $\mathbf{u}^*_{k}$, $V_{I}(\cdot)$ and $ V_{T}(\cdot)$ are the intermediate and terminal costs, $\mathcal{Z}_{f}$ is the set of feasible state-control pairs, and $\mathcal{X}_{T}$ is a terminal set. In most contexts the parameterization by $N$ is implied and thus the problem will be denoted as simply $\text{OCP}(x_k)$. The control law defined by NMPC is determined by solving (\ref{eq:ac:ocp}) at each time $k$ and applying the first control, i.e.\ $u_k = u^*_{0 \mid k}$. This defines the state feedback policy $u_{\textnormal{mpc}}(x_{k})$ and resulting closed loop system $f_{\textnormal{mpc}}(x_{k})$,
\begin{align}
    u_{\textnormal{mpc}}(x_{k}) &\coloneqq u^*_{0 \mid k} \label{eq:ac:control_law} \\
    f_{\textnormal{mpc}}(x_{k}) &\coloneqq f(x_k,u_{\textnormal{mpc}}(x_{k})) \label{eq:ac:closed_loop_system}
\end{align}
Standard stability proofs for NMPC formulations typically rely on showing that the closed-loop system admits a Lyapunov function which is upper and lower bounded by strictly increasing functions of state and is strictly decreasing in time (for asymptotic stability) or bounded in magnitude by the control input (for Input-to-State Stability). We borrow the standard definitions of strictly increasing functions $\mathcal{K}$ and $\mathcal{K}_\infty$ as well as Lyapunov functions and asymptotic stability from \cite{limon2009input}. 

For the closed-loop system in (\ref{eq:ac:closed_loop_system}) to yield provable stability, assume the following properties on $V_{I}(\cdot), V_{T}(\cdot),$ and $\mathcal{X}_{T}$,

\begin{assumption} \label{as:ac:original_mpc_conditions}
(A) Intermediate and terminal cost are upper and lower bounded by $\mathcal{K}_\infty$ functions, i.e.~$\exists \alpha_{L,I}, \alpha_{U,I}, \alpha_{L,T}, \alpha_{U,T} \in \mathcal{K}_\infty$ such that:
\begin{align*}
    \alpha_{L,I}(|x|) \leq & V_{I}(x,u) \leq \alpha_{U,I}(|x|) && \forall x \in \mathcal{X}_{f}, u \in \mathcal{U}_{f} \\
    \alpha_{L,T}(|x|) \leq & V_{T}(x) \leq \alpha_{U,T}(|x|) && \forall x \in \mathcal{X}_{T}.
\end{align*}
(B) A solution to (\ref{eq:ac:ocp}) exists for all $x_k \in \mathcal{X}_N$. \\
(C) The functions $V_{I}(x,u), V_{T}(x),$ and $f$ are all twice differentiable with respect to $x$ and $u$. \\
(D) $\exists \alpha_{V_{T}} \in \mathcal{K_\infty}, u_{T}(x) : \mathcal{X}_{T} \rightarrow \mathcal{U}_f$ such that
\begin{align*}
    f(x,u_T(x)) &\in \mathcal{X}_T \\
    V_{T}(f(x,u_{T}(x))) - V_{T}(x) &\leq -\alpha_{V_{T}}(|x|)
\end{align*}
$\forall x \in \mathcal{X}_{T}$ (terminal control law decreases cost). \\
(E) The control law defined in (\ref{eq:ac:control_law}) satisfies Assumption \ref{as:ac:system_conditions}A.
\end{assumption}

It is a known result that under these conditions, the origin of the system defined in \eqref{eq:ac:closed_loop_system} is asymptotically stable for all $x_0 \in \mathcal{X}_N$ \cite{allgower2012nonlinear,limon2009input}. These results can be extended to show the stability of other stationary points through a coordinate transformation or the stability of a time-varying reference by transforming to an equivalent time-varying system \cite{rawlings2017model}.

\section{Adaptive Complexity MPC} \label{sec:ac:ac_mpc}

\subsection{Algorithm Overview}\label{sec:ac:algorithm_overview}
The core idea of adaptive complexity is to leverage models of differing complexity to simplify the model in regions where feasibility is assured and only increase complexity as needed to guarantee feasibility and stability properties. Our approach is to recursively define a simplicity set $S_k$ at each time $k$ whose elements are times $i$ in the prediction horizon where state-control pairs can be simplified. This simplicity set is then used to define a smaller OCP which can be solved more efficiently and from which a standard MPC control policy can be extracted. The OCP solution is then used to update the next simplicity set, $S_{k+1}$.

Conditions on this set $S_k$ can be directly drawn from the literature on templates and anchors \cite{full1999templates} --  elements of $S_k$ represent times corresponding to state-control pairs that follow a feasible, attracting, invariant submanifold within the complex (``anchor'') space, i.e.\ follow the ``template'' dynamics. In other words, membership in $S_k$ implies that the system remains on the manifold after applying the complex dynamics and without violating constraints, as shown visually in Fig.~\ref{fig:ac:S_illustration}. This approach relaxes the typically restrictive requirement that template and anchor relations are \emph{always} satisfied for two dynamical systems, and replaces it with a method to identify \emph{when} they can be satisfied at a given time. This allows the algorithm to account for differences between the two systems, while improving performance proportionally to the extent to which they align. Note that while this work assumes one template per anchor, it could be further extended to include multiple templates describing different reductions of the anchor dynamics.

\begin{figure}[t]
  \centering
  \includegraphics[width=1.0\linewidth]{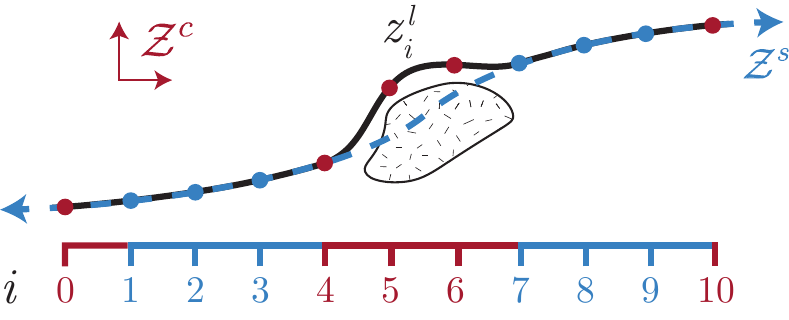}
  \caption[Simplicity set illustration]{Elements in the horizon $i$ are in the set $S_k$ if they are feasible and stay on the manifold $\mathcal{Z}^s$ (illustrated as a blue dashed 1D curve). The adaptive system allows $z^l_{i}$ to leave this submanifold while remaining in the manifold $\mathcal{Z}^c$ (the surrounding white 2D space). Elements in the set $S_k$ are denoted in blue, elements not in this set are denoted in red. In this example, $S_k = \{1, 2, 3, 7, 8, 9\}$.}
  \label{fig:ac:S_illustration}
\end{figure}

\subsection{Complex and Simple System Definitions} \label{sec:ac:system_definitions}
Let the original system defined in (\ref{eq:ac:original_system}) be ``complex'' which is clarified by the superscript $(\cdot)^c$. Let the ``simple'' system be denoted with the superscript $(\cdot)^s$ such that the state $x^s$ lies on the manifold $\mathcal{X}^s$, where $\dim \mathcal{X}^s \leq\dim \mathcal{X}^c$. These states are related by the state reduction $\psi_x : \mathcal{X}^c \rightarrow \mathcal{X}^s$ defined as $x^s = \psi_x(x^c)$. Examples of these systems (which are further explored in Section~\ref{sec:ac:application_to_legged_systems}) could be the full configuration space of a legged robot or its centroidal momenta as $\mathcal{X}^c$, and some related system such as the single rigid body or inverted pendulum models as $\mathcal{X}^s$ \cite{wensing2022optimization}, where $\psi_{x}$ is the projection -- often via a selection matrix -- from the complex space to the simple one. Note that the two systems could also share the same state space but differ in feasible sets, for example if one system has a more complex representation of constraints (motivating our choice of the terms ``complex/simple'' over ``full/reduced order'').

Let each system have controls $u^c$ and $u^s$ defined over manifolds $\mathcal{U}^c$ and $\mathcal{U}^s$ which are related by the control reduction $\psi_u : \mathcal{U}^c \times \mathcal{X}^c \rightarrow \mathcal{U}^s$ defined as $u^s = \psi_u(u^c, x^c)$. Define the state-control pairs as $z^c = (x^c, u^c)$ and $z^s = (x^s, u^s)$ which lie on manifolds $\mathcal{Z}^c \coloneqq \mathcal{X}^c \times \mathcal{U}^c$ and $\mathcal{Z}^s \coloneqq \mathcal{X}^s \times \mathcal{U}^s$. This permits the definition of the reduction $\psi: \mathcal{Z}^c \rightarrow \mathcal{Z}^s$ defined as $z^s = \psi(z^c) = (\psi_x(x^c), \psi_u(u^c, x^c) )$. Examples of varying fidelity control representations include actuator inputs for $\mathcal{U}^c$ and ground reaction forces, impulses, or impedances for $\mathcal{U}^s$, where $\psi_u$ again projects from the complex input to an equivalent simple one.

While the many-to-one reduction $\psi$ defines how the complex and simple systems relate and is generally determined by the choices of these two systems (e.g.~a projection), more flexibility is allowed in determining how to lift from the simple to the complex space. Define the mapping $\psi^\dagger$ such that $\psi \circ \psi^\dagger = I$, where $I$ is the identity map. Let $\psi^\dagger_x$ and $\psi^\dagger_u$ give the outputs of $\psi^\dagger$ corresponding to state and control variables, respectively. Examples of $\psi^{\dagger}_{x}$ for legged systems include inverse kinematics algorithms which map a body state and reference foot trajectories to joint kinematics, while examples of $\psi^{\dagger}_{u}$ include controllers which map inputs from task space to actuator space.

The dynamics and constraints for the complex system have already been defined in Section~\ref{sec:ac:preliminaries}. Define the dynamics $f^s$ and feasible set of the simple system $\mathcal{Z}^s_{f}$,
\begin{align}
    x^{s}_{k+1} &= f^s(z^s_{k}) \label{eq:ac:simple_dynamics} \\ 
    z^s_{k} &\in \mathcal{Z}^s_{f} \label{eq:ac:simple_constraints}.
\end{align}

Let the basin of attraction of the complex system be denoted $\mathcal{X}_{N^c}$ under horizon length $N^c$.

\subsection{Adaptive System Definition} \label{sec:ac:adaptive_system_definition}

We seek an adaptive control law which leverages the simple system when the system can feasibly remain on the manifold $\mathcal{Z}^s_{f}$ and the complex system when it cannot. Define another set of state and control variables $x^a_{i}$, $u^a_{i}$, and $z^a_{i}$ which represent an adaptive system used to solve the OCP. Note that this `system' is only defined for the OCP and thus is always indexed by prediction horizon time $i$ and not the dynamical system time $k$.

We will use the simplicity set $S_k$ to assign these quantities at each time $i$ in the horizon to a particular system. More formally, denote all combinations of horizon elements as $\mathcal{S} \coloneqq \left\{ S \mid S \subseteq \{0, 1, \dots, i, \dots, N\} \right\}$, and let $S_{k} \in \mathcal{S}$. This simplicity set then assigns a domain to each element of the horizon,
\begin{align}
    z^a_{i} \in& \begin{cases}
        \mathcal{Z}^c, & i \notin S_k \\
        \mathcal{Z}^s, & i \in S_k 
    \end{cases} \label{eq:ac:adaptive_state}
\end{align}
Often the adaptive states and controls at time $i$ must be expressed in the complex system whether or not $i \in S_{k}$. These instances can be handled by  applying the lifting operator $\psi^\dagger$ at time $i$ if $i \in S_{k}$. To simplify subsequent notation and clarify the difference between a quantity in the complex system (i.e.~$z^c_{i}$) and a quantity in the adaptive system which may have been mapped to the complex system (i.e.~$\psi^{\dagger}(z^a_{i})$), we will use the shorthand $z^l_{i} \in \mathcal{Z}^c$ to denote the mapped quantity, which is defined by,
\begin{align}
    z^l_{i} &= \begin{cases}
        z^a_{i}, & i \notin S_k \\
        \psi^\dagger(z^a_{i}), & i \in S_k 
    \end{cases} \label{eq:ac:adaptive_state_lift}
\end{align}

Next we define the dynamics of the adaptive system which are used to solve the OCP,
\begin{align}
    x^{a}_{i+1} &= f^a(z^a_{i}) \coloneqq \begin{cases}
        f^c(z^a_{i}) & i, i+1 \notin S_k \\
        \psi_{x} \circ f^c(z^a_{i}) & i \notin S_k,\, i+1 \in S_k \\
        f^c \circ \psi^\dagger(z^a_{i}) & i \in S_k,\, i+1 \notin S_k \\
        f^s(z^a_{i}) & i, i+1 \in S_k
    \end{cases} \label{eq:ac:adaptive_system_dynamics}
\end{align}
where $x^{a}_{i+1}$ is the successor state in the adaptive system. The OCP for the adaptive system uses these dynamics to construct feasible motions over a prediction horizon $N^a \geq N^c$.

Denote a predicted adaptive control sequence over horizon $N^a$ as $\mathbf{u}^a = [u^a_{0}, u^a_{1}, \dots, u^a_{N^a - 1} ]$, a predicted adaptive state sequence as $\mathbf{x}^a = [x^a_{0}, x^a_{1}, \dots, x^a_{N^a} ]$, a predicted adaptive state-control pair sequence as $\mathbf{z}^a = [z^a_{0}, z^a_{1}, \dots, z^a_{N^a - 1} ]$, and their respective spaces as $\mathbf{X}^a, \mathbf{U}^a,$ and $\mathbf{Z}^a$. Let the lifted form of these trajectories be denoted as $\mathbf{z}^l$. We define the constraints $\mathcal{Z}^a_{f,i}$ in the adaptive system,
\begin{align} \label{eq:ac:adaptive_feasible_set}
     \mathcal{Z}^a_{f,i} &\coloneqq
    \begin{cases}
        \mathcal{Z}^c_{f} & i \notin S_k \\
        \mathcal{Z}^s_{f} & i \in S_k 
    \end{cases}.
\end{align}

\subsection{Adaptive Complexity Optimal Control Problem} \label{sec:ac:acocp}

With these definitions in place we now state the adaptive complexity optimal control problem $\text{ACOCP}_{N^a}(x^c_{k}, S_k) : \mathcal{X}^c_f \times \mathcal{S} \rightarrow \mathbf{Z}^a$ which is solved to determine the control input $u^c_k$,
\begin{subequations}
\label{eq:ac:adaptive_ocp}
\begin{align}
    \mathbf{z}^{*a}_{k} &= \text{ACOCP}_{N^a}(x^c_{k}, S_k) \nonumber \\
    &= \argmin_{\mathbf{z}^a_{k}} \left(V_{T}^c(x^a_{N}) + \sum_{i = 0}^{N - 1} V_{I}^a(z^a_{i})\right) \label{eq:ac:adaptive_ocp:cost}\\
    \text{s.t.} \quad x^a_{i + 1} &= f^a(z^a_{i}) ~ \forall i = 0, \dots, N^a - 1 \label{eq:ac:adaptive_ocp:dynamics} \\
    z^a_{i} &\in \mathcal{Z}^a_{f,i} ~ \forall i = 0, \dots, N^a - 1 \label{eq:ac:adaptive_ocp:constraints}\\
    x^a_0 &= x^c_k \label{eq:ac:adaptive_ocp:initial_condition} \\
    x^a_{N^a} &\in \mathcal{X}^c_{T} \label{eq:ac:adaptive_ocp:terminal_constraint}
\end{align}
\end{subequations}
where the adaptive cost function is equal to the complex system cost evaluated on the lifted state-control pair,
\begin{align} \label{eq:ac:adaptive_cost}
V_{I}^a(z^a_{i}) &= \begin{cases}
    V_{I}^c(z^a_{i}) & i \notin S_k \\
    V_{I}^c(\psi^\dagger(z^a_{i})) & i \in S_k 
\end{cases}
\end{align}
In many contexts the ACOCP parameterization by $N^a$ and corresponding simplicity set $S_k$ can be inferred, permitting the shorthand $\text{ACOCP}(x^c_k)$. Let $\mathcal{X}_{N^a}$ be the set of states for which the solution to (\ref{eq:ac:adaptive_ocp}) exists. Let the optimal value function found in (\ref{eq:ac:adaptive_ocp}) correspond to the control trajectory $\mathbf{u}^{*a} = [ u^{*a}_0, u^{*a}_1,\dots,u^{*a}_{N^a-1}]$ and corresponding lifted trajectory $\mathbf{u}^{*l}$. The control law defined by ACMPC is determined by solving (\ref{eq:ac:adaptive_ocp}) at each time $k$ and applying the first control, i.e.\ $u^c_{k} = u^{*l}_{0 \mid k}$. Define the state feedback policy $u^c_{\textnormal{acmpc}}(x^c_{k})$ and resulting closed loop system $f^c_{\textnormal{acmpc}}(x^c_{k})$,
\begin{align}
    u^c_{\textnormal{acmpc}}(x^c_{k}) &\coloneqq u^{*l}_{0 \mid k} \label{eq:ac:adaptive_control_law} \\
    f^c_{\textnormal{acmpc}}(x^c_{k}) &\coloneqq f^c(x^c_{k},u^c_{\textnormal{acmpc}}(x^c_{k})) \label{eq:ac:adaptive_system}
\end{align}
Note that each term in the ACOCP defined in (\ref{eq:ac:adaptive_ocp}) converges to its complex counterpart if $S_k = \{\}$, so the behavior of the original MPC closed-loop system defined in \eqref{eq:ac:original_system} can always be retained. However, we seek to find the minimal complexity required to still ensure stability of the closed-loop system.

\subsection{Conditions on the Complexity Set} \label{sec:ac:algorithm_overview:S_convergence}

The set $S_k$ clearly cannot be arbitrary in order to maintain stability, as ignoring some uncontrolled component of the system dynamics could easily cause undesired behavior. To avoid this we define a notion of the admissibility of $S_k$ which is needed to show that the state and control trajectories in the adaptive space match their realizations in the complex space.
\begin{definition} \label{def:ac:admissibility} (\textbf{Admissibility} of $S_{k}$)
The simplicity set $S_{k}$ defined at time $k$ is admissible if the following conditions hold for all $i \in S_{k}$,
\begin{subequations} \label{eq:ac:admissibility_conditions}
\begin{align}
    i &\neq 0, N^a \label{eq:ac:admissibility_terminal_states} \\
    \psi^\dagger \circ \psi (z^l_{i}) &\in \mathcal{Z}^c_{f} \label{eq:ac:admissibility_feasible} \\ 
    \psi^\dagger_x \circ f^s  \circ \psi(z^l_{i}) &= f^c(z^l_{i}) \label{eq:ac:admissibility_anchor}
    \\
    \psi^\dagger_x \circ \psi \circ f^c  (z^l_{i-1}) &= f^c(z^l_{i-1}). \label{eq:ac:admissibility_enter_manifold}
\end{align}
\end{subequations}
\end{definition}
The condition \eqref{eq:ac:admissibility_terminal_states} requires that the first and last state be complex, which ensures the predicted trajectory matches the actual dynamics and that the system is able to reach the (possibly complex) terminal state. The conditions \eqref{eq:ac:admissibility_feasible} require that the state and control on the manifold at $i$ are feasible in the complex space, \eqref{eq:ac:admissibility_enter_manifold} requires the dynamics from the prior state and control lead to the manifold, and \eqref{eq:ac:admissibility_anchor} requires that the dynamics applied to the current state yields a successor state on the manifold, i.e.\ the complex space ``exactly anchors'' the simple space (in the sense of \cite[Appendix A]{libby2016comparative}). These concepts are illustrated in Fig.~\ref{fig:ac:proof_illustration}. To codify the admissibility of a simplicity set $S_{k}$, define the set of all admissible sets as $\mathcal{S}_a \subset \mathcal{S}$. Membership in $\mathcal{S}_a$ can be determined by computing the lifted prediction trajectory $\mathbf{z}^l$ and checking the conditions in \eqref{eq:ac:admissibility_conditions} for each $i \in S_{k}$. Also note that removing any element from an admissible simplicity set does not invalidate the admissibility property.

In order to show the stability of adaptive complexity MPC, we require that $S_k$ be admissible for all $k$.
\begin{assumption} \label{as:ac:S_admissible}
$S_k \in \mathcal{S}_a ~ \forall k \geq 0$. \\
\end{assumption}

The remaining challenge is ensuring the admissibility of $S_k$, which is inductively handled by Algorithm \ref{alg:ac:acmpc} and for which recursive admissibility is proven in Sec.~\ref{sec:ac:recursive_admissibility},~Lemma \ref{th:ac:recursive_admissibility}. There are multiple methods to ensure the initial $S_0$ is admissible, such as initializing with $S_0 = \{\}$ which is guaranteed to be admissible, or iteratively solving (\ref{eq:ac:adaptive_ocp}) and updating $S_0$ until an admissible set is found. After an initial set is found, successor sets can be found by combining two simplicity sets, $S^a_k$ and $S^f$. The set $S^a_k$ adaptively identifies regions that can be safely simplified, while $S^f$ requires that certain elements always remain complex for stability and admissibility, in particular the first and last element. After a solve, $S^a_k$ is updated by checking \eqref{eq:ac:admissibility_conditions} to determine which states can be simplified. Its elements are then shifted in time, i.e.\ $S^a_{k+1} \gets  \{i \mid (i+1 \in S^a_{k} ) \vee (i = N^a) \}$, and combined with the fixed set to yield the successor simplicity set $S_{k+1} = S^a_{k+1} \cap S^f$. This approach is sufficient to guarantee admissibility in the nominal case with a perfect model, where the only new portion of the optimal trajectory is the last element which is always covered by $S^f$. Robustness can be improved at the expense of computational effort by expanding $S^f$ to include more elements to ensure $S_k$ remains admissible under disturbances. This is similar to the MPC formulation in \cite{li2021model}, but includes the adaptive term $S^a_k$ to ensure feasibility across the entire horizon.

\subsection{Formal Definition of Adaptive Complexity MPC Algorithm}

With the control law and conditions for admissibility we can now summarize adaptive complexity MPC as an iterative algorithm shown in Algorithm~\ref{alg:ac:acmpc}. This procedure combines the fixed and adaptive simplicity sets, solves the ACOCP, extracts the lifted solution, updates the adaptive simplicity set from this solution, then applies the first element of the control trajectory. Note that this algorithm is no different from standard MPC formulations with the exception of the definition of the simplicity set and lifting of the resulting trajectory.

\begin{algorithm}[t]
\caption{Adaptive Complexity Model Predictive Control}\label{alg:ac:acmpc}
\begin{algorithmic}
\State \textbf{Given} $x^c_0, S_0, S^f, N^a$
\State $k \gets 0$
\State $S^a_0 \gets S_0$
\Repeat
\State $S_k \gets S^a_k \cap S^f$
\State $\mathbf{z}^{*a} \gets \text{ACOCP}_{N^a}(x^c_{k}, S_k)$ \Comment{(\ref{eq:ac:adaptive_ocp})}
\State $\mathbf{z}^{*l} \gets \psi^\dagger(\mathbf{z}^{*a})$ \Comment{(\ref{eq:ac:adaptive_state_lift})}
\State $u^c_{k} \gets u^{*l}_{0 \mid k}$ \Comment{(\ref{eq:ac:adaptive_control_law})}
\State $x^c_{k+1} \gets f^c(x^c_{k},u^c_{k})$ \Comment{(\ref{eq:ac:adaptive_system})}
\State $S^a_{k} \gets \{i \mid i \in S^a_{k} \wedge z^l_{i} \text{ satisfies } (\ref{eq:ac:admissibility_conditions}) \}$
\State $S^a_{k+1} \gets  \{i \mid (i+1 \in S^a_{k} ) \vee (i = N^a) \}$
\State $k \gets k + 1$
\Until {finished}
\end{algorithmic}
\end{algorithm}

\section{Theoretical Analysis} \label{sec:ac:theoretical_analysis}

This section describes the theoretical properties of adaptive complexity MPC. We first show that constraints of the original OCP in (\ref{eq:ac:ocp}) are satisfied by solutions of the ACOCP in (\ref{eq:ac:adaptive_ocp}) under assumptions on the admissibility of $S_k$ (Sec.~\ref{sec:ac:constraint_satisfaction}). We use this result to show recursive feasibility of the ACOCP and thus asymptotic stability of a point of interest of the closed loop system (Sec.~\ref{sec:ac:feasibility}) under some assumptions on the form of $\psi^\dagger$. We show that Algorithm~\ref{alg:ac:acmpc} satisfies the assumption on admissibility of $S_k$ (Sec.~\ref{sec:ac:recursive_admissibility}), and that the basin of attraction of the resulting system is no smaller than the original complex MPC system and possibly larger since the horizon length could be expanded with the additional computational capabilities (Sec.~\ref{sec:ac:basin}).

Before delving into analysis, we must clarify the notion of stability shown and the assumptions its proof requires. Recall that stability for nonlinear dynamical systems is a local property of an equilibrium point rather than a global property of the system. For simplicity, we will follow the proof presented in \cite{allgower2012nonlinear} and show that the origin of the complex system is asymptotically stable by leveraging a cost function that penalizes deviations from the origin. This ``tracking MPC'' formulation \cite{rawlings2017model} requires an additional assumption on how the heuristic lift $\psi^\dagger$ and the cost function $V_I$ interact, specifically that projecting to the simple manifold does not increase the cost:

\begin{assumption} \label{as:ac:psi_dagger_reduces_cost}
$V_{I}^c(z^c) - V_{I}^c(\psi^\dagger \circ \psi(z^c)) \geq 0.$
\end{assumption}

This can be easily obtained by assigning $\psi^\dagger$ to map to the origin for components in the null space of $\psi$, which by definition would have zero cost. 

In the case of legged systems -- such as those that will be introduced in Sec.~\ref{sec:ac:application_to_legged_systems} -- or other systems that seek to track reference trajectories rather than fixed points, the heuristic lift $\psi^\dagger$ could map to the reference trajectory rather than the origin, resulting in a time-varying system. Also note that other formulations exist which yield provable stability results for non-tracking cost functions -- such as ``economic MPC'' \cite{ferramosca2010economic} -- and thus could possibly yield provable stability under more relaxed assumptions on the form of $\psi^\dagger$. Theoretical analysis of these extensions will be left to future work to focus on the most straightforward application of ACMPC.

\subsection{Optimal Control Problem Constraint Satisfaction}
\label{sec:ac:constraint_satisfaction}

We begin by showing that admissibility of $S_k$ results in a lifted trajectory which matches the evolution of the closed-loop dynamics of the actual complex system, and therefore the constraints of the original OCP in (\ref{eq:ac:ocp}) are satisfied by solutions of the ACOCP in (\ref{eq:ac:adaptive_ocp}). Let the evolution of the closed-loop dynamics starting at state $x^c_k$ under a given control trajectory $\mathbf{u}^c$ for duration $i$ be expressed by the function $\phi_{f} : \mathbb{Z} \times \mathcal{X}^c \times \mathbf{U} \rightarrow \mathcal{X}^c$ defined as $\phi_{f}(i,x^c_k,\mathbf{u}^c) \coloneqq f^c( \dots f^c(f^c(x^c_k, u^c_0), u^c_1), \dots, u^c_i)$.

\begin{proposition} \label{th:lifted_state} Suppose Assumption~\ref{as:ac:S_admissible} is satisfied. The predicted state at time $i$ is equal to the solution to the complex dynamical system under the lifted predicted controls, i.e.\ $x^l_{i} = \phi_{f}(i,x^c_0,\mathbf{u}^l)$. 
\end{proposition}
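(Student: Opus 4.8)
The plan is to prove the claim by induction on the horizon index $i$, reducing at each step to the single identity $x^l_{i+1} = f^c(z^l_i)$ and then establishing that identity by a case analysis over the four branches of the adaptive dynamics in \eqref{eq:ac:adaptive_system_dynamics}. For the base case $i = 0$, Assumption~\ref{as:ac:S_admissible} together with \eqref{eq:ac:admissibility_terminal_states} forces $0 \notin S_k$, so the lift in \eqref{eq:ac:adaptive_state_lift} acts as the identity and $x^l_0 = x^a_0 = x^c_0$, matching $\phi(0,x^c_0,\mathbf{u}^l) = x^c_0$.

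For the inductive step, suppose $x^l_i = \phi(i,x^c_0,\mathbf{u}^l)$. Since $\phi(i+1,x^c_0,\mathbf{u}^l) = f^c(\phi(i,x^c_0,\mathbf{u}^l),u^l_i) = f^c(z^l_i)$ by the hypothesis, it suffices to show $x^l_{i+1} = f^c(z^l_i)$, where $x^l_{i+1}$ is the state component of $z^l_{i+1}$ from \eqref{eq:ac:adaptive_state_lift} and $x^a_{i+1} = f^a(z^a_i)$. I would then split into the four cases of \eqref{eq:ac:adaptive_system_dynamics}. When $i,i+1 \notin S_k$ and when $i \in S_k,\, i+1 \notin S_k$, the lift and dynamics definitions compose directly: in both cases $z^l_i$ equals exactly the argument ($z^a_i$ or $\psi^\dagger(z^a_i)$) appearing inside $f^c$ in \eqref{eq:ac:adaptive_system_dynamics}, and $i+1 \notin S_k$ gives $x^l_{i+1} = x^a_{i+1}$, so the identity holds with no further assumptions.

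The two remaining cases are where the admissibility conditions do the work. When $i \notin S_k$ and $i+1 \in S_k$, we have $x^a_{i+1} = \psi \circ f^c(z^a_i)$ and $x^l_{i+1} = \psi^\dagger_x(x^a_{i+1})$, so using $z^l_i = z^a_i$ the identity reduces to $\psi^\dagger_x \circ \psi \circ f^c(z^l_i) = f^c(z^l_i)$, which is precisely \eqref{eq:ac:admissibility_enter_manifold} evaluated at the entering index $i+1 \in S_k$. When $i,i+1 \in S_k$, we have $x^a_{i+1} = f^s(z^a_i)$ and, since $i \in S_k$, $z^a_i = \psi(z^l_i)$ by $\psi \circ \psi^\dagger = I$; the identity then reduces to $\psi^\dagger_x \circ f^s \circ \psi(z^l_i) = f^c(z^l_i)$, exactly the exact-anchoring condition \eqref{eq:ac:admissibility_anchor} at the current index $i \in S_k$. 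Chaining the hypothesis with $x^l_{i+1} = f^c(z^l_i) = \phi(i+1,x^c_0,\mathbf{u}^l)$ closes the induction.

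The main obstacle I anticipate is bookkeeping rather than a deep argument: keeping the two shifted indices straight (condition \eqref{eq:ac:admissibility_enter_manifold} is indexed by the entering state $i+1$, whereas \eqref{eq:ac:admissibility_anchor} is indexed by the current on-manifold state $i$) and pinning down the convention by which $\psi$ acts on the state output of $f^c$ as $\psi_x$ and by which $\psi^\dagger$ supplies only the state component $\psi^\dagger_x$ when forming $x^l_{i+1}$. Once these conventions are fixed, each case is a one-line substitution, and no regularity beyond the stated admissibility conditions and the right-inverse property $\psi \circ \psi^\dagger = I$ is required.
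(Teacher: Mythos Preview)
Your proposal is correct and follows essentially the same approach as the paper: induction on $i$, reduction to $x^l_{i+1}=f^c(z^l_i)$, and the same four-case analysis over \eqref{eq:ac:adaptive_system_dynamics}, invoking \eqref{eq:ac:admissibility_enter_manifold} at the entering index and \eqref{eq:ac:admissibility_anchor} at the on-manifold index. Your bookkeeping remarks about the shifted indices and the role of $\psi\circ\psi^\dagger=I$ are exactly the details the paper leaves implicit.
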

\begin{proof}
We prove this by induction. For the base case $i = 0$, since $0 \notin S_k$, $x^l_{0} = x^c_{0} = \phi_{f}(0,x^c_0,\mathbf{u}^l)$. For the induction step we need to show that $x^l_{i} = \phi_{f}(i,x^c_0,\mathbf{u}^l)$ implies $x^l_{i+1} = \phi_{f}(i+1,x^c_0,\mathbf{u}^l)$. We obtain $\phi_{f}(i+1,x^c_0,\mathbf{u}^l)$ by applying the closed-loop complex dynamics to $\phi_{f}(i,x^c_0,\mathbf{u}^l)$ with the control determined by $\mathbf{u}^l$ to both sides,
\begin{align}
    x^l_{i} &= \phi_{f}(i,x^c_0,\mathbf{u}^l) \\
    f^c(x^l_{i},\mathbf{u}^l_{i}) &= f^c(\phi_{f}(i,x^c_0,\mathbf{u}^l),\mathbf{u}^l_{i}) \\ 
    f^c(z^l_{i}) &= \phi_{f}(i+1,x^c_0,\mathbf{u}^l)
\end{align}
Thus we need to show that $x^l_{i+1} = f^c(z^l_{i})$ to verify that the induction step holds. We proceed by cases based on inclusion in $S_k$.
\begin{case}
$i \notin S_k, i+1 \notin S_k$.
This case corresponds to a portion of the trajectory entirely in the complex space. By the definition of the adaptive system dynamics in (\ref{eq:ac:adaptive_system_dynamics}),
\begin{align}
   x^l_{i+1} &= x^a_{i+1} 
   = f^c(z^a_{i}) 
   = f^c(z^l_{i})
\end{align}
\end{case}
\begin{case}
$i \notin S_k, i+1 \in S_k$.
This case corresponds to a portion of the trajectory which decreases in complexity. By the definition of the adaptive system dynamics in (\ref{eq:ac:adaptive_system_dynamics}) and the construction of $S_k$ in (\ref{eq:ac:admissibility_conditions}),
\begin{align}
   x^l_{i+1} &= \psi^\dagger_x ( x^a_{i+1}) 
   = \psi^\dagger_x \circ \psi \circ f^c (z^a_{i})  
   = f^c(z^a_{i}) 
   = f^c(z^l_{i})
\end{align}
\end{case}
\begin{case}
$i \in S_k, i+1 \notin S_k$.
This case corresponds to a portion of the trajectory which increases in complexity. By the definition of the adaptive system dynamics in (\ref{eq:ac:adaptive_system_dynamics}) and the construction of $S_k$ in (\ref{eq:ac:admissibility_conditions}),
\begin{align}
   x^l_{i+1} &= x^a_{i+1} 
   = f^c \circ \psi^\dagger (z^a_{i})  
   = f^c(z^l_{i})
\end{align}
\end{case}
\begin{case}
$i \in S_k, i+1 \in S_k$.
This case corresponds to a portion of the trajectory entirely in the simple space. By the definition of the adaptive system dynamics in (\ref{eq:ac:adaptive_system_dynamics}) and the construction of $S_k$ in (\ref{eq:ac:admissibility_conditions}),
\begin{align}
   x^l_{i+1} &= \psi^\dagger_x ( x^a_{k+1}) 
   = \psi^\dagger_x \circ f^s (z^a_{i}) 
   = \psi^\dagger_x \circ f^s \circ \psi (z^l_{i})  
   = f^c(z^l_{i})
\end{align}
\end{case}
\setcounter{case}{0}
Thus the induction step holds, completing the proof.
\end{proof}
We must also show that state and control trajectories which satisfy the adaptive state and control constraints $\mathcal{Z}^a_{f}$ also satisfy the complex equivalent $\mathcal{Z}^c_{f}$ by nature of the admissibility of set $S_k$.

\begin{proposition} \label{th:lifting_feasibility}
Suppose Assumption~\ref{as:ac:S_admissible} is satisfied. If $z^a_{i} \in \mathcal{Z}^a_{f}$, then $z^l_{i} \in \mathcal{Z}^c_{f}$.
\end{proposition}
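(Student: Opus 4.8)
The plan is to argue by cases on whether the horizon index $i$ lies in the simplicity set $S_k$, mirroring the case split already used in the proof of Proposition~\ref{th:lifted_state}. The two cases correspond exactly to the two branches in the piecewise definitions of $\mathcal{Z}^a_i$ in \eqref{eq:ac:adaptive_feasible_set} and of $z^l_i$ in \eqref{eq:ac:adaptive_state_lift}, so the structure of the argument is dictated by those definitions.

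First I would dispatch the case $i \notin S_k$, which is immediate. Here $\mathcal{Z}^a_i = \mathcal{Z}^c$ by \eqref{eq:ac:adaptive_feasible_set}, so the hypothesis $z^a_i \in \mathcal{Z}^a$ already reads $z^a_i \in \mathcal{Z}^c$, and since $z^l_i = z^a_i$ by \eqref{eq:ac:adaptive_state_lift}, the conclusion $z^l_i \in \mathcal{Z}^c$ follows with no further work.

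The substantive case is $i \in S_k$. Here the hypothesis only gives $z^a_i \in \mathcal{Z}^s$, a constraint on the simple manifold that says nothing directly about feasibility in the complex space, while the lifted pair is $z^l_i = \psi^\dagger(z^a_i)$. The key step is to rewrite $z^l_i$ so that admissibility condition \eqref{eq:ac:admissibility_feasible} applies verbatim. Using the defining right-inverse identity $\psi \circ \psi^\dagger = I$, I would compute $\psi(z^l_i) = \psi(\psi^\dagger(z^a_i)) = z^a_i$ and hence $\psi^\dagger \circ \psi(z^l_i) = \psi^\dagger(z^a_i) = z^l_i$. Since $S_k$ is admissible by Assumption~\ref{as:ac:S_admissible}, condition \eqref{eq:ac:admissibility_feasible} asserts precisely that $\psi^\dagger \circ \psi(z^l_i) \in \mathcal{Z}^c$, so the identity just derived yields $z^l_i \in \mathcal{Z}^c$, completing this case and the proof.

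The only real subtlety, and the step I expect to be the crux, is recognizing that in the simplified case $z^l_i$ lies in the image of $\psi^\dagger$, which is what collapses $\psi^\dagger \circ \psi(z^l_i)$ back to $z^l_i$ and lets the admissibility feasibility condition be invoked unchanged. Without the right-inverse property $\psi \circ \psi^\dagger = I$ this collapse would fail. I would therefore flag that this proposition is essentially a direct translation of the admissibility requirement \eqref{eq:ac:admissibility_feasible} into the language of the lifted constraints, rather than an independent computation.
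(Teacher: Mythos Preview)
Your proposal is correct and follows essentially the same approach as the paper: a case split on $i \in S_k$, with the first case immediate from the definitions and the second case invoking the admissibility condition \eqref{eq:ac:admissibility_feasible}. The paper's proof is terser in Case~2, simply asserting $z^l_i \in \mathcal{Z}^c$ directly from \eqref{eq:ac:admissibility_feasible}, whereas you explicitly justify the identity $\psi^\dagger \circ \psi(z^l_i) = z^l_i$ via the right-inverse property before applying the condition; this extra step is a welcome clarification of what the paper leaves implicit.
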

\begin{proof}
Proceed by cases based on inclusion in $S_k$.
\begin{case}
$i \notin S_k$.
   In this case, $z^l_{i} = z^a_{i}$. By the definition of $\mathcal{Z}^a_{f}$ in (\ref{eq:ac:adaptive_feasible_set}), $z^l_{i} \in \mathcal{Z}^c_{f}$.
\end{case}
\begin{case}
$i \in S$.
In this case, $z^l_{i} = \psi^\dagger(z^a_{i})$. Since $S_k \in \mathcal{S}_a$, $z^l_{i} \in \mathcal{Z}^c_{f}$ by \eqref{eq:ac:admissibility_feasible}.
\end{case}
\setcounter{case}{0}
\end{proof}
Next we show that satisfying the initial and terminal state constraints in the adaptive system implies satisfaction of the same constraints in the complex space, which is trivial since the initial and terminal states will always be expressed in the complex space by the construction of $S_k$.
\begin{proposition} \label{th:lifting_terminal_state}
Suppose Assumption~\ref{as:ac:S_admissible} is satisfied. If $x^a_0 = x^c_k$ then $x^l_0 = x^c_k$, and if $x^a_{N^a} \in \mathcal{X}^c_{T}$, then $x^l_{N^a} \in \mathcal{X}^c_{T}$.
\end{proposition}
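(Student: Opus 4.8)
The plan is to reduce both claims to the single structural fact furnished by the admissibility condition \eqref{eq:ac:admissibility_terminal_states}. Under Assumption~\ref{as:ac:S_admissible}, $S_k$ is admissible, so every element of $S_k$ lies in $\{1, 2, \dots, N^a - 1\}$; consequently the boundary indices satisfy $0 \notin S_k$ and $N^a \notin S_k$. In other words, neither the initial state nor the terminal state is ever reduced, so the lifting map \eqref{eq:ac:adaptive_state_lift} acts as the identity at both of these indices. Both constraint inheritances then follow without propagating any dynamics or invoking the anchoring conditions.

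First I would dispatch the initial state. Since $0 \notin S_k$, the first branch of \eqref{eq:ac:adaptive_state_lift} applies, giving $z^l_0 = z^a_0$ and hence $x^l_0 = x^a_0$. Substituting the hypothesis $x^a_0 = x^c_k$ yields $x^l_0 = x^c_k$ directly.

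Next I would treat the terminal state identically. Because $N^a \notin S_k$, the terminal state is carried unchanged through the lift, so $x^l_{N^a} = x^a_{N^a}$; the only point requiring a word of care is that the lift of the terminal state is understood as the state component of the complex (identity) branch of \eqref{eq:ac:adaptive_state_lift}, since no control is paired with index $N^a$. The hypothesis $x^a_{N^a} \in \mathcal{X}^c_t$ then gives $x^l_{N^a} \in \mathcal{X}^c_t$.

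I do not anticipate a genuine obstacle here: the proposition is essentially a restatement of \eqref{eq:ac:admissibility_terminal_states}, which was imposed precisely to pin the first and last indices to the complex space. The entire argument is a two-line case check against the lifting definition, requiring neither the inductive dynamics argument of Proposition~\ref{th:lifted_state} nor the feasibility argument of Proposition~\ref{th:lifting_feasibility}.
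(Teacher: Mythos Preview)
Your proposal is correct and matches the paper's proof essentially line for line: the paper simply notes $0 \notin S_k$ gives $x^l_0 = x^a_0 = x^c_k$, and $N^a \notin S_k$ gives $x^l_{N^a} = x^a_{N^a} \in \mathcal{X}^c_t$. Your extra remark about the terminal index lacking a paired control is a fair clarification but not something the paper bothers to spell out.
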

\begin{proof}
Since $i = 0 \notin S_k$, $x^l_0 = x^a_0 = x^c_k$.  Since $i = N^a \notin S_k, x^l_{N^a} = x^a_{N^a} \in \mathcal{X}^c_{T}$.
\end{proof}

\subsection{Adaptive Complexity Feasibility and Stability}
\label{sec:ac:feasibility}

With these propositions in place we can now prove that the OCP defined in (\ref{eq:ac:adaptive_ocp}) is recursively feasible for states in $\mathcal{X}_{N^a}$ and that this set is invariant in the complex system. This is done by following the form of the proofs in \cite{allgower2012nonlinear}, which constructs a feasible solution (in the absence of modeling errors) to the OCP at the successor state by combining the current solution with the terminal set feedback policy $u_{T}$. This approach is illustrated in Fig.~\ref{fig:ac:proof_illustration}. Define this control sequence as $\Tilde{\mathbf{u}}^a$ and its lifted counterpart $\Tilde{\mathbf{u}}^l$,
\begin{align}
    \Tilde{\mathbf{u}}^a(x^c_{k}) &= \left[ u^{*a}_{1}, \dots, u^{*a}_{N^a-1}, u_{T}(x_{T}) \right] \label{eq:ac:successor_control_seq} \\ 
    \Tilde{\mathbf{u}}^l(x^c_{k}) &= \psi_u^\dagger(\Tilde{\mathbf{u}}^a(x^c_{k}))
\end{align}
where $x_{T} = \phi_{f}(N^a,x^c_k,\mathbf{u}^{*l})$ is the terminal state at time $N^a$ resulting from initial state $x^c_{k}$ and lifted control trajectory $\mathbf{u}^{*l}(x^c_{k})$ (under Proposition~\ref{th:lifted_state}). We now show that this control satisfies the requirements for recursive feasibility.

\begin{figure}[t]
  \centering
  \includegraphics[width=1.0\linewidth]{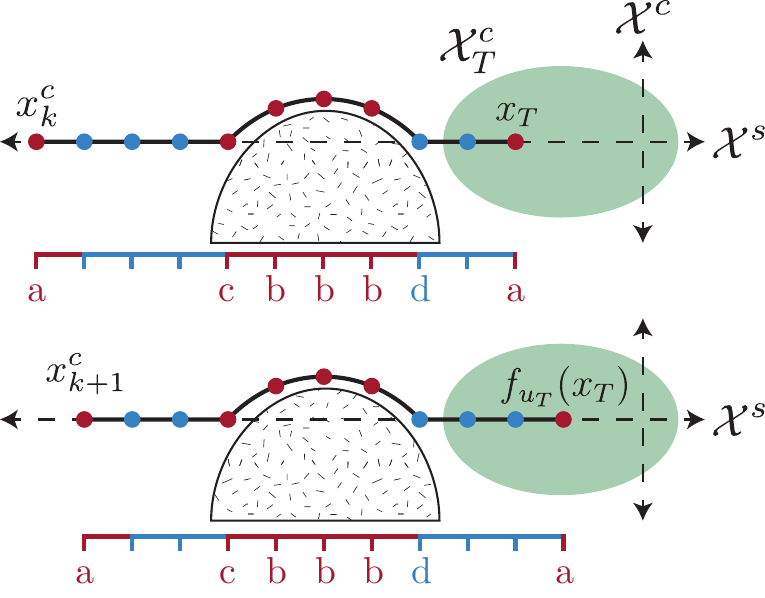}
  \caption[Illustration of adaptive complexity recursive stability]{Adaptive complexity MPC retains recursive feasibility and admissibility by updating the simplicity set and solution at time $k+1$ with the corresponding terms from time $k$ along with the new state and control determined from the terminal policy $u_{T}$ which is applied with the terminal set $\mathcal{X}^c_{T}$. Letters at the bottom indicate the condition of \eqref{eq:ac:admissibility_conditions} that element satisfies or violates. The complex manifold $\mathcal{X}^c$ is the 2D space while $\mathcal{X}^s$ is the embedded 1D submanifold.}
  \label{fig:ac:proof_illustration}
\end{figure}

\begin{proposition} \label{th:ac:recursive_feasibility}
Suppose Assumptions~\ref{as:ac:system_conditions} -- \ref{as:ac:S_admissible} are satisfied. Let $x^c_{k} \in \mathcal{X}_{N^a}$ and let $x^c_{k+1} \coloneqq f^c_{\textnormal{acmpc}}(x^c_{k})$ denote the successor state (under adaptive complexity model predictive control) to $x^c_{k}$. Then $\Tilde{\mathbf{u}}^a(x^c_{k})$ defined in (\ref{eq:ac:successor_control_seq}) is feasible for $\textnormal{ACOCP}_{N^a}(f^c_{\textnormal{acmpc}}(x^c_{k}))$ and $\mathcal{X}_{N^a}$ is positively invariant (for the system $x^c_{k+1} = f^c_{\textnormal{acmpc}}(x^c_{k})$).
\end{proposition}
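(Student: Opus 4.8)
The plan is to follow the classical recursive-feasibility argument for MPC with terminal cost and terminal set \cite{allgower2012nonlinear}, carried out in the lifted (complex) space so that Propositions~\ref{th:lifted_state}--\ref{th:lifting_terminal_state} absorb the adaptive structure. First I would identify the successor state with the second element of the lifted optimal trajectory: since $0 \notin S_k$ the applied input is $u^c_k = u^{*l}_{0 \mid_k}$, so Proposition~\ref{th:lifted_state} gives $x^c_{k+1} = f^c(x^c_k,u^{*l}_0) = \phi(1,x^c_k,\mathbf{u}^{*l}) = x^{*l}_{1}$. Hence the shifted controls in $\Tilde{\mathbf{u}}^a(x^c_k)$ reproduce, after lifting, the tail $x^{*l}_1,\dots,x^{*l}_{N^a} = \hat{x}$ of the previous lifted trajectory, followed by a single terminal step driven by $u_t(\hat{x})$.

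Second, I would check each constraint of \eqref{eq:ac:adaptive_ocp} for $\Tilde{\mathbf{u}}^a$ at $x^c_{k+1}$. The initial-state constraint holds because $x^a_0 = x^c_{k+1}$ by construction with $0 \notin S_{k+1}$. For the shifted indices $i = 0,\dots,N^a-2$ the candidate state-control pairs are the optimal pairs from time $k$ carried one step forward; their lifts satisfied $\mathcal{Z}^c$ at time $k$ by Proposition~\ref{th:lifting_feasibility}, and the terminal element of this segment, $\hat{x} = x^{*l}_{N^a}$, already lay in $\mathcal{X}^c_t$ by Proposition~\ref{th:lifting_terminal_state}. The admissibility of $S_{k+1}$ (Assumption~\ref{as:ac:S_admissible}) is then invoked to certify that these carried-forward pairs remain feasible in the shifted adaptive sets $Z^a_i$.

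Third comes the terminal extension. Applying $u_t(\hat{x})$ at $\hat{x} \in \mathcal{X}^c_t$, the standard terminal-ingredient properties --- the terminal set being positively invariant under $u_t$ while respecting $\mathcal{Z}^c$, consistent with the cost decrease of Assumption~\ref{as:ac:original_mpc_conditions}D --- give $(\hat{x},u_t(\hat{x})) \in \mathcal{Z}^c$ and $f^c(\hat{x},u_t(\hat{x})) \in \mathcal{X}^c_t$. Because the admissibility condition \eqref{eq:ac:admissibility_terminal_states} forces the last element to be complex, no reduction is needed at the terminal step and this feasibility transfers directly. Together these steps show $\Tilde{\mathbf{u}}^a(x^c_k)$ is feasible for $\mathcal{P}^a(x^c_{k+1})$. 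Positive invariance is then immediate: feasibility of the OCP at $x^c_{k+1}$ means $x^c_{k+1} \in \mathcal{X}_{N^a}$, and since $x^c_k \in \mathcal{X}_{N^a}$ was arbitrary with $x^c_{k+1} = f^c_{h^a}(x^c_k)$, we conclude $f^c_{h^a}(\mathcal{X}_{N^a}) \subseteq \mathcal{X}_{N^a}$.

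I expect the main obstacle to be the bookkeeping of the simplicity set across the time shift. Since $S_{k+1} = S^a_{k+1} \cap S^f$ is obtained by shifting \emph{and} re-filtering $S^a_k$ rather than by a clean index shift, a pair that was complex at index $i+1$ under $S_k$ may be required to be simple at index $i$ under $S_{k+1}$ (or vice versa). Reconciling such a representation change --- showing that reducing a feasible lifted pair yields a pair admissible in $\mathcal{Z}^s$, and that lifting a simple pair back recovers a feasible complex pair --- is precisely where the admissibility condition \eqref{eq:ac:admissibility_feasible} and the relationship between $\mathcal{Z}^s$ and $\mathcal{Z}^c$ must be applied at the correct indices. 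Provided $S_{k+1}$ is admissible, Propositions~\ref{th:lifted_state}--\ref{th:lifting_feasibility} supply exactly the needed correspondence, so the difficulty is careful index tracking rather than any new estimate.
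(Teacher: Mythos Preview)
Your proposal is correct and follows essentially the same strategy as the paper: both lift the candidate shifted-plus-terminal sequence to the complex space, invoke Propositions~\ref{th:lifted_state}--\ref{th:lifting_terminal_state} to verify that the lifted trajectory satisfies the complex constraints and terminal condition, and appeal to the terminal-ingredient properties of Assumption~\ref{as:ac:original_mpc_conditions} for the appended step. You are in fact more explicit than the paper about the simplicity-set bookkeeping under the time shift; the paper's proof largely sidesteps that issue by working entirely in the lifted space and checking $\mathcal{Z}^c$ directly.
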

\begin{proof}
This proof follows standard methods for demonstrating recursive feasibility \cite{allgower2012nonlinear}, see Appendix \ref{sec:ac:appendix_stability_proofs} for the complete proof.
\end{proof}

We must also show that the cost function decreases along any solution of $x^c_{k+1} = f^c_{\textnormal{acmpc}}(x^c_{k})$ given these previous assumptions as this is necessary for the stability proof.

\begin{proposition} \label{th:ac:decreasing_cost}
Suppose Assumptions~\ref{as:ac:system_conditions} -- \ref{as:ac:psi_dagger_reduces_cost} are satisfied. Then
\begin{align} \label{eq:ac:decreasing_cost}
    V^{*a}_N(f^c_{\textnormal{acmpc}}(x^c_{k})) - V^{*a}_N(x^c_{k}) \leq -V_{I}^c(x^c_{k},u^c_{\textnormal{acmpc}}(x^c_{k}))
\end{align}
\end{proposition}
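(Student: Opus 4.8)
The plan is to follow the standard MPC value-function descent argument, using the feasible successor control sequence $\Tilde{\mathbf{u}}^a(x^c_k)$ from (\ref{eq:ac:successor_control_seq}) --- which Proposition~\ref{th:ac:recursive_feasibility} guarantees is feasible for $\mathcal{P}^a(f^c_{h^a}(x^c_k))$ --- as a suboptimal witness. By optimality of $V^{*a}_N$, the optimal value at the successor state is bounded above by the cost incurred by $\Tilde{\mathbf{u}}^a(x^c_k)$, so it suffices to bound that candidate cost minus $V^{*a}_N(x^c_k)$ by $-L^c(x^c_k, h^a(x^c_k))$. The essential enabling observation is that, by the cost definition (\ref{eq:ac:adaptive_cost}), every adaptive stage cost equals the complex stage cost evaluated on the lifted pair, and by Proposition~\ref{th:lifted_state} the lifted trajectory coincides with the true complex closed-loop trajectory; hence the stage costs depend only on the lifted (complex) values and not on how indices are assigned to $S_k$. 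This makes the costs representation-invariant, so that when the previous optimal trajectory is shifted by one step to form the candidate, its shared segment carries exactly the same cost under $S_{k+1}$ as it did under $S_k$.

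With this in hand, I would write out the telescoping explicitly. The candidate cost consists of the shifted stage costs $L^c(z^{*l}_i)$ for $i = 1, \dots, N^a - 1$, the new stage cost $L^c(\hat{x}, u_t(\hat{x}))$ incurred by the terminal control law, and the terminal cost $V_t(f^c(\hat{x}, u_t(\hat{x})))$ at the new endpoint. Subtracting $V^{*a}_N(x^c_k) = \sum_{i=0}^{N^a-1} L^c(z^{*l}_i) + V_t(\hat{x})$, the overlapping stage costs for $i = 1, \dots, N^a - 1$ cancel, leaving $-L^c(z^{*l}_0)$ together with the terminal contribution $L^c(\hat{x}, u_t(\hat{x})) + V_t(f^c(\hat{x}, u_t(\hat{x}))) - V_t(\hat{x})$. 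Since $0 \notin S_k$ by admissibility, (\ref{eq:ac:admissibility_terminal_states}), the lifted and adaptive pairs agree at index $0$, so $L^c(z^{*l}_0) = L^c(x^c_k, h^a(x^c_k))$, which is exactly the quantity appearing on the right-hand side of (\ref{eq:ac:decreasing_cost}).

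It then remains to show the terminal contribution is nonpositive. This is precisely the role of the terminal ingredients: invoking Assumption~\ref{as:ac:original_mpc_conditions}D at the terminal state $\hat{x} \in \mathcal{X}^c_t$ (whose feasibility is guaranteed by Proposition~\ref{th:ac:recursive_feasibility}) bounds $V_t(f^c(\hat{x}, u_t(\hat{x}))) - V_t(\hat{x})$ by a negative quantity that dominates the added stage cost $L^c(\hat{x}, u_t(\hat{x}))$, so the bracketed terminal contribution is $\leq 0$. Combining, the candidate cost minus $V^{*a}_N(x^c_k)$ is at most $-L^c(x^c_k, h^a(x^c_k))$, and the optimality bound delivers the claim.

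I expect the main obstacle to be the cost bookkeeping across the adaptive/lifted representation rather than the descent inequality itself: I must argue carefully that shifting the simplicity set from $S_k$ to $S_{k+1}$ leaves the shared stage and terminal costs unchanged, which hinges on expressing every cost through the lift and on the trajectory-matching result of Proposition~\ref{th:lifted_state}. A secondary point to reconcile is that Assumption~\ref{as:ac:original_mpc_conditions}D is stated as a bound by $-\alpha_{V_t}(|x|)$ without the stage cost; closing the argument cleanly requires that this bound be, or be chosen to be, at least as large as the terminal stage cost $L^c(\hat{x}, u_t(\hat{x}))$, i.e.\ the standard control-Lyapunov-function relationship between the terminal cost, the terminal control law, and the stage cost on $\mathcal{X}^c_t$.
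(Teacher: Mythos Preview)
Your proposal is correct and follows essentially the same route as the paper: use the shifted candidate $\Tilde{\mathbf{u}}^a$ from (\ref{eq:ac:successor_control_seq}) as a suboptimal witness, telescope the stage costs, and close with the terminal control-Lyapunov inequality. The paper's bookkeeping differs only cosmetically: rather than arguing global representation-invariance of the stage costs via (\ref{eq:ac:adaptive_cost}) and Proposition~\ref{th:lifted_state} as you do, it isolates the single correction term $L^c(x^c_{k+1},u^{*a}_1)-L^a(x^c_{k+1},u^{*a}_1)$ at the index that changes from simple to complex under the shift and shows it vanishes because $\psi^\dagger$ maps null-space variables to the origin. Your observation about Assumption~\ref{as:ac:original_mpc_conditions}D is apt; the paper simply asserts that Assumption~\ref{as:ac:original_mpc_conditions} implies $V_t(f_{u_t}(\hat{x}))-V_t(\hat{x})\le -L^c(\hat{x},u_t(\hat{x}))$, i.e.\ it tacitly reads the terminal decrease condition in the standard CLF form you identify.
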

\begin{proof}
See Appendix \ref{sec:ac:appendix_stability_proofs}.
\end{proof}

We can now prove asymptotic stability of the origin of the closed loop system using standard Lyapunov-based methods. This supports Hypothesis~\ref{hyp:ac:conditions} which states that adaptive complexity MPC yields provable stability properties reliant on template and anchor conditions.

\begin{theorem} \label{th:ac:adaptive_complexity_stability}
Suppose Assumptions~\ref{as:ac:system_conditions} -- \ref{as:ac:S_admissible} are satisfied. Then there exists functions $\alpha_1, \alpha_2, \alpha_3 \in \mathcal{K}_\infty$ which upper and lower bound the cost, i.e.,
\begin{subequations}
\label{eq:ac:stability}
\begin{align}
    \alpha_1(|x^c_{k}|) \geq V^{*a}_{N^a}(x^c_{k}) &\geq \alpha_2(|x^c_{k}|) \label{eq:ac:stability_inequality} \\
    V^{*a}_{N^a}(x^c_{k+1}) - V^{*a}_{N^a}(x^c_{k}) &\leq -\alpha_3(|x^c_{k}|) \label{eq:ac:stability_decreasing_cost}
\end{align}
\end{subequations}
and thus the origin of the system,
\begin{align}
    x^c_{k+1} = f^c_{\textnormal{acmpc}}(x^c_{k})
\end{align}
is asymptotically stable with a region of attraction $\mathcal{X}_{N^a}$.
\end{theorem}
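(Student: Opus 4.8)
The plan is to exhibit the optimal adaptive value function $V^{*a}_{N^a}$ as a Lyapunov function for the closed-loop system $x^c_{k+1} = f^c_{h^a}(x^c_k)$ and then invoke the standard discrete-time Lyapunov stability theorem \cite{limon2009input}. The conceptual key is that Propositions~\ref{th:lifted_state}--\ref{th:ac:decreasing_cost} have already re-expressed every relevant quantity of the adaptive problem in terms of the complex system: the lifted trajectory satisfies the true complex dynamics and constraints, and $V^{*a}_{N^a}$ obeys a one-step decrease along the actual closed-loop evolution. Consequently the argument reduces to the familiar one for complex-model NMPC, and the three $\mathcal{K}_\infty$ bounds in \eqref{eq:ac:stability} can be assembled from the preceding results.

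Two of the three bounds are essentially immediate. The decrease inequality \eqref{eq:ac:stability_decreasing_cost} follows directly from Proposition~\ref{th:ac:decreasing_cost} together with the stage-cost lower bound in Assumption~\ref{as:ac:original_mpc_conditions}A: since $L^c(x^c_k,h^a(x^c_k)) \geq \alpha_U(|x^c_k|)$, we may take $\alpha_3 = \alpha_U$. The lower bound $\alpha_2$ in \eqref{eq:ac:stability_inequality} comes from retaining only the first stage cost in the sum; because $0 \notin S_k$ by \eqref{eq:ac:admissibility_terminal_states} we have $L^a(z^a_0) = L^c(x^c_k,u^a_0) \geq \alpha_U(|x^c_k|)$, so $V^{*a}_{N^a}(x^c_k) \geq \alpha_U(|x^c_k|)$ and again $\alpha_2 = \alpha_U$ suffices. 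I would also record that $V^{*a}_{N^a}(0) = 0$: the origin is an equilibrium (Assumption~\ref{as:ac:system_conditions}A) admitting the zero control (Assumption~\ref{as:ac:original_mpc_conditions}E), so the all-zero lifted trajectory is feasible with zero cost, anchoring both bounds at the origin.

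The main obstacle is the upper bound $\alpha_1$. Finiteness of $V^{*a}_{N^a}$ everywhere on $\mathcal{X}_{N^a}$ is guaranteed by recursive feasibility (Proposition~\ref{th:ac:recursive_feasibility}), and on the terminal set $\mathcal{X}^c_t$ iterating the terminal feedback $h_t$ (Assumption~\ref{as:ac:original_mpc_conditions}D) keeps the state in $\mathcal{X}^c_t$ while $V_t$ decreases, yielding an explicit cost bound of the form $\alpha_{L,t}(|x|)$ near the origin. To convert this into a $\mathcal{K}_\infty$ bound valid on the whole basin I would invoke continuity of $V^{*a}_{N^a}$ — which follows from twice-differentiability of $L^c$, $V_t$, and $f^c$ (Assumption~\ref{as:ac:original_mpc_conditions}C) together with compactness of $\mathcal{Z}$ (Assumption~\ref{as:ac:system_conditions}C) — combined with $V^{*a}_{N^a}(0)=0$ and compactness of $\mathcal{X}_{N^a}\subseteq\mathcal{X}$, since a continuous function vanishing at the origin on a compact domain is dominated by some $\alpha_1\in\mathcal{K}_\infty$. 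I expect this continuity step to be the delicate part, because the value function of a nonlinear constrained program need not be continuous in general and the adaptive feasible set \eqref{eq:ac:adaptive_feasible_set} changes structure with $S_k$; the safest route is to bound $V^{*a}_{N^a}$ directly along the already-constructed feasible successor sequence $\tilde{\mathbf{u}}^a$ from Proposition~\ref{th:ac:recursive_feasibility} rather than appeal to abstract continuity. With \eqref{eq:ac:stability_inequality}, \eqref{eq:ac:stability_decreasing_cost}, and positive invariance of $\mathcal{X}_{N^a}$ in hand, the standard Lyapunov theorem \cite{limon2009input,allgower2012nonlinear} then delivers asymptotic stability of the origin with region of attraction $\mathcal{X}_{N^a}$.
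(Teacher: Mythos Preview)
Your proposal is correct and follows essentially the same approach as the paper: exhibit $V^{*a}_{N^a}$ as a Lyapunov function, obtain the lower bound and the descent condition from the stage-cost bound in Assumption~\ref{as:ac:original_mpc_conditions}A together with Proposition~\ref{th:ac:decreasing_cost}, obtain the upper bound from the structure of the cost and terminal ingredients, and then invoke standard discrete-time Lyapunov theory. The paper's own proof is considerably terser than yours---it simply attributes \eqref{eq:ac:stability_inequality} to ``the structure of the value function defined in Assumption~\ref{as:ac:original_mpc_conditions}'' and \eqref{eq:ac:stability_decreasing_cost} to Proposition~\ref{th:ac:decreasing_cost}, then cites \cite{rawlings2017model}---so your more careful discussion of the upper bound $\alpha_1$ (and your acknowledgment that this is the delicate step) is if anything an improvement in rigor over the presentation in the paper.
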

\begin{proof}
See Appendix \ref{sec:ac:appendix_stability_proofs}.
\end{proof}

\subsection{Recursive Admissibility of $S_k$}
\label{sec:ac:recursive_admissibility}

Theorem~\ref{th:ac:adaptive_complexity_stability} shows that adaptive complexity MPC is stable under Assumption~\ref{as:ac:S_admissible} which states that $S_k$ is admissible. Next we prove that this holds for all time under Algorithm~\ref{alg:ac:acmpc}, again assuming no modeling errors. Robustness considerations remain an intriguing area for future investigation.
\begin{lemma} \label{th:ac:recursive_admissibility}
Let $x^c_0 \in \mathcal{X}_{N^a}$. Then $S_k \in \mathcal{S}_a ~\forall k \geq 0$ under Algorithm~\ref{alg:ac:acmpc}.
\end{lemma}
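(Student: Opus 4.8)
The goal is to prove recursive admissibility: starting from an admissible $S_0$, Algorithm~\ref{alg:ac:acmpc} produces $S_k \in \mathcal{S}_a$ for all $k \geq 0$. This is an induction on $k$. The base case is given, since $S_0$ is chosen admissible by construction. For the inductive step, I would assume $S_k \in \mathcal{S}_a$ and show $S_{k+1} \in \mathcal{S}_a$.

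\begin{proof}[Proof plan]
The plan is to proceed by induction on $k$. The base case $S_0 \in \mathcal{S}_a$ holds by the initialization of Algorithm~\ref{alg:ac:acmpc} (e.g.\ $S_0 = \{\}$, which is trivially admissible). For the inductive step, assume $S_k \in \mathcal{S}_a$; I must show that the successor set $S_{k+1} = S^a_{k+1} \cap S^f$ is admissible, i.e.\ that each of the four conditions in \eqref{eq:ac:admissibility_conditions} holds for every $i \in S_{k+1}$. The key observation, and the structural backbone of the argument, is Proposition~\ref{th:ac:recursive_feasibility}: the successor control sequence $\tilde{\mathbf{u}}^a(x^c_k)$ obtained by shifting the optimal trajectory forward and appending the terminal policy is feasible for the OCP at $x^c_{k+1}$. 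Under the no-modeling-error assumption, the realized state $x^c_{k+1} = f^c_{h^a}(x^c_k)$ coincides with the predicted successor state $x^{*l}_{1\mid_k}$ (via Proposition~\ref{th:lifted_state}), so the shifted lifted trajectory at time $k+1$ agrees on indices $0,\dots,N^a-1$ with the tail $z^{*l}_{1\mid_k},\dots,z^{*l}_{N^a\mid_k}$ of the time-$k$ lifted solution, with only the final element being newly generated by the terminal policy $u_t$.

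The heart of the argument is to verify that the index-shifted set inherits admissibility. Algorithm~\ref{alg:ac:acmpc} defines $S^a_{k+1} = \{i-1 \mid i \in S^a_k \wedge z^l_i \text{ satisfies } \eqref{eq:ac:admissibility_conditions}\}$, so any element $j \in S^a_{k+1}$ arose from an element $j+1 \in S^a_k$ whose lifted state-control pair $z^l_{j+1}$ was explicitly checked against all four admissibility conditions after the time-$k$ solve. Because the lifted trajectory is merely re-indexed between the two time steps (the tail of the time-$k$ solution becomes the head of the time-$(k+1)$ solution), the conditions \eqref{eq:ac:admissibility_feasible}, \eqref{eq:ac:admissibility_anchor}, and \eqref{eq:ac:admissibility_enter_manifold}---which are local statements about $z^l_i$ and $z^l_{i-1}$---transfer directly: an element that was checked at index $j+1$ at time $k$ satisfies the identical conditions at index $j$ at time $k+1$. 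Condition \eqref{eq:ac:admissibility_terminal_states} (that the first and last indices remain complex) is enforced by intersecting with the fixed set $S^f$, which by design excludes $i=0$ and $i=N^a$; this intersection is also where one invokes the final remark preceding Assumption~\ref{as:ac:S_admissible}, namely that removing elements from an admissible set preserves admissibility.

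The main obstacle I anticipate is the boundary accounting at the newly introduced terminal element and the element immediately preceding it. Specifically, the shift creates a new slot at index $N^a-1$ populated by the terminal control $u_t(\hat{x})$, and I must confirm that this never forces an index into $S_{k+1}$ that violates the ``enter-manifold'' condition \eqref{eq:ac:admissibility_enter_manifold}, which couples index $i$ to its predecessor $i-1$. The clean resolution is that the fixed set $S^f$ always keeps the last index (and, by the argument in Sec.~\ref{sec:ac:algorithm_overview:S_convergence}, the terminal region) complex, so no simplification is ever attempted at or adjacent to the freshly generated terminal element; the only ``new'' portion of the trajectory is covered by $S^f$ and thus excluded from $S_{k+1}$. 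I would therefore conclude the inductive step by checking the two cases: for indices in the interior, admissibility is inherited by the re-indexing argument above; for the terminal index and its neighborhood, admissibility holds vacuously because those indices are excluded by $S^f$. This closes the induction and yields $S_k \in \mathcal{S}_a$ for all $k \geq 0$.
\end{proof}
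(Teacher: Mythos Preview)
Your proposal is correct and follows essentially the same approach as the paper: induction on $k$, with the inductive step using Proposition~\ref{th:ac:recursive_feasibility} to identify the time-$(k+1)$ trajectory as the shifted tail of the time-$k$ solution plus a new terminal element, then observing that the explicit check in Algorithm~\ref{alg:ac:acmpc} guarantees the shifted indices satisfy \eqref{eq:ac:admissibility_conditions} while the intersection with $S^f$ handles the endpoints. Your treatment of the boundary case is in fact more careful than the paper's, which simply notes $N^a \notin S^f$ without spelling out the predecessor-coupling concern you raise.
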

\begin{proof}
We proceed by induction. The base case $k = 0$ is met by the assumption that $S_0 \in \mathcal{S}_a$. For the induction step we must show that $S_k \in \mathcal{S}_a$ implies $S_{k+1} \in \mathcal{S}_a$.
By Proposition~\ref{th:ac:recursive_feasibility}, $z^{*l}_{k+1}$ consists of each of the last $N^a - 1$ elements of $z^{*l}_{k}$, plus the new terminal state-control pair $(x_{T}, u_{T}(x_{T}))$. Since under Algorithm~\ref{alg:ac:acmpc} elements of $S^a_{k+1}$ are the time-shifted elements of $S^a_{k}$ which satisfy the admissibility conditions (\ref{eq:ac:admissibility_conditions}), these conditions are satisfied for all $i \in S^a_{k+1}$. Since by the definition of $S_{k+1} = S^a_{k+1} \cap S^f$ where $i = N^a \notin S^f$, the new terminal state-action pair is always in the complex space, and hence (\ref{eq:ac:admissibility_conditions}) are satisfied for all $i \in S_{k+1}$ and thus $S_{k+1} \in \mathcal{S}_a$.
\end{proof}

Note that the terminal state-control pair may not meet the reduction conditions in (\ref{eq:ac:admissibility_conditions}), meaning that it must remain in the complex space. This is handled by assuming the last finite element in the horizon is complex, and checking (\ref{eq:ac:admissibility_conditions}) after solving the OCP to determine if the new index can be allowed into $S_{k}$ or must remain complex.

\subsection{Basin of Attraction Comparison}
\label{sec:ac:basin}

We have shown that both the original MPC formulation for the complex system in (\ref{eq:ac:closed_loop_system}) and the adaptive complexity MPC system in (\ref{eq:ac:adaptive_system}) are asymptotically stable about the origin with basins of attraction $\mathcal{X}_{N^c}$ and $\mathcal{X}_{N^a}$, respectively. We now show that the size of their basins of attraction is dependent on the horizon lengths $N^c$ and $N^a$.

\begin{lemma} \label{th:ac:basin_of_attraction}
If $N^c \leq N^a$ then $\mathcal{X}_{N^c} \subseteq \mathcal{X}_{N^a}$, and if $N^c < N^a$ then  $\mathcal{X}_{N^c} \subset \mathcal{X}_{N^a}$.
\end{lemma}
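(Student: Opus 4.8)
The plan is to establish both inclusions by decomposing the comparison into two independent containments and composing them. The starting point is the identification, established by Theorem~\ref{th:ac:adaptive_complexity_stability} (and the analogous standard result cited for the complex system), of each basin of attraction with the feasible set of its defining OCP: $\mathcal{X}_{N^c}$ is the set of $x^c_k$ admitting a feasible horizon-$N^c$ solution to the complex OCP in \eqref{eq:ac:ocp}, and $\mathcal{X}_{N^a}$ is the set admitting a feasible solution to the adaptive OCP \eqref{eq:ac:adaptive_ocp} at horizon $N^a$. The first containment asserts that enlarging the horizon cannot shrink the feasible set of the complex problem; the second asserts that the adaptive feasible set at a fixed horizon contains the complex feasible set at the same horizon. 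Chaining them gives $\mathcal{X}_{N^c} \subseteq \mathcal{X}_{N^a}$.

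For the first containment, given $x^c_k \in \mathcal{X}_{N^c}$ with a feasible control $\mathbf{u} = [u_0,\dots,u_{N^c-1}]$ and terminal state $x_{N^c} \in \mathcal{X}^c_t$, I would construct a feasible horizon-$N^a$ solution by padding: for $j = N^c,\dots,N^a-1$, append $u_j = h_t(x_j)$, where $h_t$ is the terminal control law from Assumption~\ref{as:ac:original_mpc_conditions}(D). This is precisely the one-step extension used in the recursive feasibility construction of Proposition~\ref{th:ac:recursive_feasibility}, iterated $N^a - N^c$ times; each step keeps the state in $\mathcal{X}^c_t$ by positive invariance of the terminal set under $h_t$ and satisfies the constraint set $\mathcal{Z}^c$, so the padded trajectory is feasible at horizon $N^a$ with $x_{N^a} \in \mathcal{X}^c_t$. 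For the second containment I would invoke the remark following \eqref{eq:ac:adaptive_ocp} that every term of the adaptive OCP reduces to its complex counterpart when $S_k = \{\}$; since the empty set is admissible, any state feasible for the complex horizon-$N^a$ problem is feasible for $\mathcal{P}^a$ at horizon $N^a$ through its all-complex solution. Composing the two yields the weak inclusion.

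For the strict inclusion when $N^c < N^a$, the weak-inclusion argument does not suffice, and this is the step I expect to be the main obstacle. I would exhibit a witness state that can be steered into $\mathcal{X}^c_t$ in $N^a$ steps but not in $N^c$ or fewer, so that it lies in $\mathcal{X}_{N^a}\setminus\mathcal{X}_{N^c}$. Producing such a witness is not automatic for an arbitrary nonlinear system, since the $N$-step controllable sets to $\mathcal{X}^c_t$ could in principle saturate at a horizon below $N^c$; consequently this direction relies on a mild reachability property ensuring that the controllable set strictly grows with the horizon (guaranteed, for example, under local controllability at the origin together with the origin lying in the interior of the constraint sets, Assumption~\ref{as:ac:system_conditions}(C)). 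I would state this condition explicitly, after which the witness construction, and hence the strict containment, follows directly.
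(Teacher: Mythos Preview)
Your proposal is correct and in fact more careful than the paper's own argument in two respects.

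For the weak inclusion, the paper proceeds differently: it takes the complex horizon-$N^c$ solution $\mathbf{u}^*$, evaluates the admissibility conditions \eqref{eq:ac:admissibility_conditions} on it to obtain an admissible $S_0$, and then projects componentwise to build an adaptive control $\mathbf{u}^a$ whose lift recovers $\mathbf{u}^*$; Propositions~\ref{th:lifted_state}--\ref{th:lifting_terminal_state} then certify feasibility for \eqref{eq:ac:adaptive_ocp}. However, the paper's construction yields a trajectory of length $N^c$ and asserts it solves the horizon-$N^a$ problem without explicitly extending the horizon when $N^a > N^c$. Your two-step decomposition---first pad with the terminal controller $h_t$ to reach horizon $N^a$ in the complex problem, then embed into the adaptive problem via the trivially admissible $S_k = \{\}$---handles this gap cleanly and is the more robust argument.

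For the strict inclusion, the paper takes essentially the same witness idea you describe: it posits a state $x^c_0 \notin \mathcal{X}_{N^c}$ together with a feasible one-step control into $\mathcal{X}_{N^c}$, prepends this step to the horizon-$N^c$ solution from $x^c_1$, and concludes $x^c_0 \in \mathcal{X}_{N^a}$ with $N^a = N^c + 1$. The paper, however, simply asserts that such a point exists; you are right to flag that this requires an auxiliary reachability condition (the $N$-step controllable sets to $\mathcal{X}^c_t$ must strictly grow), since without it the inclusion need not be strict. Stating that condition explicitly, as you propose, is the correct fix.
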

\begin{proof}
Let $x^c_0 \in \mathcal{X}_{N^c}$, thus $\text{OCP}_{N^c}(x^c_0)$ has a solution $\mathbf{u}^*$ satisfying the constraints in (\ref{eq:ac:ocp}). Let $S_0$ be the initial admissibility set of this solution determined by evaluating the conditions in (\ref{eq:ac:admissibility_conditions}). Construct an adaptive control trajectory $\mathbf{u}^a$,
\begin{align}
    u^a_i &= \begin{cases}
        u^*_i & i \notin S_0  \\ 
        \psi_u(u^*_i) & i \in S_0
    \end{cases}
\end{align}
for all $i = 0, 1, \dots, N^c-1$, which implies $\mathbf{u}^l = \mathbf{u}^*$. By Propositions \ref{th:lifted_state} -- \ref{th:lifting_terminal_state} the constraints on the OCPs (\ref{eq:ac:ocp}) and (\ref{eq:ac:adaptive_ocp}) are equivalent, and thus $\mathbf{u}^a$ is a valid solution of (\ref{eq:ac:adaptive_ocp}) which implies that $x^c_0 \in \mathcal{X}_{N^a}$ and thus $\mathcal{X}_{N^c} \subseteq \mathcal{X}_{N^a}$.

To show that $N^c < N^a \rightarrow \mathcal{X}_{N^c} \subset \mathcal{X}_{N^a}$, consider a point $x^c_{0} \notin \mathcal{X}_{N^c}$ and some feasible control $u^c_0 \in \mathcal{U}_{f}$ such that $z^c_0 \coloneqq (x^c_0, u^c_0) \in \mathcal{Z}^c_{f}$ and $x^c_1 = f^c(z^c_0) \in \mathcal{X}_{N^c}$. Let $u^*_1$ be the control sequence yielded by solving $\text{OCP}_{N^c}(x^c_1)$. Let the control sequence $\mathbf{u}^a_0 = [u^c_0, \mathbf{u}^a_1]$, and let $N^a = N^c + 1$. Since the state $x^c_1 \in \mathcal{X}_{N^a}$, by Propositions \ref{th:lifted_state} -- \ref{th:lifting_terminal_state} the state, control and terminal constraints of (\ref{eq:ac:adaptive_ocp}) are satisfied for $z^a_i~\forall i = 1, 2, \dots, N^c$, and since $z^c_0 \in \mathcal{Z}^c_{f}$, the state and control constraints are satisfied for $i = 0$. Thus all the constraints of (\ref{eq:ac:adaptive_ocp}) are satisfied, and therefore $x^c_0 \in \mathcal{X}_{N^a}$. The property $\mathcal{X}_{N^c} \subset \mathcal{X}_{N^a}$ follows.
\end{proof}

The property that longer horizon lengths yield larger basins of attraction is a known result of MPC theory -- as horizon lengths go to infinity, MPC converges to infinite-horizon optimal control \cite{allgower2012nonlinear}. Horizon lengths are generally limited by computational effort, so if simplifying the problem allows for longer horizons under equal computational capabilities without violating system constraints, the resulting controller has a larger basin of attraction. We note this theoretical result, although in the following experiments we leave horizon length fixed and allow solve time variations rather than the reverse as this yields a more consistent representation of computational constraints. Also note that since membership in the basin of attraction implies that a solution to the OCP exists, Lemma~\ref{th:ac:basin_of_attraction} implies that adaptive complexity maintains the completeness properties of the original MPC formulation in the complex system (assuming a suitable algorithm which can solve the OCP in \eqref{eq:ac:adaptive_ocp}).

Additionally, it should be noted that although feasible solutions of the ACOCP are also feasible solutions of the original complex OCP, they may not be optimal with respect to the original OCP cost function. Reducing the system at a particular time is in essence constraining it to the simple manifold at that time. The conditions stated above require that doing so be feasible and yield a decreasing cost, yet it is possible that doing so could yield a trajectory of higher cost across the whole trajectory than if this constraint were lifted. As such this approach may result in a sacrifice in cost optimality in favor of a simpler problem.

\section{Application to Legged Systems} \label{sec:ac:application_to_legged_systems}
To demonstrate the validity of the proposed adaptive complexity MPC in controlling dynamical systems and to provide examples for the quantities defined above, we apply the approach for a legged robot system. This section defines the complex and simple models which are used for implementing the algorithm, as well as the mappings between them. The complex model includes states and constraints on both the body and feet of the robot (and thus also joint information through kinematics calculations) whereas the simple model only considers body motion.

\subsection{Definition of Complex Legged System}
The complex system represents a model of the robot that includes body and foot states as shown in Fig.~\ref{fig:ac:legged_systems}. More precisely, we leverage a single rigid body dynamics model and a full kinematics model, similar to \cite{dai2014whole}. We choose to include foot states in the dynamical system rather than joint states for two reasons. First, we note that joint information is available in the OCP via slack variables constrained by forward kinematics, and that operating in end effector space renders certain functions -- in particular approximated limb dynamics and terrain clearance -- linear with respect to state variables and therefore more conducive to online replanning. Second, this parameterization enables a quasi-massless leg assumption, where limb dynamics can still be approximated (in this case with a point-mass model) without affecting body dynamics, which is desirable to satisfy \eqref{eq:ac:admissibility_anchor} as much as possible. Note that joint dynamics could be accounted for by using a centroidal dynamics model for the simple system and whole-body dynamics as the complex system. This approach would allow the simple model to ignore joint dynamics when feasible while still capturing the effects of limb inertia and kinematics on body dynamics, and only compute joint dynamics when needed to satisfy actuation constraints. However, since the massless leg assumption has been shown to be useful for systems like the quadrupedal one studied here \cite{wensing2022optimization, paper:johnson_hs_2016}, and we are primarily interested in behaviors where limb kinematics are more relevant than their dynamics, we leave the extension to other systems as future work.

We define the states of the complex system $x^c \in \mathbb{R}^{12 + 6n}$ with $n$ denoting the number of legs,
\begin{align} \label{eq:ac:complex_system_state}
    x^c = \left[\begin{array}{cccccc}
         q_{\text{lin}}^T &
         q_{\text{ang}}^T &
         q_{\text{foot}}^T &
         \dot{q}_{\text{lin}}^T &
         \omega^T &
         \dot{q}_{\text{foot}}^T
    \end{array} \right]^T
\end{align}
where $q_{\text{lin}} \in \mathbb{R}^3$ defines the body position in the world frame, $q_{\text{ang}} \in \mathbb{R}^3$ defines the body orientation through a local parameterization such as Euler angles, $q_{\text{foot}} \in \mathbb{R}^{3n}$ defines the foot positions in the world frame, and $\omega \in \mathbb{R}^3$ is the angular velocity in the body frame (with $\widehat{\omega}$ its skew-symmetric equivalent). Let the configuration be defined as $q^c \coloneqq [q^T_{\text{lin}}, q^T_{\text{ang}}, q^T_{\text{foot}}]^T$.

\begin{figure}[t]
    \centering
    \subfloat[\centering Complex system]{{\includegraphics[width=5cm]{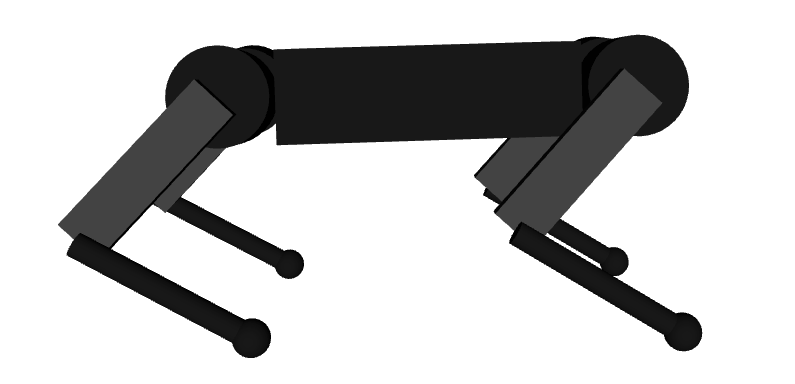} }}%
    \qquad
    \subfloat[\centering Simple system]{{\includegraphics[width=5cm]{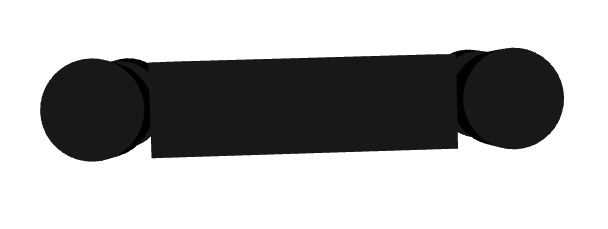} }}%
    \caption[Complex and simple legged systems]{The complex model includes body and foot states which together define joint states, while the simple model only consists of body states.}%
    \label{fig:ac:legged_systems}%
\end{figure}

Let the control inputs $u^c \in \mathbb{R}^{6n}$ be defined as,
\begin{align} \label{eq:ac:complex_system_control}
    u^c = \left[\begin{array}{c}
         u_{\text{body}} \\
         u_{\text{foot}} 
    \end{array} \right]
\end{align}
where $u_{\text{body}} \in \mathbb{R}^{3n}$ are the desired ground reaction forces at each foot in the world frame coordinates, and $u_{\text{foot}} \in \mathbb{R}^{3n}$ denote the forces which accelerate the feet during swing, but do not act on the robot body.
Note that these forces on the physical system correspond to the same actuators as any given leg is either in stance or swing -- this separation is primarily to distinguish between control authority available in the simple and complex systems.

Let $u_{\text{body},j} \in \mathbb{R}^{3}$ and $q_{\textnormal{foot}, j} \in \mathbb{R}^{3}$ be the ground reaction forces and foot positions respectively for stance leg $j$. Using a simple point-mass model for the feet and following the standard formulation for single rigid body dynamics models, e.g.\ as in \cite{chignoli2020variational}, define the continuous time dynamics of this system $f^c(x^c,u^c)$,
\begin{align} \label{eq:ac:legged_complex_dynamics}
    f^c(x^c,u^c) &=
\renewcommand{\arraystretch}{1.2}
    \left[ \begin{array}{c}
        \dot{q}_{\text{lin}} \\
        J_{\omega} (q_{\text{ang}}){\omega} \\
        \dot{q}_{\text{foot}} \\
        \frac{1}{m}\sum_{j} u_{\text{body},{j}} - g \\
        W(q_{\text{lin}}, q_{\text{foot}}, \omega, u_{\text{body}}) \\
        u_{\text{foot}}
    \end{array} \right]
\renewcommand{\arraystretch}{1}
\end{align}
where $J_{\omega} (q_{\text{ang}}) \in SO(3)$ is the linear mapping from angular velocity to the derivative of the orientation local parameterization, $m$ is the body mass, $g$ is the gravity vector, and the shorthand function $W(\cdot)$ maps the state and control to the angular acceleration,
\begin{align} \label{eq:ac:ang_acc_map}
    W(q_{\text{lin}}, &q_{\text{foot}}, \omega, u_{\text{body}}) = \nonumber\\ &M^{-1}\Big(R(q_{\text{ang}})^T\sum_j((q_{\text{foot},j} - q_{\text{lin}}) \times u_{\text{body},j} ) - \widehat{\omega} M \omega\Big)
\end{align}
where $M$ is the inertia matrix in the body frame and $R(q_{\text{ang}}) \in SO(3)$ maps vectors from the body frame to the world frame. The discrete time formulation of the dynamics in (\ref{eq:ac:legged_complex_dynamics}) can be obtained with a suitable integration scheme such as forward Euler.

Next, we define the constraints of the complex system $\mathcal{Z}^c$. Kinematic constraints for legged systems are generally functions of joint limits rather than body or foot variables. As a result, we add the joint information to the OCPs as slack variables and use them to define constraints. Let $\theta, \dot{\theta}, \tau \in \mathbb{R}^{n\cdot n_{j}}$ be the joint positions, velocities, and torques, respectively, where $n_j$ is the number of joints per leg. These slack variables are constrained,
\begin{subequations}
\label{eq:ac:slack_variables}
\begin{align}
    FK(q^{c},\theta) &= q_{\text{foot}} \label{eq:ac:slack_variables_fk} \\
    J(q^{c},\theta)[\dot q_{\text{lin}}^T, \omega^T, \dot \theta^T]^T &= \dot q_{\text{foot}} \label{eq:ac:slack_variables_fk_vel} \\
    \tau &= -J_{\theta}(q^{c},\theta)^T u_{\text{body}} \label{eq:ac:slack_variables_torque}
\end{align}
\end{subequations}
where $FK(q^{c},\theta)$ is the forward kinematics function of the system, $J(q^{c},\theta)$ is the leg Jacobian which relates motion of the body and joints to foot motion in the world frame, and $J_{\theta}$ are the columns of this matrix corresponding to joint motion. Note that \eqref{eq:ac:slack_variables_torque} omits joint dynamics under the massless leg assumption.  As discussed at the beginning of this section, these additional variables and constraints allow the optimization to determine joint kinematics which correspond to a given set of body and foot kinematics. This in turn enables satisfaction of kinematic limits without requiring their direct inclusion in the dynamical system model.

With these slack variables and their corresponding constraints in place, we can now state the constraints in the complex system,
\begin{subequations}
\label{eq:ac:legged_complex_constraints}
\begin{align}
    \theta_{\text{min}} \leq \theta& \leq \theta_{\text{max}} \\
    \dot{\theta}_{\text{min}} \leq \dot{\theta}& \leq \dot{\theta}_{\text{max}} \\
    \tau_{\text{min}} \leq \tau& \leq \tau_{\text{max}} \label{eq:ac:torque_limit}\\
    u_{\text{body},\text{min}} \leq u_{\text{body}}& \leq u_{\text{body},\text{max}} \label{eq:ac:grf_limit} \\
    u_{\text{body}}& \in \mathcal{FC} \label{eq:ac:friction}\\
    D u_{\text{body}}& = 0 \label{eq:ac:contact}\\
    \bar{D} \dot{q}_{\text{foot}}& = 0 \label{eq:ac:contact_position}\\
    -\tau_{\text{max}}\left(1 + \frac{\dot{\theta}}{\dot{\theta}_\text{max}}\right) \leq \tau& \leq  \tau_{\text{max}}\left(1 - \frac{\dot{\theta}}{\dot{\theta}_\text{max}}\right) \label{eq:ac:motor_model} \\
    h(q_{\text{foot}})& \geq 0 \label{eq:ac:ground_height}
\end{align}
\end{subequations}
where $(\cdot)_\text{min}$ and $(\cdot)_\text{max}$ represent variable bounds, 
\eqref{eq:ac:friction} enforces that the GRF at each foot lies within the linearized non-adhesive friction cone $\mathcal{FC}$,
\eqref{eq:ac:contact} and \eqref{eq:ac:contact_position} enforce a contact schedule with selection matrix $D$ and its bit-wise complement $\bar{D}$,
 \eqref{eq:ac:motor_model} enforces a linear motor model, and \eqref{eq:ac:ground_height} enforces non-penetration of the terrain via the ground clearance $h(q_{\text{foot}})$. Note that GRF bounds are retained to ensure non-adhesion and discourage abuse of kinematic singlarities. Together the constraints in \eqref{eq:ac:legged_complex_constraints} define the set $\mathcal{Z}^c_{f}$.

Lastly we must define the cost functions for the OCP. While this is still an active area of research for legged systems, a common approach is to extract a reference trajectory either from a higher-level motion planner or by integrating forward a desired body velocity, supplement it with task-space foot trajectories, then penalize any deviations \cite{norby2022quad, grandia2022perceptive, sleiman2021unified, mastalli2020crocoddyl}. Let those trajectories be described by $\bar{\mathbf{x}}^c, \bar{\mathbf{u}}^c$. Under this approach, the intermediate and terminal costs for the OCP of the complex system can be defined as,
\begin{subequations} \label{eq:ac:legged_complex_cost}
\begin{align}
    e^{c}_{x,k} &= x^c_{k} - \bar{x}^c_{k} \\
    e^{c}_{u,k} &= u^c_{k} - \bar{u}^c_{k} \\
    V_{I}^c(x^c_{k}, u^c_{k}, k) &= {e^{c}_{x,k}}^T Q_{I} e^{c}_{x,k} + {e^{c}_{u,k}}^T R e^{c}_{u,k} \\
    V^c_{T}(x^c_{k}, k) &= {e^{c}_{x,k}}^T Q_{T} e^{c}_{x,k}
\end{align}
\end{subequations}
where $e^{c}_{x,k}$ and $e^{c}_{u,k}$ are the respectively state and input errors from the time-varying reference, and where $Q_{I}, Q_{T}$, and $R$ are positive definite matrices. Note that while these matrices must be positive definite to yield provable stability, often assigning very low costs to particular components will yield behavior more consistent with online motion planners than tracking controllers (leveraging available degrees of freedom to minimize the overall cost).

\subsection{Definition of Simple Legged System}
The simple system represents the reduced-order model of the robot that uses only the body states and ignores the foot and joint states, as shown in Fig.~\ref{fig:ac:legged_systems}. This is a commonly used model reduction technique in the legged locomotion literature. The state of the simple system is defined as $x^s \in \mathbb{R}^{12}$ consisting of only the body position, orientation, and velocities, 
\begin{align} \label{eq:ac:simple_system_state}
    x^s = \left[\begin{array}{cccc}
         q_{\text{lin}}^T &
         q_{\text{ang}}^T &
         \dot{q}_{\text{lin}}^T &
         \omega^T
    \end{array} \right]^T
\end{align}
The control inputs $u \in \mathbb{R}^{3n}$ of the simple system consists of only the GRFs from the complex system, such that $u^s = u_{\text{body}}$. The dynamics in the simple system are the components of the complex dynamics corresponding to the simple system states defined as $f^s(x^s, u^s, k)$,
\begin{align} \label{eq:ac:legged_simple_dynamics}
    f^s(x^s,u^s, k) &=
\renewcommand{\arraystretch}{1.2}
    \left[ \begin{array}{c}
        \dot{q}_{\text{lin}} \\
        J_{\omega} (q_{\text{ang}}){\omega} \\
        \frac{1}{m}\sum_{j} u_{\text{body},{j}} - g \\
        W(q_{\text{lin}}, \bar{q}_{\text{foot},k}, \omega, u_{\text{body}})
    \end{array} \right]
\renewcommand{\arraystretch}{1}
\end{align}
Note that in the simple system, $\bar{q}_{\text{foot},k}$ is a parameter (not a decision variable), which is necessary for the angular acceleration update, and renders the dynamics time-varying. The constraints in the simple space are only constraint bounds on the input $u_{\text{body}}$, identical to equations \eqref{eq:ac:grf_limit}--\eqref{eq:ac:contact}.

\subsection{Relations Between Complex and Simple Legged Systems}
With these systems defined, we can now relate the two with the reductions $\psi_x, \psi_u$ and define our heuristic lifts $\psi^\dagger_x, \psi^\dagger_u$. The reductions select components to retain within the simple system,
\begin{subequations} \label{eq:ac:legged_reductions}
\begin{align}
    \psi_x(x^c) &= 
    \left[\begin{array}{cccccc}
        I_{3} & 0 & 0 & 0 & 0 & 0 \\
        0 & I_{3} & 0 & 0 & 0 & 0 \\
        0 & 0 & 0 & I_{3} & 0 & 0 \\
        0 & 0 & 0 & 0 & I_{3} & 0
    \end{array} \right] x^c = x^s \\
    \psi_u &= \left[ \begin{array}{cc}
        I_{3n} & 0  
    \end{array} \right] u^c = u^s
\end{align}
\end{subequations}
where $I_m$ is an $m \times m$ identity matrix and $0$ is a matrix of zeros of corresponding size.

As discussed in Section~\ref{sec:ac:theoretical_analysis}, a sensible choice for the heuristic lift $\psi^\dagger$ is a mapping which preserves the states in the simple system and fills in null space components with values from the reference $\bar{\mathbf{z}}^c$. If the reference is time-varying (as is the case for legged systems), this results in time-varying lifts,
\begin{subequations} \label{eq:ac:legged_lifts}
\begin{align}
    \psi^\dagger_x(x^s, k) = &
    \left[\begin{array}{cccc}
        I_3 & 0 & 0 & 0 \\
        0 & I_3 & 0 & 0 \\
        0 & 0 & 0 & 0 \\
        0 & 0 & I_3 & 0 \\
        0 & 0 & 0 & I_3 \\
        0 & 0 & 0 & 0 \\
    \end{array} \right] x^s + \nonumber\\ &
    \left[\begin{array}{cccccc}
        0 & 0 & 0 & 0 & 0 & 0 \\
        0 & 0 & 0 & 0 & 0 & 0 \\
        0 & 0 & I_{3n} & 0 & 0 & 0 \\
        0 & 0 & 0 & 0 & 0 & 0 \\
        0 & 0 & 0 & 0 & 0 & 0 \\
        0 & 0 & 0 & 0 & 0 & I_{3n} \\
    \end{array} \right] \bar{x}^c_{k} \\
    \psi^\dagger_u(u^s, k) &= \left[\begin{array}{c}
        I_{3n} \\
        0
    \end{array} \right] u^s +
    \left[\begin{array}{cc}
        0 & 0\\
        0 & I_{3n}
    \end{array} \right] \bar{u}^c_{k}
\end{align}
\end{subequations}
Note that using dynamically consistent reference trajectories when defining this lift increases the likelihood that simplicity set conditions will be met, since they require that states on the simple manifold will remain there. Also since we will often want to compute equivalent joint data to accompany these null space variables, these functions can be post-processed with inverse kinematics and inverse dynamics to compute the values of $\theta$, $\dot{\theta}$, and $\tau$ that correspond to a particular value of $z^c_{k}$.

We can now state the conditions for admissibility of this system, which require that the variables in the null space of $\psi$ lie on the reference and are feasible.
\begin{lemma} \label{th:ac:legged_conditions}
For a given state-control pair $z^l_i$ which lie on a trajectory of the system defined in \eqref{eq:ac:legged_complex_dynamics}, a reduction at $i \in [1,\dots,N-1]$ is admissible (Definition~\ref{def:ac:admissibility}) if $q_{\textnormal{foot}} = \bar{q}_{\textnormal{foot}}$, $\dot{q}_{\textnormal{foot}} = \dot{\bar{q}}_{\textnormal{foot}}$, $u_{\textnormal{foot}} = \bar{u}_{\textnormal{foot}}$, and $z^l_i$ satisfies the constraints in \eqref{eq:ac:legged_complex_constraints}.
\end{lemma}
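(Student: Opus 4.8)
The plan is to verify the four conditions of Definition~\ref{def:ac:admissibility} one at a time for the legged system, leaning throughout on the block structure of \eqref{eq:ac:legged_complex_dynamics}: the foot position, foot velocity, and $u_{\text{foot}}$ are decoupled from the body dynamics except through the moment term $W$, and the feet themselves evolve as a double integrator driven by $u_{\text{foot}}$. Condition~\eqref{eq:ac:admissibility_terminal_states} is immediate from the hypothesis $i \in [1,\dots,N-1]$. For the feasibility condition~\eqref{eq:ac:admissibility_feasible}, the key observation is that $\psi$ discards exactly the foot states and $u_{\text{foot}}$, which $\psi^\dagger$ then refills with the reference values $\bar{q}_{\text{foot}}, \dot{\bar{q}}_{\text{foot}}, \bar{u}_{\text{foot}}$; since the hypotheses set precisely those components of $z^l_i$ equal to the reference, $\psi^\dagger\circ\psi(z^l_i)=z^l_i$, and feasibility reduces to the assumption that $z^l_i$ satisfies \eqref{eq:ac:legged_complex_constraints}, i.e.\ $z^l_i\in\mathcal{Z}^c$.

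The heart of the argument, and the step I expect to be the main obstacle, is the exact-anchoring condition~\eqref{eq:ac:admissibility_anchor}, namely $\psi^\dagger_x\circ f^s\circ\psi(z^l_i)=f^c(z^l_i)$. I would compare the two complex states block by block. For the body block, the only entry of $f^c$ depending on $q_{\text{foot}}$ is $W(q_{\text{lin}},q_{\text{foot}},\omega,u_{\text{body}})$; because $f^s$ is the restriction of $f^c$ to the body states and can only evaluate $W$ at the retained (reference) foot position $\bar{q}_{\text{foot}}$, the hypothesis $q_{\text{foot}}=\bar{q}_{\text{foot}}$ makes the body components of $f^s\circ\psi(z^l_i)$ and of $f^c(z^l_i)$ identical. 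For the foot block, $\psi^\dagger_x$ inserts the reference successor foot position and velocity, while $f^c$ propagates $q_{\text{foot}},\dot{q}_{\text{foot}}$ forward using $\dot{q}_{\text{foot}}$ and $u_{\text{foot}}$; since the hypotheses place $q_{\text{foot}},\dot{q}_{\text{foot}},u_{\text{foot}}$ all on the reference and the reference is dynamically consistent ($\bar{x}^c_{k+1}=f^c(\bar{x}^c_k,\bar{u}^c_k)$), the complex foot propagation produces exactly the reference successor foot state that the lift inserts. The two sides therefore agree on every block.

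For the entering condition~\eqref{eq:ac:admissibility_enter_manifold}, $\psi^\dagger_x\circ\psi\circ f^c(z^l_{i-1})=f^c(z^l_{i-1})$, I would use that $z^l_i$ lies on a complex trajectory, so $f^c(z^l_{i-1})=x^l_i$, together with the hypothesis that the foot position and velocity of $x^l_i$ equal the reference. Since the composition $\psi^\dagger_x\circ\psi$ overwrites only those foot components with their reference values, it acts as the identity on $f^c(z^l_{i-1})$, giving the equality. The chief delicacy throughout is the bookkeeping of the implicit reference foot position inside the simple moment computation and, for any time-varying extension, ensuring the reference values inserted by $\psi^\dagger$ are indexed at the successor time so that dynamic consistency still yields the match; I would present the argument for the constant reference assumed in Section~\ref{sec:ac:theoretical_analysis} and note that the time-varying case is identical with appropriate indexing.
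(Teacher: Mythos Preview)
Your proposal is correct and follows essentially the same approach as the paper's proof: verify the four admissibility conditions in turn, reduce \eqref{eq:ac:admissibility_feasible} and \eqref{eq:ac:admissibility_enter_manifold} to the identity $\psi^\dagger\circ\psi(z^l_i)=z^l_i$ that the foot-on-reference hypotheses deliver, and establish \eqref{eq:ac:admissibility_anchor} by a block-by-block comparison in which the only nontrivial body term is $W$ (matched via $q_{\text{foot}}=\bar{q}_{\text{foot}}$) and the foot block is matched via the reference. The only cosmetic difference is that the paper carries out the anchoring check directly on the continuous-time vector field \eqref{eq:ac:legged_complex_dynamics}, so the foot components match immediately from $\dot{q}_{\text{foot}}=\dot{\bar{q}}_{\text{foot}}$ and $u_{\text{foot}}=\bar{u}_{\text{foot}}$ without invoking dynamic consistency of the reference as you do in your discrete-time phrasing.
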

\begin{proof}
See Appendix~\ref{sec:ac:legged_conditions_proof}.
\end{proof}

\subsection{ACMPC Algorithm Application}

We can now restate the major components of the ACMPC algorithm in the context of this example, specifically the ACOCP and the simplicity set update.

\subsubsection{Legged System ACOCP}

The components needed to solve the adaptive complexity optimal control problem defined in \eqref{eq:ac:adaptive_ocp} are the dynamics, constraints, costs, and terminal conditions.

The adaptive dynamics $f^a$ match the form of those specified in \eqref{eq:ac:adaptive_system_dynamics}, but leverage the equations defined above, specifically the complex dynamics $f^c$ in \eqref{eq:ac:legged_complex_dynamics}, the simple dynamics $f^s$ in \eqref{eq:ac:legged_simple_dynamics}, the reduction $\psi$ in \eqref{eq:ac:legged_reductions}, and the heuristic lift $\psi^\dagger$ in \eqref{eq:ac:legged_lifts}.

These equations together form the optimization constraint \eqref{eq:ac:adaptive_ocp:dynamics}, which is added in a piece-wise manner to each finite element in the ACOCP. To maintain admissibility properties, we must also enforce that complex-to-simple system transitions always result in successor states that reside on the simple manifold. This can be done by lifting the transition,
\begin{align} \label{eq:ac:legged_adaptive_system_dynamics_reduction_handling}
\psi^\dagger_{x} (x^{a}_{i+1}) &= f^c(z^a_{i}) & i \notin S_k,\, i+1 \in S_k
\end{align}
Practically speaking, this encodes the null space of $\psi$ as optimization parameters (i.e.~$\bar{q}_{\textnormal{foot}}$) rather than decision variables (i.e.~$q_{\textnormal{foot}}$) when evaluating the dynamics constraint defined by \eqref{eq:ac:legged_complex_dynamics}, which effectively requires the control input to drive the state exactly onto the simple manifold.

Constraints are defined based on the simplicity set as described in \eqref{eq:ac:adaptive_feasible_set}, where $\mathcal{Z}^c_f$ is the set of all state-control pairs that satisfy \eqref{eq:ac:slack_variables} and \eqref{eq:ac:legged_complex_constraints}, and $\mathcal{Z}^s_f$ is the set of pairs that satisfy \eqref{eq:ac:grf_limit}-\eqref{eq:ac:contact}. These constraints are then applied piece-wise as in \eqref{eq:ac:adaptive_ocp:constraints}.

Note that this means complicated joint constraint functions are only evaluated for finite elements in the complex system.

Costs are likewise defined as described in \eqref{eq:ac:adaptive_cost} and applied piece-wise as in \eqref{eq:ac:adaptive_ocp:cost}, with $V_{I}^a$ computed with the cost function $V_{I}^c$ defined in \eqref{eq:ac:legged_complex_cost} and the heuristic lift $\psi^\dagger$ in \eqref{eq:ac:legged_lifts}.

The initial condition constraint in \eqref{eq:ac:adaptive_ocp:initial_condition} is applied to the first finite element which is guaranteed to be in the complex space. There are many options for the terminal set constraint in \eqref{eq:ac:adaptive_ocp:terminal_constraint}. In the absence of a terminal control law $u_T$ -- which has no known form for legged systems -- common approaches include constraining the final state to lie on the reference, or increasing the terminal cost $V^c_T$ to strongly penalize solutions which do not drive the system to the reference.

These equations fully specify all the components in \eqref{eq:ac:adaptive_ocp}, which can be evaluated in a piece-wise manner by an off-the-shelf NLP solver. This work transcribes the NLP using direct methods and solves with IPOPT \cite{wachter2006implementation}, but other works have solved similar mixed-complexity problems with indirect, DDP-based methods \cite{li2021model} . Note that the mappings $\psi$ and $\psi^\dagger$ are only needed to evaluate the dynamics constraint when transitioning onto or off of the simple manifold or when evaluating the cost for the simple system.

\subsubsection{Legged System Simplicity Set Update}
Once the ACOCP in \eqref{eq:ac:adaptive_ocp} is solved, the next step in Algorithm~\ref{alg:ac:acmpc} is to evaluate the admissibility conditions. Conditions \eqref{eq:ac:admissibility_anchor} and \eqref{eq:ac:admissibility_enter_manifold} which regulate transitions onto and off of the manifold are automatically enforced via the heuristic values in the dynamics constraint as described in \eqref{eq:ac:legged_adaptive_system_dynamics_reduction_handling}, so the only condition that must be checked is the feasibility condition in \eqref{eq:ac:admissibility_feasible}. This is done by first obtaining $\psi^\dagger \circ \psi (\mathbf{z}^{*l})$, which uses the maps in \eqref{eq:ac:legged_reductions} and \eqref{eq:ac:legged_lifts} to project the foot states and controls to the heuristic (along with joint states and controls through inverse kinematics and dynamics). These state-control pairs can then be checked for feasibility in the complex system using \eqref{eq:ac:legged_complex_constraints}. If any constraints are violated at a time $i$, then both $i$ and $i-1$ are removed from the simplicity set. Removing both knot points ensures that a full finite element of complexity will be added to the problem, which enables states to leave the simple manifold (see Fig.~\ref{fig:ac:S_illustration} for intuition behind this principle). Otherwise, all conditions in \eqref{eq:ac:admissibility_conditions} are met and $i$ can be added to the simplicity set, at which point the control is executed and the algorithm repeats.

\section{Experimental Evaluation} \label{sec:ac:experiments}

This section presents experiments deploying adaptive complexity MPC on a simulated quadrupedal robot to quantify its performance and benchmark against other formulations of MPC. In particular we compare against three other model configurations -- ``Simple'' and ``Complex'' respectively employ only the simple and complex model dynamics and constraints, and ``Mixed'' employs the complex model for the first one-quarter of the horizon and the simple model for remainder, similar to \cite{li2021model}. The ``Adaptive'' configuration follows Algorithm~\ref{alg:ac:acmpc} with $S^f = \{2,..., N^a-2\}$ so that the first and last two knot points -- and thus the first and last finite elements -- are always complex. This slight restriction of the fixed simplicity set is added to improve robustness to modeling errors as discussed in Sec.~\ref{sec:ac:algorithm_overview:S_convergence}. It also ensures that new elements entering the horizon will be complex, which is needed to meet Assumption~\ref{as:ac:S_admissible} in lieu of a hard-to-find terminal controller $u_{T}$ that meets the conditions in Assumption~\ref{as:ac:original_mpc_conditions}.

In each experiment, the robot is provided a reference trajectory which defines the required task over the given environment. We use three environments to evaluate the algorithm performance in the presence of varying constraints. The ``Acceleration'' environment requires the robot to accelerate and decelerate over 7.5 body lengths (3 m) of flat terrain as quickly as possible to measure the ability to stabilize the system during agile motions. The ``Step'' environment consists of a one-half leg length (20 cm) step which requires navigating state constraints such as joint limits to traverse. The ``Gap'' environment consists of a body-length (40 cm) gap which the robot must leap across, testing the controller's ability to handle these kinematic constraints in addition to input constraints such as actuator limits and friction.

Performance is quantified in each experiment by measuring success rate, mean solve time, slow solve rate (\% of solves that take longer than one timestep), and a task-specific metric. Success is defined as reaching the goal within a one-half body length (20 cm) and zero velocity without failing to solve the NLP. Data is recorded from the instant the first NLP is solved until either the success or failure criteria are met, and only included in metrics if successful. In the Acceleration experiment, the task-specific metric is average top speed achievable with a 100 \% success rate over ten trials, which is determined by iteratively decreasing the reference duration until the robot was unable to maintain a 100 \% success rate. The task-specific metric for the Step and Gap experiments is mean control error, which is the norm of the error between the sum of ground reaction forces and the weight of the robot, averaged over the successful trials. For the Step and Gap environments, the robot is initialized to a random position within one-half body length in the transverse plane from a nominal position. In each configuration, solve time corresponds to the time spent in the NLP solver to isolate the effects on the underlying OCP, although the time required to evaluate the conditions in \eqref{eq:ac:admissibility_conditions} is negligible -- our un-optimized implementation takes around 1 ms in the worst case.

In each environment a fixed reference trajectory for the body is provided from the global planner described in \cite{norby2020fast}, and the reference contact sequence and foot trajectories are chosen online before each MPC iteration with a Raibert-like heuristic \cite{raibert1986legged} and a threshold on traversability, as described in \cite{norby2022quad}. Note that these references could have been provided as twist inputs integrated forwards in time from the current state -- we chose to provide fixed references so they would remain invariant across all trials. We fix the prediction horizon at two gait cycles ($N^c = N^a = 24$) with a timestep of $\Delta t = 0.03$ s and measure the relative solve times as discussed in Sec.~\ref{sec:ac:basin}, although future work could implement an adaptive horizon approach to keep solve time fixed. For each solve, we use the state provided by the simulator as the initial state.

Once the reference and state information have been obtained, we construct the NLP with the appropriate complexity structure and solve it with IPOPT \cite{wachter2006implementation}. We configure IPOPT to enable warm start initialization and provide it the primal and dual variables from the prior solve (appropriately shifted) for rapid convergence. In the event that complexity must be unexpectedly inserted in the middle of the horizon (discussed in more detail in Sec.~\ref{sec:ac:discussion}), warm starting is disabled and nominal values are used for the initial guess of the newly inserted null-space variables. Once the problem is solved, the MPC control output is then mapped from ground reaction forces to joint torques via the Jacobian-transpose method, and the resulting swing foot trajectories are tracked with PD control. See \cite{norby2022quad} for more details on the implementation of the low-level controller. All experiments were performed using Gazebo 9 with the ODE physics engine, and all processes were executed on a machine running Ubuntu 18.04 with an Intel Core i7-12700K CPU at 4.9 GHz and with 64 GB of RAM. The motor model described in (\ref{eq:ac:motor_model}) is implemented in the Gazebo simulation to enable a more realistic execution. The Acceleration environment is simulated in real-time to measure the effect of solve time on stability. The simulations were slowed down by a factor of 2x for the Step environment and 5x for the Gap environment (with a maximum solve time of $4\Delta t = 0.12$~s) since resolving the constraints in these tasks are still computationally intensive.

\subsection{Acceleration Environment}
\begin{figure*}[!p]
\centering
\subfloat[1.0\textwidth][The Acceleration environment requires the robot to rapidly move forwards 3 m and come to a rest. Snapshots are equally distributed in time, with increasing opacity corresponding to progression forwards in time.]{\includegraphics[width=1.0\textwidth]{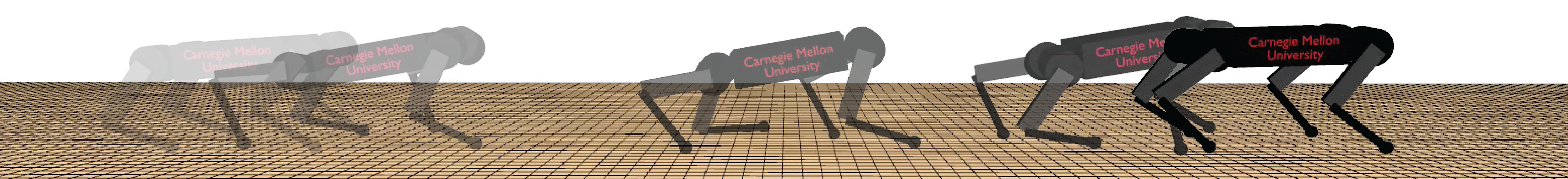}\label{fig:ac:acceleration_sequence}}
\quad
\vskip1em
\subfloat[0.485\textwidth][Acceleration experiment position and velocity trajectories show that the additional computation required by the Complex configuration significantly reduces its performance. Each curve corresponds to one trial at the maximum feasible commanded acceleration.]{\includegraphics[width=0.485\textwidth]{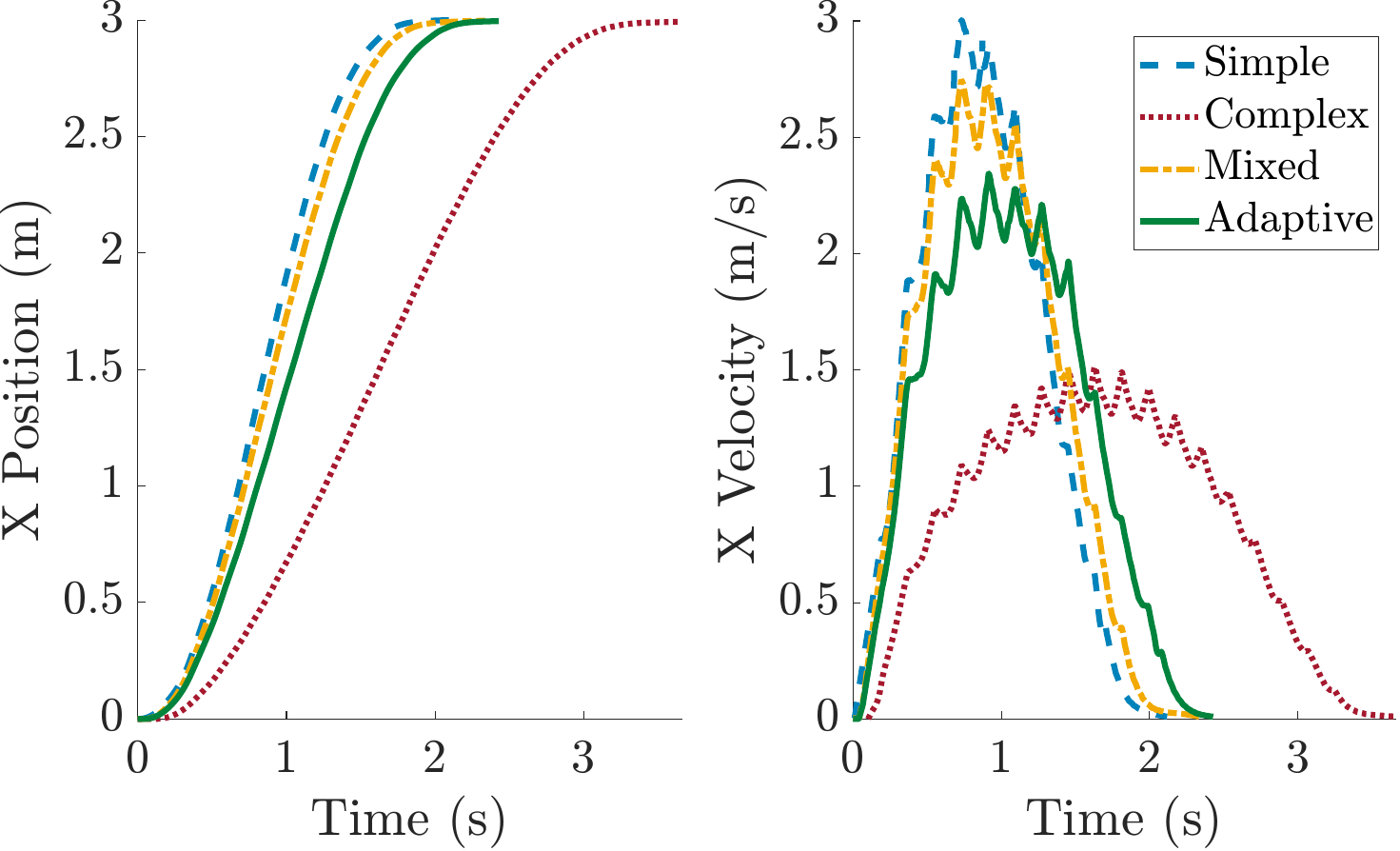}\label{fig:ac:acceleration_experimental_data}}
\hfill
\subfloat[0.485\textwidth][Acceleration experiment solve times. Only the Adaptive configuration changes with problem difficulty as joint constraints are activated and deactivated. Chatter at the end corresponds to entering full support phase before coming to rest.]{\includegraphics[width=0.485\textwidth]{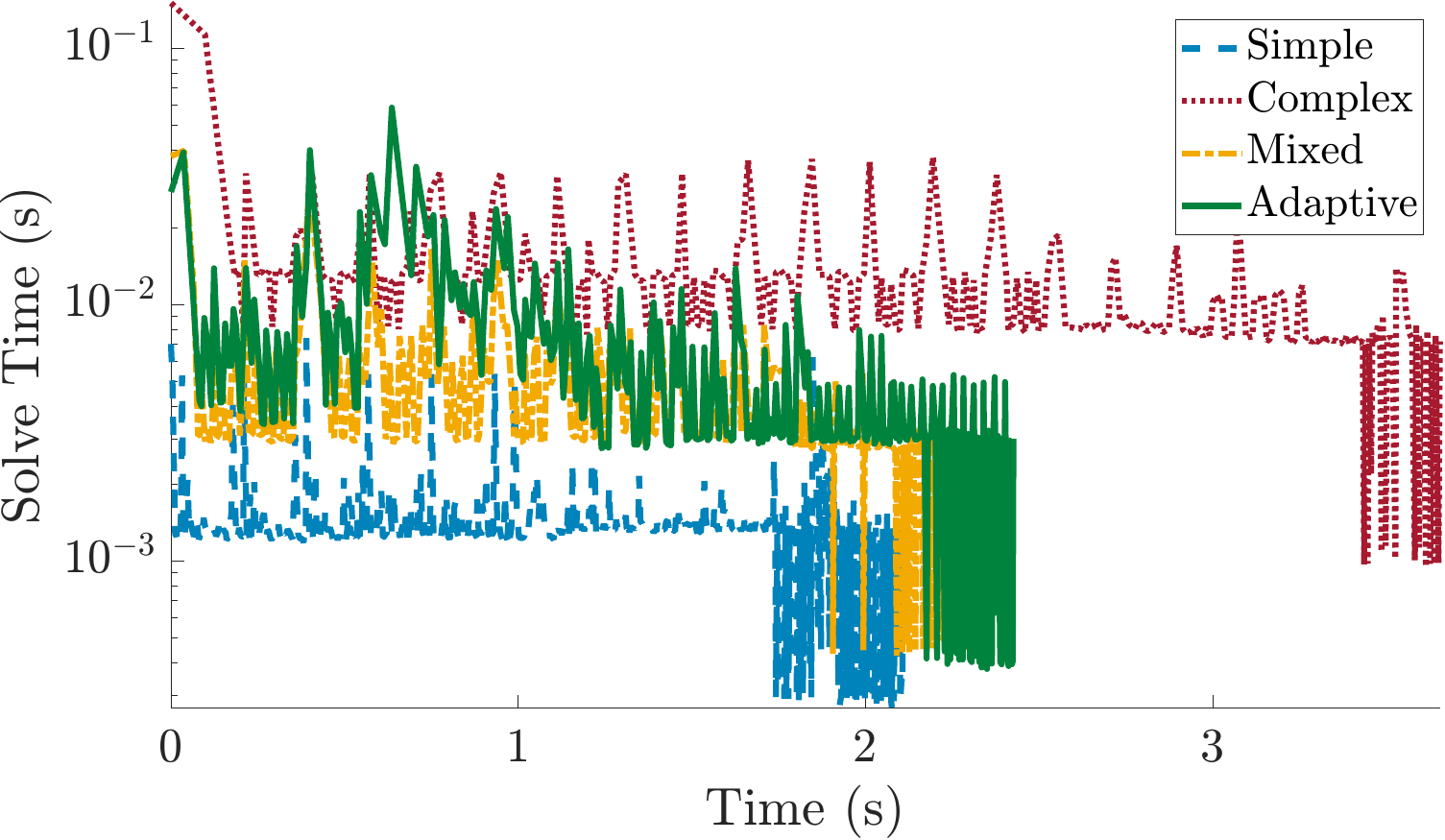}\label{fig:ac:flat_solve_time}}
\quad
\vskip1em
\subfloat[0.485\textwidth][Acceleration experiment simplicity set size, as a percentage of the prediction horizon. The Adaptive configuration shrinks the simplicity set to handle joint constraints but later expands it when feasible.]{\includegraphics[width=0.485\textwidth]{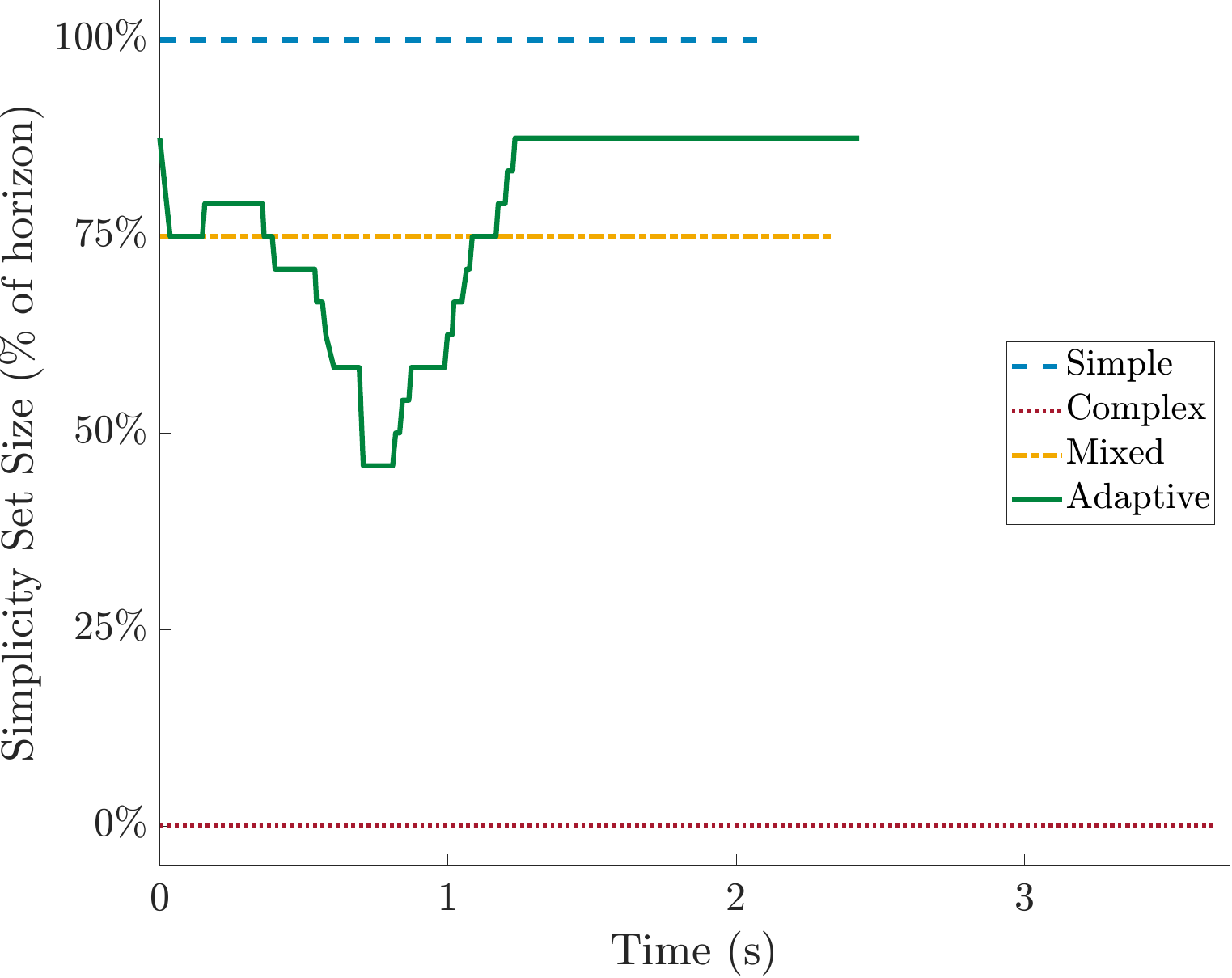}\label{fig:ac:flat_simple_percentage}}
\hfill
\subfloat[0.485\textwidth][Acceleration experiment prediction horizons. Horizons at each time are indicated by horizontal slices of finite elements (dots), where dot color indicates model complexity. Vertical bands correspond to instances where joint constraints (torque or velocity) require additional complexity. ]{\includegraphics[width=0.485\textwidth]{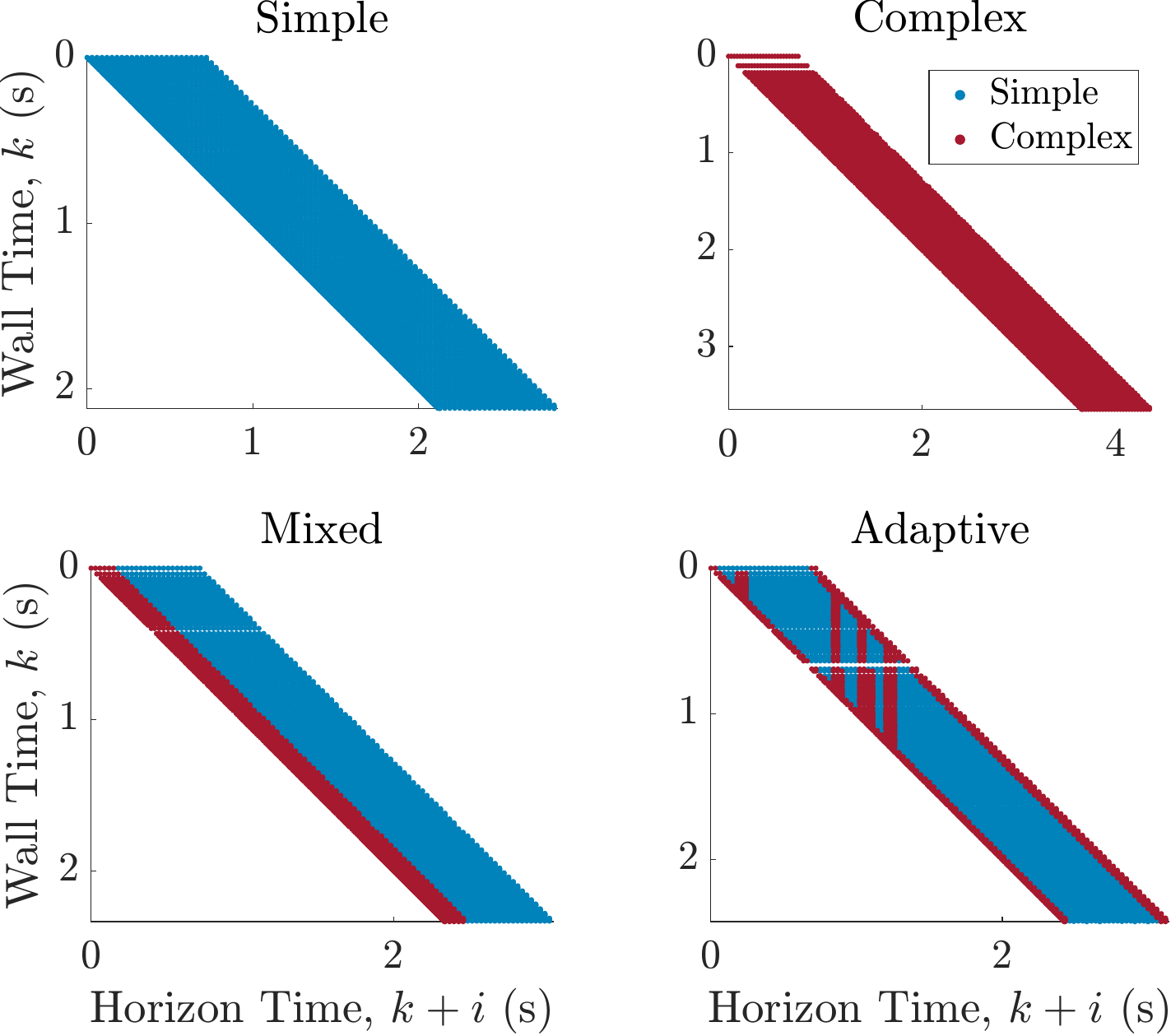}\label{fig:ac:flat_prediction_horizon}}
\quad
\vskip1em
\subfloat[0.485\textwidth][Experimental data for the Acceleration experiment. Bold indicates best observed performance.]{
\renewcommand{\arraystretch}{1.3}
\begin{tabular}{c c c c c}
\hline
Config & Success Rate & \shortstack{Mean Solve \\ Time (ms)} & \shortstack{Slow Solve \\ Rate (\%)}  & \shortstack{Max Velocity \\ (m/s)} \\
\hline
Simple & 10/10 & \textbf{1.2} & \textbf{0.0} & \textbf{3.0} \\
Complex & 10/10 & 11 & 7.4 & 1.5 \\
Mixed & 10/10 & 2.9 & 0.80 & 2.7 \\
Adaptive & 10/10 & 4.0 & 2.1 & 2.3 \\
\hline
\end{tabular}
\renewcommand{\arraystretch}{1}
\label{tab:ac:acceleration_experimental_data}
}
\caption[Data for Acceleration experiment]{Data for Acceleration experiment. The computational efficiency of the Simple configuration permits aggressive commands despite model reductions. Both Adaptive and Mixed outperform Complex because they retain some efficiency.}
\label{fig:ac:acceleration}
\end{figure*}

A series of snapshots of the Adaptive configuration performing the Acceleration task are shown in Fig.~\ref{fig:ac:acceleration_sequence}. Since the peak acceleration of the system occurs at the beginning and end of the trajectory, rapidly converging on a feasible solution to the OCP is essential. Crucially, performing this task does not require exact knowledge of the joint constraints and thus computational efficiency is key.

Results for each configuration are shown in Fig.~\ref{fig:ac:acceleration}, with state trajectories of candidate trials in Fig.~\ref{fig:ac:acceleration_experimental_data}, solve time data in Fig.~\ref{fig:ac:flat_solve_time}, simplicity set data in Figs.~\ref{fig:ac:flat_simple_percentage} and \ref{fig:ac:flat_prediction_horizon}, and quantitative results in Fig.~\ref{tab:ac:acceleration_experimental_data}. The Simple configuration exhibits the best performance with a 97\% increase in top speed over Complex. Mixed performs next best at an 80\% increase in top speed, followed by Adaptive at a 55\% increase in top speed. These reflect the relative complexity of each configuration -- since the Complex system must reason about extraneous constraints over the entire horizon, it takes longer to solve the problem -- shown in Fig.~\ref{fig:ac:flat_solve_time} -- and is thus less capable of stabilizing high-acceleration behaviors. The Simple configuration conversely excels since it is solving a reduced problem. The Mixed and Adaptive systems consist mostly of simple finite elements and thus retain this benefit, although the Adaptive configuration performance is slightly degraded since more complex elements are added at the beginning of the behavior during periods of high joint torque and velocity, as shown in Fig.~\ref{fig:ac:flat_prediction_horizon}. These results support Hypothesis~\ref{hyp:ac:controller_performance} which states that the reducing the model yields improved locomotion performance through more efficient computation.

A baseline version of this experiment was run to remove the effect of solve time on locomotion stability. In this experiment, the simulation update rate was slowed down by a factor of 20, and a maximum effective planning rate of 100 Hz was enforced such that regardless of solve time, new solutions were published at that rate. With these modifications, the maximum velocities were 3.2 m/s for Simple, 2.7 m/s for Complex, 2.9 m/s for Mixed, and 2.7 m/s for Adaptive. All configurations were within 15\%, and constrained by the reach of the swing legs. Notably, the Simple configuration still performed best because it could violate these kinematic constraints in ways that the more complex configurations could not. These effects are discussed in greater depth in Sec.~\ref{sec:ac:discussion}.

\subsection{Step Environment}

\begin{figure*}[!p]
\centering
\subfloat[1.0\textwidth][The Step environment requires navigating kinematic constraints. Snapshots are shown of the trajectories under the Adaptive configuration.]{\includegraphics[width=1.0\textwidth]{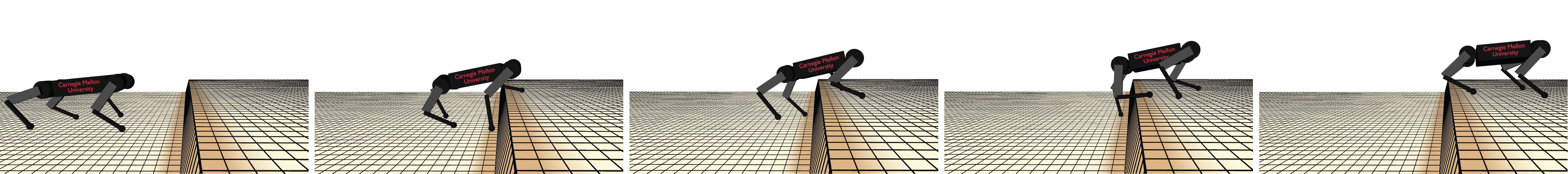}\label{fig:ac:step_20cm_sequence}}
\quad
\vskip1em
\subfloat[0.485\textwidth][Step environment state trajectories. Complex (red) and Adaptive (green) show changes to pitch and yaw (indicated by the arrows) before the Simple (blue) and Mixed (gold) configurations.]{\includegraphics[width=0.485\textwidth]{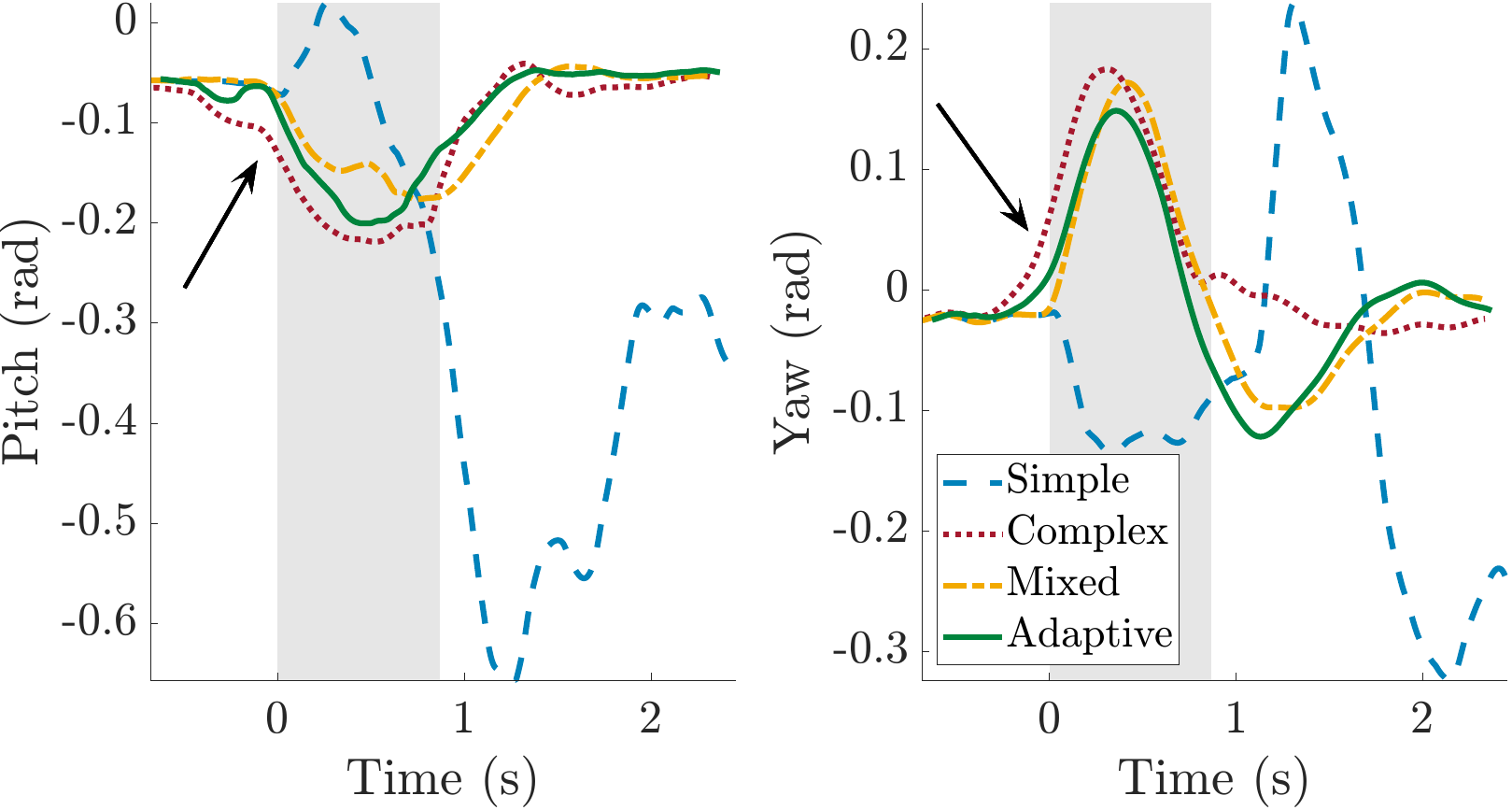}\label{fig:ac:step_20cm_linear_states}}
\hfill
\subfloat[0.485\textwidth][Step environment solve times. The increase four seconds into the behavior corresponds to navigating the step.]{\includegraphics[width=0.485\textwidth]{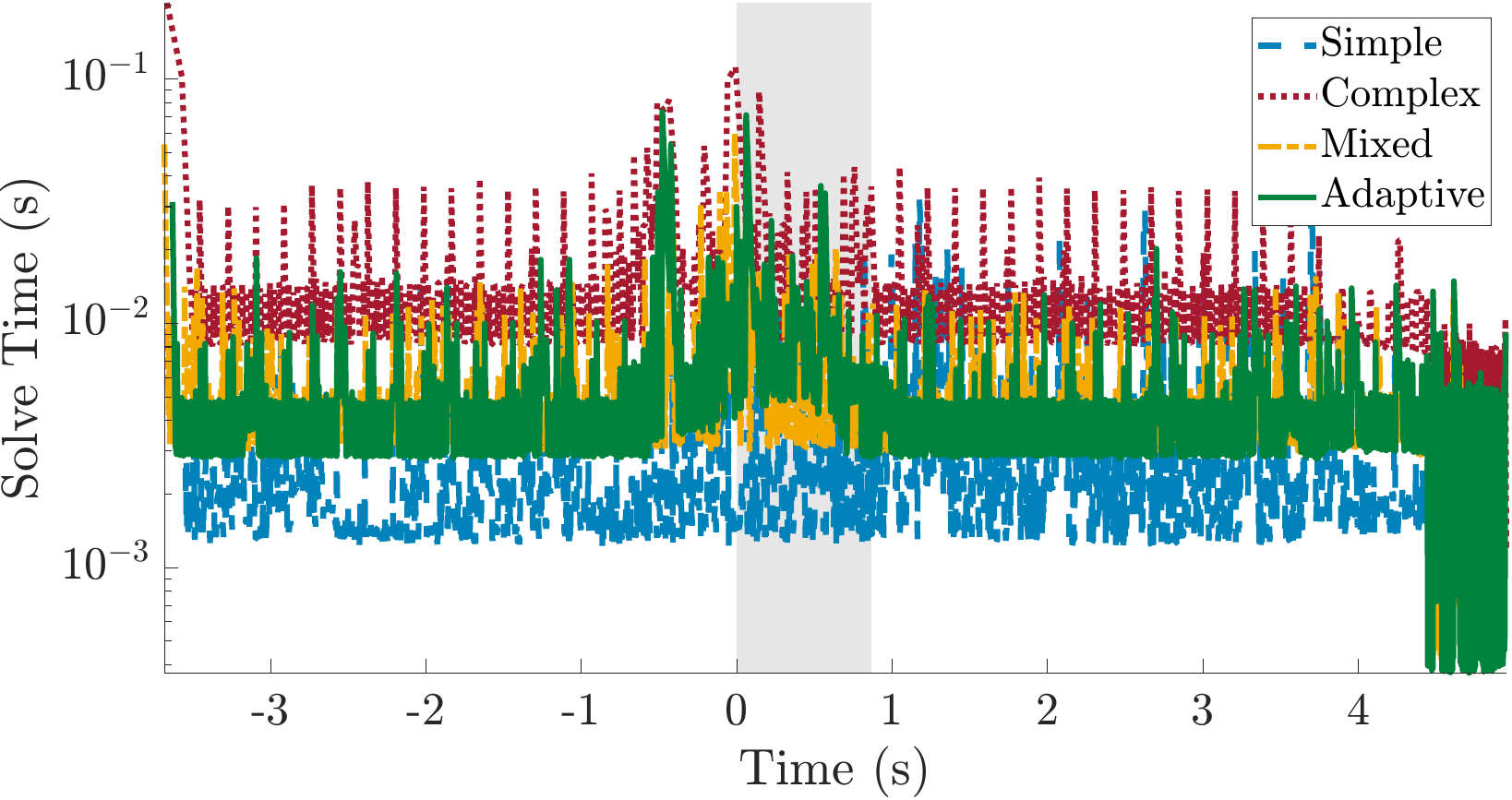}\label{fig:ac:step_20cm_solve_time}}
\quad
\vskip1em
\subfloat[0.485\textwidth][Step environment horizon simplicity set size. The Adaptive configuration is able to simplify the problem for most of the behavior, and quickly recover these simplifications once the difficult behavior is resolved.]{\includegraphics[width=0.485\textwidth]{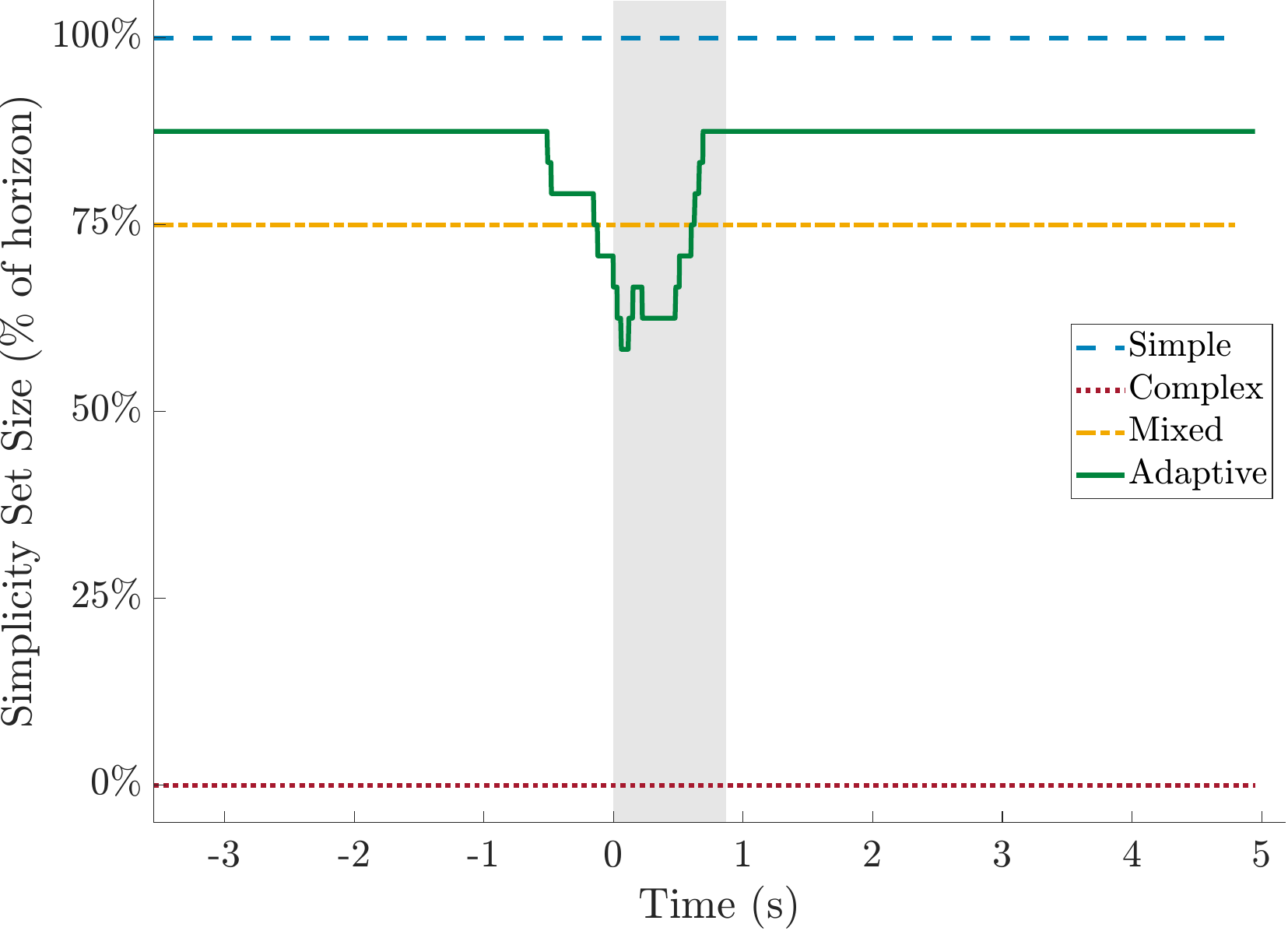}\label{fig:ac:step_20cm_simple_percentage}}
\hfill
\subfloat[0.485\textwidth][Step environment prediction horizons. Horizons at each time are indicated by horizontal slices of finite elements (dots), where dot color indicates model complexity. The vertical bands of increased complexity correspond to instances where joints in the front and back legs approach singularities. ]{\includegraphics[width=0.485\textwidth]{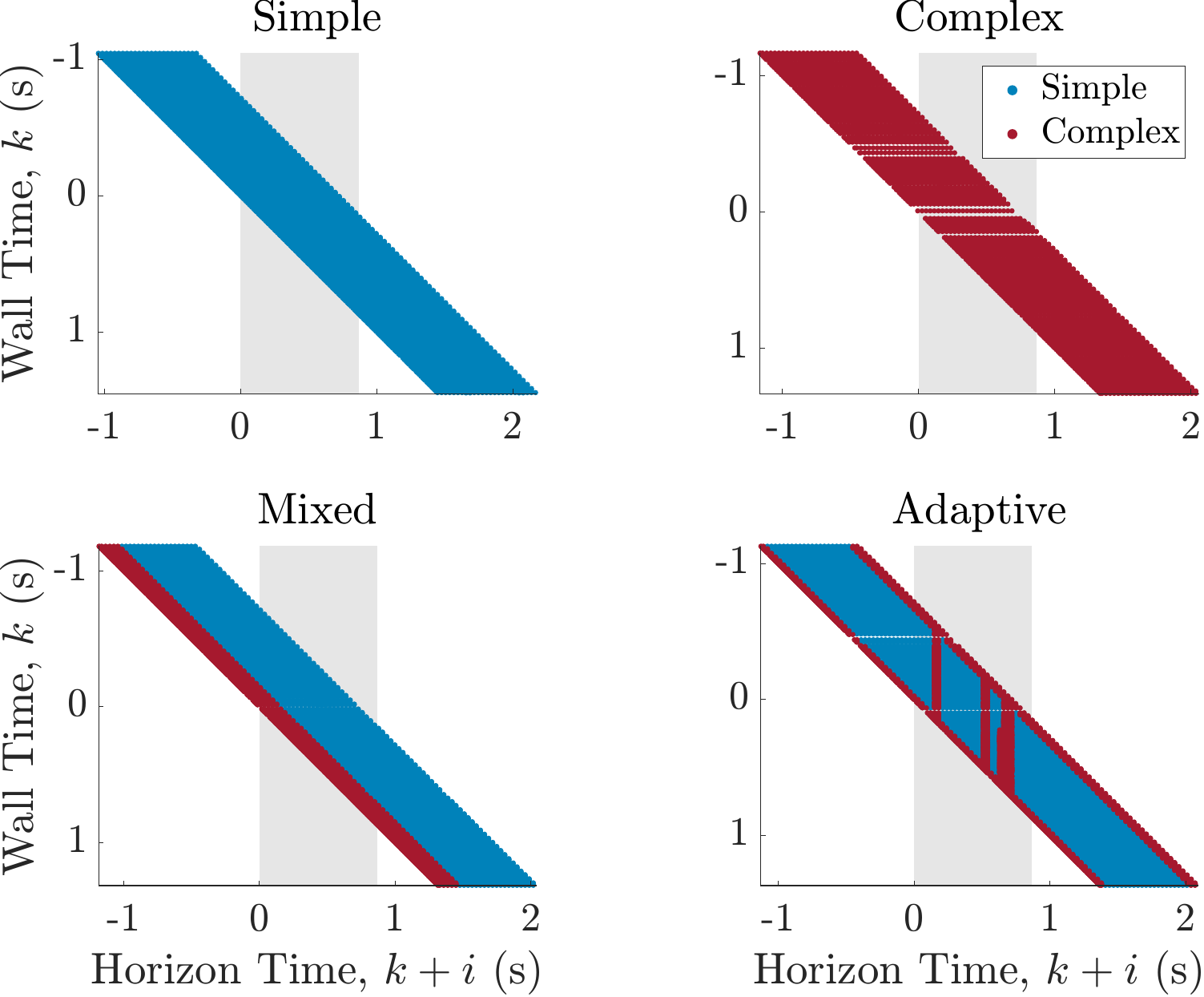}\label{fig:ac:step_20cm_prediction_horizon}}
\quad
\vskip1em
\subfloat[0.485\textwidth][Experimental data for the Step environment.]{
\renewcommand{\arraystretch}{1.3}
\begin{tabular}{c c c c c}
\hline
Config & Success Rate & \shortstack{Mean Solve \\ Time (ms)} & \shortstack{Slow Solve \\ Rate (\%)} & \shortstack{Mean \\ Control (N)}  \\
\hline
Simple & \textbf{10/10} & \textbf{2.2} & \textbf{0.039} & 43  \\
Complex & 9/10 & 11 & 16 & \textbf{16}\\
Mixed & 7/10 & 4.3 & 3.9 & 22\\
Adaptive & \textbf{10/10} & 4.3 & 2.6 & 18 \\
\hline
\end{tabular}
\renewcommand{\arraystretch}{1}
\label{tab:ac:step_experimental_data}
}
\caption[Data for Step environment]{Data for Step environment. The Adaptive configuration is able to leverage admissible reductions for the majority of the behavior while retaining the ability to react quickly to the kinematic constraints required to navigate the step. Gray shading indicates the step-climbing period in the reference trajectory, with $k = 0$ corresponding to the beginning of this period.}
\label{fig:ac:step_20cm}
\end{figure*}

A series of snapshots of the Adaptive configuration navigating the Step environment are shown in the top row of Fig.~\ref{fig:ac:step_20cm_sequence}. The key constraints which must be resolved are the joint limits of the robot and the height of the toe, as the system must ensure the toe clears the step while also ensuring the rear legs can still reach the terrain for support.

Results from each MPC configuration are shown in Fig.~\ref{fig:ac:step_20cm} and quantitatively summarized in Table~\ref{tab:ac:step_experimental_data}. The state trajectories are shown in Fig.~\ref{fig:ac:step_20cm_linear_states}. All of the configurations are able to complete the task and reach the goal at least some of the time, although the manner in which the task is completed differs. The lack of constraint information in the Simple configuration and the myopia of the Mixed configuration often result in foot scuffs when crossing the step, which nearly destabilize the system and require large control actions to correct. The Complex and Adaptive configurations are able to identify potential constraint violations caused by the step sooner and react by increasing the walking height and rotating the body to more safely navigate the step (arrows in Fig.~\ref{fig:ac:step_20cm_linear_states}).

The computational effort of each configuration is shown in Fig.~\ref{fig:ac:step_20cm_solve_time}. Unsurprisingly the Simple configuration is consistently the fastest since its model is the most sparse and it is unaware of the nonlinear joint kinematic constraints, while the Complex configuration consistently takes the longest, especially to find an initial solution and also once it sees the step. Both the Mixed and Adaptive configurations have intermediate nominal solve times, but differ when the step approaches. The Adaptive configuration immediately takes much longer to solve the problem as it needs to reason about this new information, but once a valid solution is found it settles back to its nominal solve time as more complex elements are converted back to simple ones. Meanwhile the Mixed formulation only increases in solve time when the step is within its shorter window of complex elements, and planning the large control forces required to navigate the step on such short notice causes a large and sustained increase in solve time.

The degree of horizon simplification is shown in Fig.~\ref{fig:ac:step_20cm_simple_percentage}. While the fixed-complexity configurations remain uniform for the entire task, the Adaptive configuration clearly leverages additional complexity when encountering the step. However, even in the worst case around half of the horizon remains simplified, the effects of which are seen in the lower solve times compared to the Complex configuration. These horizons are visually shown in Fig.~\ref{fig:ac:step_20cm_prediction_horizon}, which shows the prediction horizon at each time with complex and simple elements distinguished by different colors. The Adaptive configuration clearly reacts to the step -- as elements in the terminal region require leaving the simple manifold, adaptive complexity MPC fills in new elements with additional complexity. When the terminal region returns to the simple manifold, the algorithm recognizes this and allows simple elements back into the horizon. Together, these results support Hypothesis~\ref{hyp:ac:simple_frequency} which states that reductions are frequently admissible for candidate terrains, and Hypothesis~\ref{hyp:ac:controller_performance} which states that capturing the complex dynamics and constraints expands the range of executable tasks.

\subsection{Gap Environment}

\begin{figure*}[!p]
\centering
\subfloat[1.0\textwidth][The Gap environment requires navigating both kinematic and dynamic constraints. Snapshots are shown of the trajectories under the Adaptive configuration.]{\includegraphics[width=1.0\textwidth]{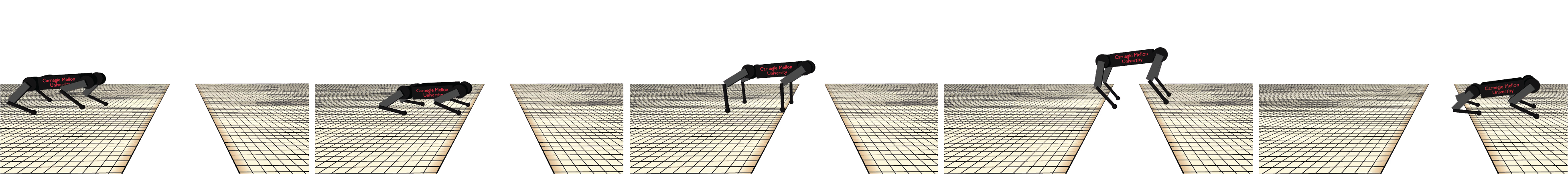}\label{fig:ac:gap_40cm_sequence}}
\quad
\vskip1em
\subfloat[0.485\textwidth][Gap environment state trajectories. Complex (red) and Adaptive (green) show changes to horizontal velocity and vertical position (indicated by the arrows) before the Simple (blue) and Mixed (gold) configurations.]{\includegraphics[width=0.485\textwidth]{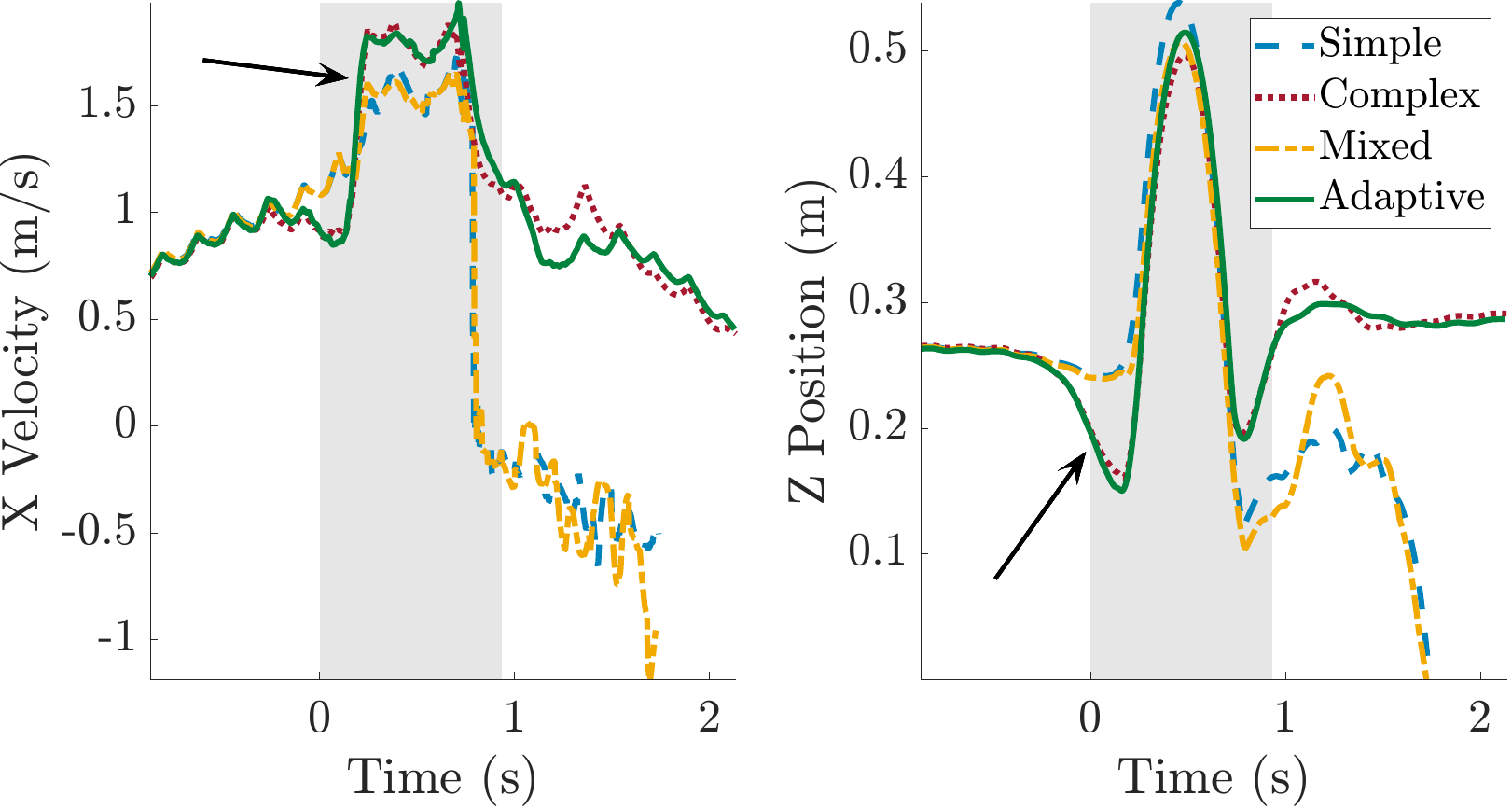}\label{fig:ac:gap_40cm_linear_states}}
\hfill
\subfloat[0.485\textwidth][Gap environment solve times. The increase four seconds into the behavior corresponds to navigating the gap. The sustained increases for the Simple and Mixed configurations correspond to failed solves after unsuccessful landing.]{\includegraphics[width=0.485\textwidth]{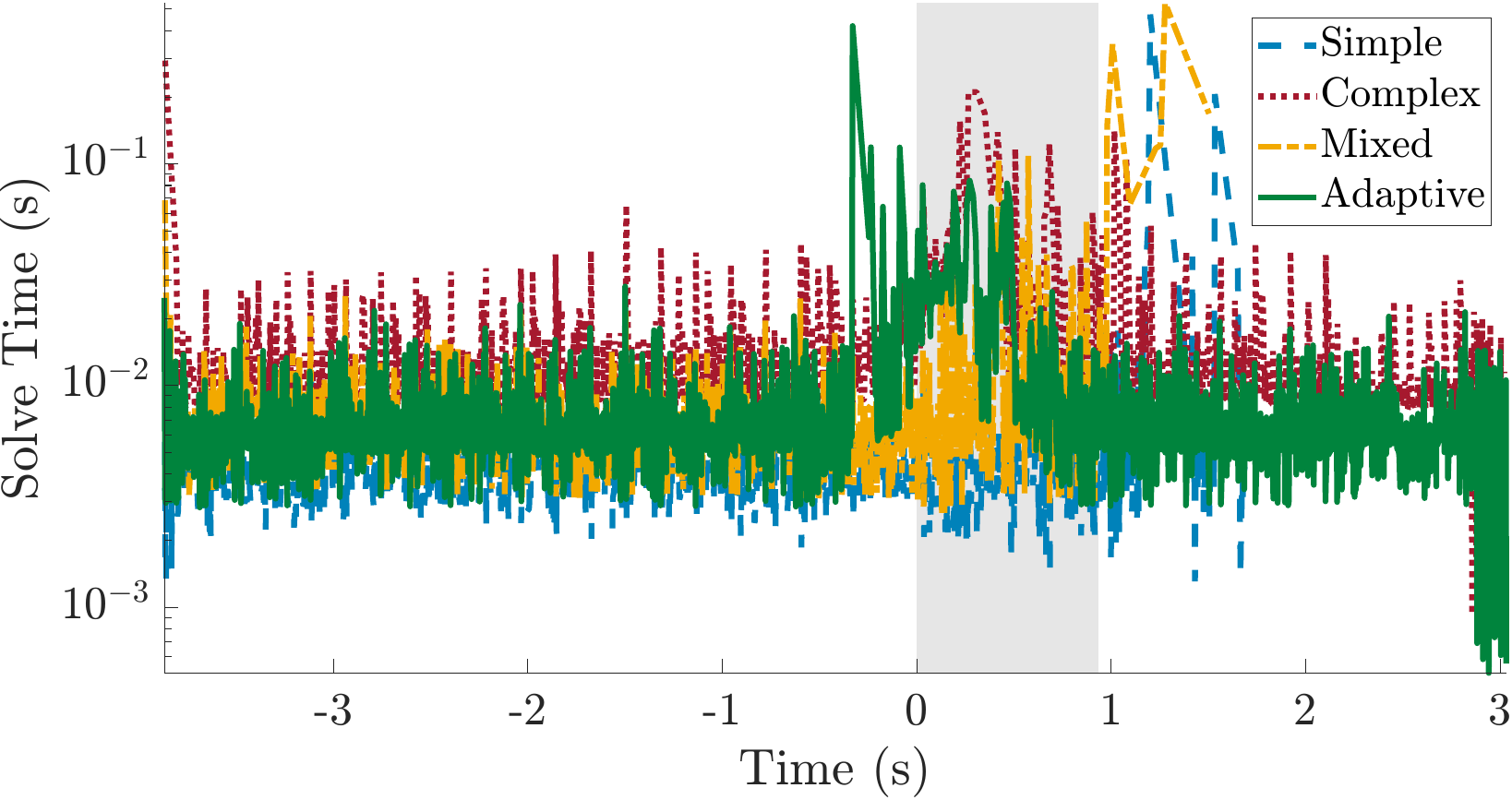}\label{fig:ac:gap_40cm_solve_time}}
\quad
\vskip1em
\subfloat[0.485\textwidth][Gap environment horizon simplicity set size. Similarly to the Step environment, the Adaptive configuration is able to simplify most of the horizon, with the most complexity occurring when both takeoff and touchdown are within the horizon.]{\includegraphics[width=0.485\textwidth]{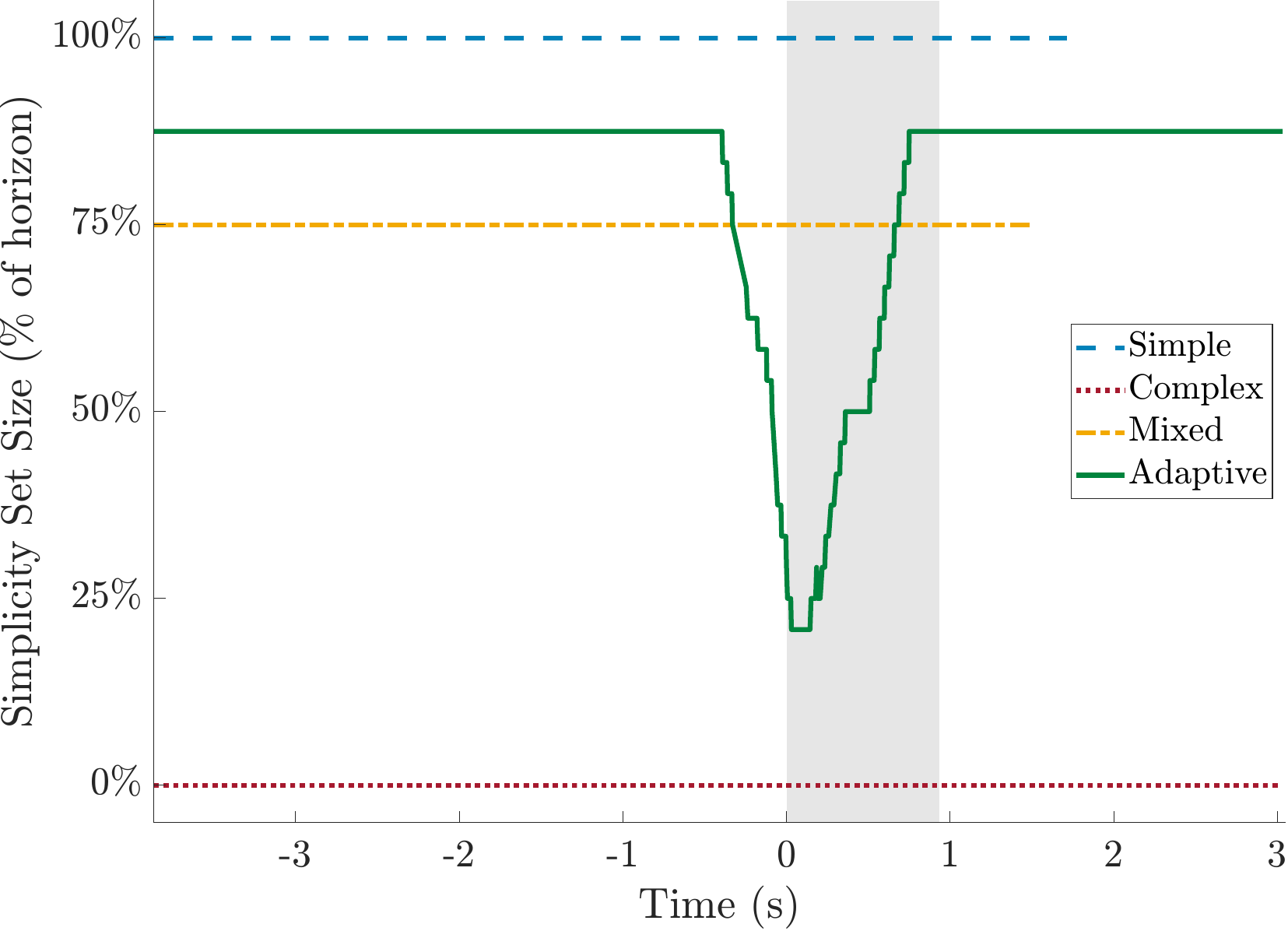}\label{fig:ac:gap_40cm_simple_percentage}}
\hfill
\subfloat[0.485\textwidth][Gap environment prediction horizons. Horizons at each time are indicated by horizontal slices of finite elements (dots), where dot color indicates model complexity. The vertical bands of increased complexity correspond to takeoff and touchdown, gaps indicate large solve times.]{\includegraphics[width=0.485\textwidth]{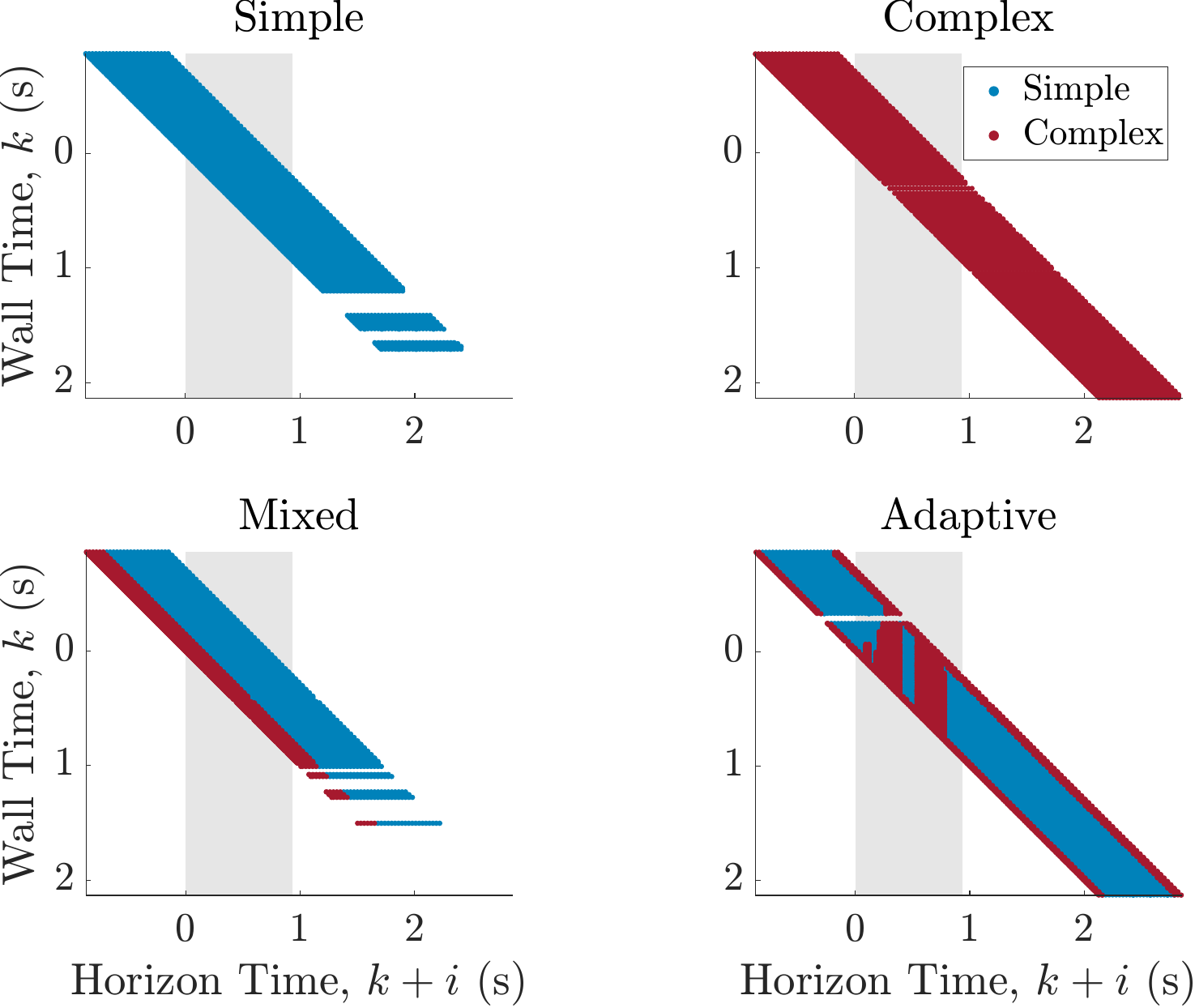}\label{fig:ac:gap_40cm_prediction_horizon}}
\quad
\vskip1em
\subfloat[0.485\textwidth][Experimental data for the Gap environment.]{
\renewcommand{\arraystretch}{1.3}
\begin{tabular}{c c c c c}
\hline
Config & Success Rate & \shortstack{Mean Solve \\ Time (ms)} & \shortstack{Slow Solve \\ Rate (\%)}  & \shortstack{Mean \\ Control (N)} \\
\hline
Simple & 0/10 & -- & -- & --  \\
Complex & 9/10 & 13 & 13 & 33\\
Mixed & 0/10 & -- & -- & --\\
Adaptive & \textbf{10/10} & \textbf{7.5} & \textbf{4.3} & \textbf{31} \\
\hline
\end{tabular}
\renewcommand{\arraystretch}{1}
\label{tab:ac:gap_experimental_data}
}
\caption[Data for Gap environment]{Data for Gap environment. The Adaptive and Complex configurations are able to reason about constraints at the end of the horizon, allowing them to alter the leap to increase forward velocity and successfully land. Gray shading indicates the leap period in the reference trajectory, with $k = 0$ corresponding to the beginning of this period.}
\label{fig:ac:gap_40cm}
\end{figure*}

A series of snapshots of the Adaptive configuration navigating the Gap environment are shown in Fig.~\ref{fig:ac:gap_40cm_sequence}. Due to the state-dependent actuator limits which reduces peak torque as joint velocity increases, the system must accelerate early to ensure enough velocity to land safely on the other side. Joint kinematics also limit how far forward the legs can reach to prepare for landing, so the system must be aware of these constraints before takeoff to ensure sufficient controllability.

Results for each configuration in the Gap environment are shown in Figs.~\ref{fig:ac:gap_40cm} and quantitatively summarized in Table~\ref{tab:ac:gap_experimental_data}. Only the Adaptive configuration has both the efficiency and model fidelity required to solve this task reliably. The Complex configuration fails in the majority of the trials due to its excessive computational effort, while the Simple and Mixed configurations never succeed because they lack the requisite constraint knowledge and thus do not leap far enough. Like in the Step environment, the configurations which are able to recognize constraints near the end of the horizon (Adaptive and Complex) do so immediately by lowering towards the ground to obtain a longer leaping stroke and accelerating forwards to ensure enough velocity to reach the other side of the gap, as shown in Fig.~\ref{fig:ac:gap_40cm_linear_states}.

The solve times for the Gap environment shown in Fig.~\ref{fig:ac:gap_40cm_solve_time} demonstrate similar trends as during the Step environment. Both Simple and Mixed formulations maintain their performance until they fail to cross the gap, which quickly causes failed solves. Both Complex and Adaptive configurations have periods of longer solves to plan the leaping and landing phase, but the Adaptive formation is able to recover faster solve times sooner due to its ability to convert complex element to simple ones near the end of the leap. This is further supported by the data in Table~\ref{tab:ac:gap_experimental_data} which shows a threefold reduction in Adaptive solve times exceeding one timestep compared to the Complex configuration. The mechanism for this reduction is further illustrated in the simplification percentages shown in Fig.~\ref{fig:ac:gap_40cm_simple_percentage} and the prediction horizons shown in Fig.~\ref{fig:ac:gap_40cm_prediction_horizon}. Even in the worst-case portion of the behavior, the Adaptive configuration retains 25\% simplification of the horizon, and once more feasible elements begin entering the horizon the Adaptive configuration can take advantage of the reduced complexity to improve solve times. These results support Hypotheses \ref{hyp:ac:simple_frequency} and \ref{hyp:ac:controller_performance}.

\section{Discussion} \label{sec:ac:discussion}

Table~\ref{tab:ac:exp_summary} summarizes the key results of the each of the experiments. While no one configuration is optimal for all three tasks, only the Adaptive configuration is able to retain the benefits of both model complexity and computational efficiency across each task. Among the configurations which can solve the difficult Gap environment and avoid stumbling over the Step environment, it performs the best at the highly dynamic Acceleration task due to its ability to more quickly react to aggressive commands. This suggests that the ideal scenario for deploying ACMPC rather than fixed configurations -- whether simple or complex -- is one with a high degree of heterogeneity, where nominal operation is generally expected but occasionally interrupted by the need for foresight or reactivity. This describes many common tasks and environments for legged locomotion, such as walking over an open stretch of flat pavement before scrambling over a curb or dodging away from a moving obstacle.

The computational efficiency afforded by ACMPC could have broader benefits for many robotic systems beyond the increase in agility measured here. Even with modern architectures, many industrial mobile robots -- legged systems in particular -- have tight computational, thermal, and power budgets. Algorithms which use fixed complexity configurations are unable to regulate their own draw on these valuable resources, and those that use highly complex models such as those discussed in Sec.~\ref{sec:ac:related_work} may unnecessarily consume these resources during trivial tasks like walking over flat terrain. Applying the concept of adaptive complexity to those methods would allow resources to be allocated elsewhere or even invested back into the controller by extending the prediction horizon.

This additional performance does not come without limitation. Adaptive complexity MPC is subject to the formulation of the complex system, and in particular the structure of its dynamics and constraints. Introducing additional numerical complexity such as hard constraints or non-convexity can make solving the OCP more susceptible to over-constrained problems, local minima, or poor convergence rates, which are then transferred to the adaptive configuration. This is most notable in the Step environment, in which the Simple configuration demonstrated remarkable ability to complete the task without constraint knowledge due to its numerical robustness. Ongoing work into well-conditioned OCP formulations such as the QP approximation approach of \cite{grandia2022perceptive} or methods which identify which constraints are most necessary would benefit the approaches discussed here by removing the potentially disproportionate effect of a few complex finite elements.

Another current drawback of adaptive complexity is robustness to unexpected errors in the simplicity set caused by model mismatch or disturbances. Introducing additional complexity in the interior of the horizon can degrade the initialization of the OCP -- this is largely why the Mixed configuration outperformed Adaptive in the Acceleration task. This effect could potentially be alleviated by recent approaches which warm-start the OCP with experiential data \cite{mansard2018using}, or possibly avoided by applying robust MPC techniques \cite{bemporad1999robust}. Conversely, one approach employed in this work to protect against this effect was to avoid re-adding elements to the simplicity set once they had been removed. While this reduced the instances of interior elements unexpectedly requiring additional complexity, it resulted in overly conservative simplicity sets -- many elements which initially violated the admissibility conditions later satisfied them once the controller took corrective actions. It may be possible to safely generate more optimistic simplicity sets, which could enable real-time deployment for more difficult tasks.

In addition, many hierarchical systems leverage reduced-order models to generate reference trajectories entirely in the simple system, making infeasible references highly relevant. Investigations into adapting model complexity to handle infeasible references such as \cite{batkovic2021model} or other ways to satisfy Assumption~\ref{as:ac:psi_dagger_reduces_cost} without requiring a reference would be crucial to expand this work to a broader class of systems. However, it not uncommon to define a simple system with the same dynamics as the complex system but with simpler constraints. This would permit an identity operator for the lifting function without need for a reference, yielding a simpler OCP which only evaluates constraints where necessary. This approach is akin to lazy strategies used in motion planning \cite{bohlin2000path}.

\begin{table}[]
    \centering
    \caption{Summary of experimental results in key metrics.}
    \label{tab:ac:exp_summary}
    \begin{tabular}{c c c c}
        \hline
        Task & Acceleration & Step & Gap  \\
        Key Metric & Max Velocity (m/s) & Mean Control (N) & Success Rate \\
        \hline
        Simple & \textbf{3.0} & 43 & 0/10 \\
        Complex & 1.5 & \textbf{16} & 9/10 \\
        Mixed & 2.7 & 22 & 0/10 \\
        Adaptive & 2.3 & 18 & \textbf{10/10} \\
        \hline
    \end{tabular}
\end{table}

\section{Conclusion} \label{sec:ac:conclusion}
This work presents a formulation of adaptive complexity MPC which actively identifies regions where dynamics and constraints can be simplified without compromising the feasibility or stability of the original system. Analysis of the proposed approach demonstrates that under key conditions these simplifications do not compromise the stability properties of the original system, and can enable new behaviors by acting quickly to perform agile motions or looking further into the future to execute behaviors. These advantages are demonstrated on a simulated quadrupedal robot performing agile behaviors with challenging environmental constraints, and in particular expanding the leaping capability through receding horizon execution with knowledge of joint constraints.

While the MPC formulation presented here was primarily evaluated in locomotion applications, future work could investigate its applicability to other domains that employ hierarchical structures, such as manipulation. For example, often in manipulation settings the internal joints of the manipulator are neglected and planning is primarily conducted in the space of object motions and forces. Adaptive complexity MPC would enable an efficient handling of manipulator kinematics only when necessary so that the system can respect these constraints while largely retaining the benefits of improved efficiency, including faster reactions to unexpected object motion or longer planning horizons.

\appendix

\subsection{Proofs Required for Adaptive Complexity MPC Stability} \label{sec:ac:appendix_stability_proofs}

The proof of Proposition~\ref{th:ac:recursive_feasibility} shows recursive feasibility by finding a feasible solution to the ACOCP at the successor state.
\begin{proof} [Proof (Proposition~\ref{th:ac:recursive_feasibility})]
Since $x^c_{k} \in \mathcal{X}_{N^a}$, there exists a solution $\mathbf{u}^{*a}(x^c_{k})$ to $\text{ACOCP}(x^c_{k})$. Because $\mathbf{u}^{*a}(x^c_{k})$ is a solution of (\ref{eq:ac:adaptive_ocp}), by Proposition~\ref{th:lifting_terminal_state} the corresponding predicted terminal state $x_{T} \in \mathcal{X}^c_{T}$. Let the successor control sequence $\Tilde{\mathbf{u}}^a(x^c_{k})$ be defined by (\ref{eq:ac:successor_control_seq}). We claim this sequence is feasible for $\text{ACOCP}(x^c_{k+1})$ solved at successor state $x^c_{k+1} \coloneqq f^c_{\textnormal{acmpc}}(x^c_k)$.

Firstly, the controls $u^{*a}_1, \dots, u^{*a}_{N^a-1}$ which are elements of $\mathbf{u}^{*a}(x^c_{k})$ which was a solution to (\ref{eq:ac:adaptive_ocp}), all lie in $\mathcal{Z}^{a}_{f}$ by Proposition~\ref{th:lifting_feasibility}. It follows from Assumption~\ref{as:ac:original_mpc_conditions} (since $x_{T} \in \mathcal{X}^c_{T}$) that $u_{T}(x_{T}) \in \mathcal{Z}^c_{f}$, and thus every element of $\Tilde{\mathbf{u}}^{a}(x^c_{k})$ satisfies the control constraint of (\ref{eq:ac:adaptive_ocp}).

Next we consider the state constraint. By Proposition~\ref{th:lifted_state} the state sequence resulting from initial state $f^c_{\textnormal{acmpc}}(x^c_{k})$ and control sequence $\Tilde{\mathbf{u}}^{a}(x^c_k)$ is $\Tilde{\mathbf{x}}^{a} \coloneqq \left[ \Tilde{x}^{a}_0, \Tilde{x}^{a}_1, \dots, \Tilde{x}^{a}_{N^a} \right]$ where,
\begin{align}
    \Tilde{x}^{a}_{i} &= x^{*{a}}_{i+1}, \quad i = 0, \dots, N^a-1 \\
    \Tilde{x}^{a}_{N^a} &= f^c_{u_{T}}(x_{T})
\end{align}
and $\Tilde{x}^{a}_0 = x^{*{a}}_1 = f^c(x^c_{k}, u^{*l}_{0}) = f^c_{\textnormal{acmpc}}(x^c_{k})$. By Proposition~\ref{th:lifting_feasibility}, the states $x^{*{a}}_1, \dots, x_{T}$ satisfy the state constraint. Since $x_{T} \in \mathcal{X}^c_{T}$, Assumption~\ref{as:ac:original_mpc_conditions} implies that $f^c_{u_{T}}(x_{T}) \in \mathcal{X}^c_{T} \subset \mathcal{X}_{N^{a}}$, so that every element of the state sequence $\Tilde{\mathbf{x}}^{a}$ satisfies the state constraint, and the new terminal state $\Tilde{x}^{a}_{N^a} = f^c_{u_{T}}(x_{T})$ satisfies the stability constraint. Hence $\Tilde{\mathbf{u}}^{a}(x^c_{k})$ is feasible for $\text{ACOCP}(f^c_{\textnormal{acmpc}}(x^c_{k}))$ and $f^c_{\textnormal{acmpc}}(x^c_{k}) \in \mathcal{X}_{N^a}$.
\end{proof}

The proof of Proposition~\ref{th:ac:decreasing_cost} shows that the OCP cost function decreases along the closed loop system by leveraging shared terms in the solutions along with the observation that the cost of a state in the simple system is equal to the cost of that state lifted into the complex system.
\begin{proof} [Proof (Proposition~\ref{th:ac:decreasing_cost})]
The sequence pairs $(\mathbf{u}^{*a}(x^c_{k}), \Tilde{\mathbf{u}}^a(x^c_{k}))$ and $(\mathbf{x}^{*a}, \Tilde{\mathbf{x}}^a)$ have common elements and thus the cost sequence can be simplified,
\begin{align} \label{eq:ac:cost_decreasing_proof}
    V^{*a}_{N^a}(f^c_{\textnormal{acmpc}}(x^c_{k})) -& V^{*a}_{N^a}(x^c_{k})  \nonumber \\
    \leq& V^{a}_{N^a}(x^c_{k+1}, \Tilde{\mathbf{u}}^a(x^c)) - V^{a}_{N^a}(x^c_{k}, \mathbf{u}^{*a}(x^c_{k})) \nonumber \\
    =& \big(V_{I}^c(x_{T}, u_{T}(x_{T})) + V_{T}(f_{u_{T}}(x_{T}))\big) \nonumber \\
    & \quad - \big(V_{I}^c(x^c_{k}, u^c_{\textnormal{acmpc}}(x^c_{k})) + V_{T}(x_{T})\big)  \nonumber \\
    & \quad + \big(V_{I}^c(x^c_{k+1}, u^{*a}_1) - V_{I}^a(x^c_{k+1}, u^{*a}_1)\big)
\end{align}
From the definition of the adaptive cost in (\ref{eq:ac:adaptive_cost}) and Assumption~\ref{as:ac:psi_dagger_reduces_cost}, $V_{I}^c(x^c_{k+1}, u^{*a}_1) - V_{I}^a(x^c_{k+1}, u^{*a}_1) \geq 0$. Since $x_{T} \in \mathcal{X}^c_{T}$, Assumption~\ref{as:ac:original_mpc_conditions} implies,
\begin{align}
    V_{T}(f_{u_{T}}(x_{T})) - V_{T}(x_{T}) \leq - V_{I}^c(x_{T}, u_{T}(x_{T}))
\end{align}
Hence every term on the right side of \eqref{eq:ac:cost_decreasing_proof} is positive semi-definite, and thus (\ref{eq:ac:decreasing_cost}) is satisfied for every $x^c_{k} \in \mathcal{X}_{N^a}$.
\end{proof}

The proof of Theorem~\ref{th:ac:adaptive_complexity_stability} follows from the prior propositions:
\begin{proof} [Proof (Theorem~\ref{th:ac:adaptive_complexity_stability})]
The inequalities in (\ref{eq:ac:stability_inequality}) follow from the structure of the value function defined in Assumption~\ref{as:ac:original_mpc_conditions}. The descent inequality in (\ref{eq:ac:stability_decreasing_cost}) is given by the structure of the value function along with Proposition~\ref{th:ac:decreasing_cost}. Asymptotic stability of the origin with region of attraction $\mathcal{X}_{N^a}$ follows from standard Lyapunov theory  \cite[Appendix B]{rawlings2017model}.
\end{proof}

\subsection{Proof of admissibility conditions for legged system} \label{sec:ac:legged_conditions_proof}
\begin{proof} [Proof (Lemma~\ref{th:ac:legged_conditions})]
The conditions under which the legged system described in Section~\ref{sec:ac:application_to_legged_systems} can be admissibly simplified rely on the feasibility of the reference trajectory. Lemma~\ref{th:ac:legged_conditions} states that an index within a lifted trajectory of the legged system can be admissibly reduced if the lifted components of the state-control pair $q_{\text{foot}}, \dot{q}_{\text{foot}}$, and $u_{\text{foot}}$ all lie on the trajectory and satisfy the constraints in \eqref{eq:ac:legged_complex_constraints}. We proceed by each condition for admissibility defined in \eqref{eq:ac:admissibility_conditions}, noting that \eqref{eq:ac:admissibility_terminal_states} is trivially satisfied by the assumptions of the lemma.

The condition in \eqref{eq:ac:admissibility_feasible} requires that constraints would be satisfied if the system were reduced, i.e.~$\psi^\dagger \circ \psi (z^l_i) \in \mathcal{Z}^c$. By the conditions given on the values of $z^l_i$ in the null space of $\psi$ and the definition of $\psi^\dagger$, it follows that $\psi^\dagger \circ \psi(z^l_i) = z^l_i$. Since $z^l_i \in \mathcal{Z}^c$, it follows that $\psi^\dagger \circ \psi(z^l_i) \in \mathcal{Z}^c$.

The condition in \eqref{eq:ac:admissibility_anchor} requires that the complex system is exactly anchored by the simple system at that index in the trajectory, i.e.~$\psi^\dagger_x \circ f^s \circ \psi(z^l_i) = f^c(z^l_i)$. We show this by directly applying the dynamics and mappings in Section~\ref{sec:ac:application_to_legged_systems}, dropping the index $k$ for simplicity,
\begin{align}
    \psi^\dagger_x \circ f^s \circ \psi(z^l) &=
    \psi^\dagger_x \circ \renewcommand{\arraystretch}{1.2}
    \left[ \begin{array}{c}
        \dot{q}_{\text{lin}} \\
        J_\omega(q_{\text{ang}}) \omega \\
        \frac{1}{m}\sum_j^n u_{\text{body},j} - g \\
        W(q_{\text{lin}}, \bar{q}_{\text{foot}}, \omega, u_{\text{body}})
    \end{array} \right] \nonumber
    \\
    &= \left[ \begin{array}{c}
        \dot{q}_{\text{lin}} \\
        J_\omega(q_{\text{ang}})\omega \\
        \dot{\bar{q}}_{\text{foot}} \\
        \frac{1}{m}\sum_j^n u_{\text{body},j} - g \\
        W(q_{\text{lin}}, \bar{q}_{\text{foot}}, \omega, u_{\text{body}}) \\
        \bar{u}_{\text{foot}}
    \end{array} \right] \nonumber
    \\
    &= f^c(z^l)
\renewcommand{\arraystretch}{1} \nonumber
\end{align}
Lastly, the condition in \eqref{eq:ac:admissibility_enter_manifold} requires that the dynamics at the prior state lead to the manifold, i.e.~$\psi^\dagger_x \circ \psi  \circ f^c(z^l_{i-1}) = f^c(z^l_{i-1})$. Since the trajectory $z^l$ is valid for the system in \eqref{eq:ac:legged_complex_dynamics}, $x^l_{i} = f^c(z^l_{i-1})$. Additionally, since $\psi^\dagger \circ \psi(z^l_i) = z^l_i$, it follows that $\psi^\dagger_x \circ \psi(z^l_i) = x^l_i$. Thus $\psi^\dagger_x \circ \psi  \circ f^c(z^l_{i-1}) = f^c(z^l_{i-1})$.
\end{proof}

\section*{Acknowledgments}
The authors would like to thank Lorenz T.~Biegler and Maxim Likhachev for insightful discussions on the theoretical properties of MPC and other motion planning algorithms.

\bibliography{main}

\begin{thebibliography}{10}
\providecommand{\url}[1]{#1}
\csname url@samestyle\endcsname
\providecommand{\newblock}{\relax}
\providecommand{\bibinfo}[2]{#2}
\providecommand{\BIBentrySTDinterwordspacing}{\spaceskip=0pt\relax}
\providecommand{\BIBentryALTinterwordstretchfactor}{4}
\providecommand{\BIBentryALTinterwordspacing}{\spaceskip=\fontdimen2\font plus
\BIBentryALTinterwordstretchfactor\fontdimen3\font minus \fontdimen4\font\relax}
\providecommand{\BIBforeignlanguage}[2]{{%
\expandafter\ifx\csname l@#1\endcsname\relax
\typeout{** WARNING: IEEEtran.bst: No hyphenation pattern has been}%
\typeout{** loaded for the language `#1'. Using the pattern for}%
\typeout{** the default language instead.}%
\else
\language=\csname l@#1\endcsname
\fi
#2}}
\providecommand{\BIBdecl}{\relax}
\BIBdecl

\bibitem{bellicoso2018advances}
C.~D. Bellicoso, M.~Bjelonic, L.~Wellhausen \emph{et~al.}, ``Advances in real-world applications for legged robots,'' \emph{Journal of Field Robotics}, vol.~35, no.~8, pp. 1311--1326, 2018.

\bibitem{hutter2017anymal}
M.~Hutter, C.~Gehring, A.~Lauber \emph{et~al.}, ``{ANYmal}--{T}oward legged robots for harsh environments,'' \emph{Advanced Robotics}, vol.~31, no.~17, pp. 918--931, 2017.

\bibitem{kolvenbach2020towards}
H.~Kolvenbach, D.~Wisth, R.~Buchanan \emph{et~al.}, ``Towards autonomous inspection of concrete deterioration in sewers with legged robots,'' \emph{Journal of Field Robotics}, vol.~37, no.~8, pp. 1314--1327, 2020.

\bibitem{kahneman2011thinking}
D.~Kahneman, \emph{Thinking, Fast and Slow}.\hskip 1em plus 0.5em minus 0.4em\relax Macmillan, 2011.

\bibitem{gochev2011path}
K.~Gochev, B.~Cohen, J.~Butzke \emph{et~al.}, ``Path planning with adaptive dimensionality,'' in \emph{Symposium on Combinatorial Search}, 2011.

\bibitem{fridovich2018planning}
D.~Fridovich-Keil, S.~L. Herbert, J.~F. Fisac \emph{et~al.}, ``Planning, fast and slow: A framework for adaptive real-time safe trajectory planning,'' in \emph{IEEE International Conference on Robotics and Automation}, 2018, pp. 387--394.

\bibitem{full1999templates}
R.~J. Full and D.~E. Koditschek, ``Templates and anchors: neuromechanical hypotheses of legged locomotion on land,'' \emph{Journal of Experimental Biology}, vol. 202, no.~23, pp. 3325--3332, 1999.

\bibitem{libby2016comparative}
T.~Libby, A.~M. Johnson, E.~Chang-Siu \emph{et~al.}, ``Comparative design, scaling, and control of appendages for inertial reorientation,'' \emph{IEEE Transactions on Robotics}, vol.~32, no.~6, pp. 1380--1398, 2016.

\bibitem{reif1979complexity}
J.~H. Reif, ``Complexity of the mover's problem and generalizations,'' in \emph{Symposium on Foundations of Computer Science}.\hskip 1em plus 0.5em minus 0.4em\relax IEEE Computer Society, 1979, pp. 421--427.

\bibitem{hauser2008motion}
K.~Hauser, T.~Bretl, J.-C. Latombe \emph{et~al.}, ``Motion planning for legged robots on varied terrain,'' \emph{The International Journal of Robotics Research}, vol.~27, no. 11--12, pp. 1325--1349, 2008.

\bibitem{mombaur2009using}
K.~Mombaur, ``Using optimization to create self-stable human-like running,'' \emph{Robotica}, vol.~27, no.~3, pp. 321--330, 2009.

\bibitem{posa2014direct}
M.~Posa, C.~Cantu, and R.~Tedrake, ``A direct method for trajectory optimization of rigid bodies through contact,'' \emph{The International Journal of Robotics Research}, vol.~33, no.~1, pp. 69--81, 2014.

\bibitem{dai2014whole}
H.~Dai, A.~Valenzuela, and R.~Tedrake, ``Whole-body motion planning with centroidal dynamics and full kinematics,'' in \emph{IEEE International Conference on Humanoid Robots}, 2014, pp. 295--302.

\bibitem{bartoszyk2017terrain}
S.~Bartoszyk, P.~Kasprzak, and D.~Belter, ``Terrain-aware motion planning for a walking robot,'' in \emph{IEEE International Workshop on Robot Motion and Control}, 2017, pp. 29--34.

\bibitem{fernbach2017kinodynamic}
P.~Fernbach, S.~Tonneau, A.~Del~Prete, and M.~Ta{\"\i}x, ``A kinodynamic steering-method for legged multi-contact locomotion,'' in \emph{IEEE/RSJ International Conference on Intelligent Robots and Systems}, 2017, pp. 3701--3707.

\bibitem{tonneau2018efficient}
S.~Tonneau, A.~Del~Prete, J.~Pettr{\'e} \emph{et~al.}, ``An efficient acyclic contact planner for multiped robots,'' \emph{IEEE Transactions on Robotics}, vol.~34, no.~3, pp. 586--601, 2018.

\bibitem{norby2020fast}
J.~Norby and A.~M. Johnson, ``Fast global motion planning for dynamic legged robots,'' in \emph{IEEE/RSJ International Conference on Intelligent Robots and Systems}, 2020, pp. 3829--3836.

\bibitem{winkler2018gait}
A.~W. Winkler, C.~D. Bellicoso, M.~Hutter, and J.~Buchli, ``Gait and trajectory optimization for legged systems through phase-based end-effector parameterization,'' \emph{IEEE Robotics and Automation Letters}, vol.~3, no.~3, pp. 1560--1567, 2018.

\bibitem{sentis2006whole}
L.~Sentis and O.~Khatib, ``A whole-body control framework for humanoids operating in human environments,'' in \emph{IEEE International Conference on Robotics and Automation}, 2006, pp. 2641--2648.

\bibitem{kuindersma2016optimization}
S.~Kuindersma, R.~Deits, M.~Fallon \emph{et~al.}, ``Optimization-based locomotion planning, estimation, and control design for the {Atlas} humanoid robot,'' \emph{Autonomous Robots}, vol.~40, no.~3, pp. 429--455, 2016.

\bibitem{neunert2018whole}
M.~Neunert, M.~St{\"a}uble, M.~Giftthaler \emph{et~al.}, ``Whole-body nonlinear model predictive control through contacts for quadrupeds,'' \emph{IEEE Robotics and Automation Letters}, vol.~3, no.~3, pp. 1458--1465, 2018.

\bibitem{zucker2011optimization}
M.~Zucker, N.~Ratliff, M.~Stolle \emph{et~al.}, ``Optimization and learning for rough terrain legged locomotion,'' \emph{The International Journal of Robotics Research}, vol.~30, no.~2, pp. 175--191, 2011.

\bibitem{plaku2010motion}
E.~Plaku, L.~E. Kavraki, and M.~Y. Vardi, ``Motion planning with dynamics by a synergistic combination of layers of planning,'' \emph{IEEE Transactions on Robotics}, vol.~26, no.~3, pp. 469--482, 2010.

\bibitem{mcconachie2020learning}
D.~McConachie, T.~Power, P.~Mitrano, and D.~Berenson, ``Learning when to trust a dynamics model for planning in reduced state spaces,'' \emph{IEEE Robotics and Automation Letters}, vol.~5, no.~2, pp. 3540--3547, 2020.

\bibitem{carpentier2017learning}
J.~Carpentier, R.~Budhiraja, and N.~Mansard, ``Learning feasibility constraints for multi-contact locomotion of legged robots,'' in \emph{Robotics: Science and Systems}, 2017.

\bibitem{kapadia2013multi}
M.~Kapadia, A.~Beacco, F.~Garcia \emph{et~al.}, ``Multi-domain real-time planning in dynamic environments,'' in \emph{ACM SIGGRAPH/Eurographics Symposium on Computer Animation}, 2013, pp. 115--124.

\bibitem{brandao2019multi}
M.~Brandao, M.~Fallon, and I.~Havoutis, ``Multi-controller multi-objective locomotion planning for legged robots,'' in \emph{IEEE/RSJ International Conference on Intelligent Robots and Systems}, 2019, pp. 4714--4721.

\bibitem{zhang2012combining}
H.~Zhang, J.~Butzke, and M.~Likhachev, ``Combining global and local planning with guarantees on completeness,'' in \emph{IEEE International Conference on Robotics and Automation}, 2012, pp. 4500--4506.

\bibitem{styler2017plan}
B.~Styler and R.~Simmons, ``Plan-time multi-model switching for motion planning,'' in \emph{International Conference on Automated Planning and Scheduling}, vol.~27, 2017, pp. 558--566.

\bibitem{dornbush2018single}
A.~Dornbush, K.~Vijayakumar, S.~Bardapurkar \emph{et~al.}, ``A single-planner approach to multi-modal humanoid mobility,'' in \emph{IEEE International Conference on Robotics and Automation}, 2018, pp. 4334--4341.

\bibitem{allgower2012nonlinear}
F.~Allg{\"o}wer and A.~Zheng, \emph{Nonlinear Model Predictive Control}.\hskip 1em plus 0.5em minus 0.4em\relax Birkh{\"a}user, 2012, vol.~26.

\bibitem{di2018dynamic}
J.~Di~Carlo, P.~M. Wensing, B.~Katz \emph{et~al.}, ``Dynamic locomotion in the {MIT} {C}heetah 3 through convex model-predictive control,'' in \emph{IEEE/RSJ International Conference on Intelligent Robots and Systems}, 2018, pp. 1--9.

\bibitem{laurenzi2018quadrupedal}
A.~Laurenzi, E.~M. Hoffman, and N.~G. Tsagarakis, ``Quadrupedal walking motion and footstep placement through linear model predictive control,'' in \emph{IEEE/RSJ International Conference on Intelligent Robots and Systems}, 2018, pp. 2267--2273.

\bibitem{shi2019model}
Y.~Shi, P.~Wang, M.~Li \emph{et~al.}, ``Model predictive control for motion planning of quadrupedal locomotion,'' in \emph{IEEE International Conference on Advanced Robotics and Mechatronics}, 2019, pp. 87--92.

\bibitem{liu2020leveraging}
J.~Liu, P.~Zhao, Z.~Gan \emph{et~al.}, ``Leveraging the template and anchor framework for safe, online robotic gait design,'' in \emph{IEEE International Conference on Robotics and Automation}, 2020, pp. 10\,869--10\,875.

\bibitem{kurtz2019formal}
V.~Kurtz, R.~R. da~Silva, P.~M. Wensing, and H.~Lin, ``Formal connections between template and anchor models via approximate simulation,'' in \emph{IEEE-RAS International Conference on Humanoid Robots}, 2019, pp. 64--71.

\bibitem{pandala2022robust}
A.~Pandala, R.~T. Fawcett, U.~Rosolia \emph{et~al.}, ``Robust predictive control for quadrupedal locomotion: Learning to close the gap between reduced-and full-order models,'' \emph{IEEE Robotics and Automation Letters}, vol.~7, no.~3, pp. 6622--6629, 2022.

\bibitem{li2021model}
H.~Li, R.~Frei, and P.~M. Wensing, ``Model hierarchy predictive control of robotic systems,'' \emph{IEEE Robotics and Automation Letters}, 2021.

\bibitem{johnson2013legged}
A.~M. Johnson and D.~E. Koditschek, ``Legged self-manipulation,'' \emph{IEEE Access}, vol.~1, pp. 310--334, 2013.

\bibitem{nguyen2019optimized}
Q.~Nguyen, M.~J. Powell, B.~Katz \emph{et~al.}, ``Optimized jumping on the {MIT} {C}heetah 3 robot,'' in \emph{IEEE International Conference on Robotics and Automation}, 2019, pp. 7448--7454.

\bibitem{kolvenbach2019towards}
H.~Kolvenbach, E.~Hampp, P.~Barton \emph{et~al.}, ``Towards jumping locomotion for quadruped robots on the moon,'' in \emph{IEEE/RSJ International Conference on Intelligent Robots and Systems}, 2019, pp. 5459--5466.

\bibitem{chignoli2022rapid}
M.~Chignoli, S.~Morozov, and S.~Kim, ``Rapid and reliable quadruped motion planning with omnidirectional jumping,'' in \emph{IEEE International Conference on Robotics and Automation}, 2022, pp. 6621--6627.

\bibitem{ponton2021efficient}
B.~Ponton, M.~Khadiv, A.~Meduri, and L.~Righetti, ``Efficient multicontact pattern generation with sequential convex approximations of the centroidal dynamics,'' \emph{IEEE Transactions on Robotics}, vol.~37, no.~5, pp. 1661--1679, 2021.

\bibitem{grandia2022perceptive}
R.~Grandia, F.~Jenelten, S.~Yang \emph{et~al.}, ``Perceptive locomotion through nonlinear model predictive control,'' \emph{IEEE Transactions on Robotics}, vol.~39, no.~5, pp. 3402--3421, 2023.

\bibitem{meduri2023biconmp}
A.~Meduri, P.~Shah, J.~Viereck \emph{et~al.}, ``Biconmp: A nonlinear model predictive control framework for whole body motion planning,'' \emph{IEEE Transactions on Robotics}, 2023.

\bibitem{grandia2019feedback}
R.~Grandia, F.~Farshidian, R.~Ranftl, and M.~Hutter, ``Feedback {MPC} for torque-controlled legged robots,'' in \emph{IEEE/RSJ International Conference on Intelligent Robots and Systems}, 2019, pp. 4730--4737.

\bibitem{sleiman2021unified}
J.-P. Sleiman, F.~Farshidian, M.~V. Minniti, and M.~Hutter, ``A unified {MPC} framework for whole-body dynamic locomotion and manipulation,'' \emph{IEEE Robotics and Automation Letters}, vol.~6, no.~3, pp. 4688--4695, 2021.

\bibitem{mastalli2020crocoddyl}
C.~Mastalli, R.~Budhiraja, W.~Merkt \emph{et~al.}, ``Crocoddyl: An efficient and versatile framework for multi-contact optimal control,'' in \emph{IEEE International Conference on Robotics and Automation}, 2020, pp. 2536--2542.

\bibitem{mastalli2022agile}
C.~Mastalli, W.~Merkt, G.~Xin \emph{et~al.}, ``Agile maneuvers in legged robots: a predictive control approach,'' \emph{arXiv preprint arXiv:2203.07554}, 2022.

\bibitem{dantec2021whole}
E.~Dantec, R.~Budhiraja, A.~Roig \emph{et~al.}, ``Whole body model predictive control with a memory of motion: Experiments on a torque-controlled {Talos},'' in \emph{IEEE International Conference on Robotics and Automation}, 2021, pp. 8202--8208.

\bibitem{rawlings2017model}
J.~B. Rawlings, D.~Q. Mayne, and M.~Diehl, \emph{Model Predictive Control: Theory, Computation, and Design}.\hskip 1em plus 0.5em minus 0.4em\relax Nob Hill Publishing Madison, 2017, vol.~2.

\bibitem{limon2009input}
D.~Limon, T.~Alamo, D.~M. Raimondo \emph{et~al.}, ``Input-to-state stability: a unifying framework for robust model predictive control,'' in \emph{Nonlinear Model Predictive Control}.\hskip 1em plus 0.5em minus 0.4em\relax Springer, 2009, pp. 1--26.

\bibitem{wensing2022optimization}
P.~M. Wensing, M.~Posa, Y.~Hu \emph{et~al.}, ``Optimization-based control for dynamic legged robots,'' \emph{arXiv preprint arXiv:2211.11644}, 2022.

\bibitem{ferramosca2010economic}
A.~Ferramosca, J.~B. Rawlings, D.~Lim{\'o}n, and E.~F. Camacho, ``Economic {MPC} for a changing economic criterion,'' in \emph{IEEE Conference on Decision and Control}, 2010, pp. 6131--6136.

\bibitem{paper:johnson_hs_2016}
A.~M. Johnson, S.~E. Burden, and D.~E. Koditschek, ``A hybrid systems model for simple manipulation and self-manipulation systems,'' \emph{International Journal of Robotics Research}, vol.~35, no.~11, pp. 1354--1392, September 2016.

\bibitem{chignoli2020variational}
M.~Chignoli and P.~M. Wensing, ``Variational-based optimal control of underactuated balancing for dynamic quadrupeds,'' \emph{IEEE Access}, vol.~8, pp. 49\,785--49\,797, 2020.

\bibitem{norby2022quad}
J.~Norby, Y.~Yang, A.~Tajbakhsh \emph{et~al.}, ``{Quad-SDK}: {F}ull stack software framework for agile quadrupedal locomotion,'' in \emph{ICRA Workshop on Legged Robots}, May 2022.

\bibitem{wachter2006implementation}
A.~W{\"a}chter and L.~T. Biegler, ``On the implementation of an interior-point filter line-search algorithm for large-scale nonlinear programming,'' \emph{Mathematical Programming}, vol. 106, no.~1, pp. 25--57, 2006.

\bibitem{raibert1986legged}
M.~H. Raibert, \emph{Legged Robots that Balance}.\hskip 1em plus 0.5em minus 0.4em\relax MIT press, 1986.

\bibitem{mansard2018using}
N.~Mansard, A.~DelPrete, M.~Geisert \emph{et~al.}, ``Using a memory of motion to efficiently warm-start a nonlinear predictive controller,'' in \emph{IEEE International Conference on Robotics and Automation}, 2018, pp. 2986--2993.

\bibitem{bemporad1999robust}
A.~Bemporad and M.~Morari, ``Robust model predictive control: A survey,'' in \emph{Robustness in Identification and Control}.\hskip 1em plus 0.5em minus 0.4em\relax Springer, 1999, pp. 207--226.

\bibitem{batkovic2021model}
I.~Batkovic, M.~Ali, P.~Falcone, and M.~Zanon, ``Model predictive control with infeasible reference trajectories,'' \emph{arXiv preprint arXiv:2109.04846}, 2021.

\bibitem{bohlin2000path}
R.~Bohlin and L.~E. Kavraki, ``Path planning using lazy {PRM},'' in \emph{IEEE International Conference on Robotics and Automation}, vol.~1, 2000, pp. 521--528.

\end{thebibliography}

\begin{IEEEbiography}[{\includegraphics[width=1in,height=1.25in,clip,keepaspectratio]{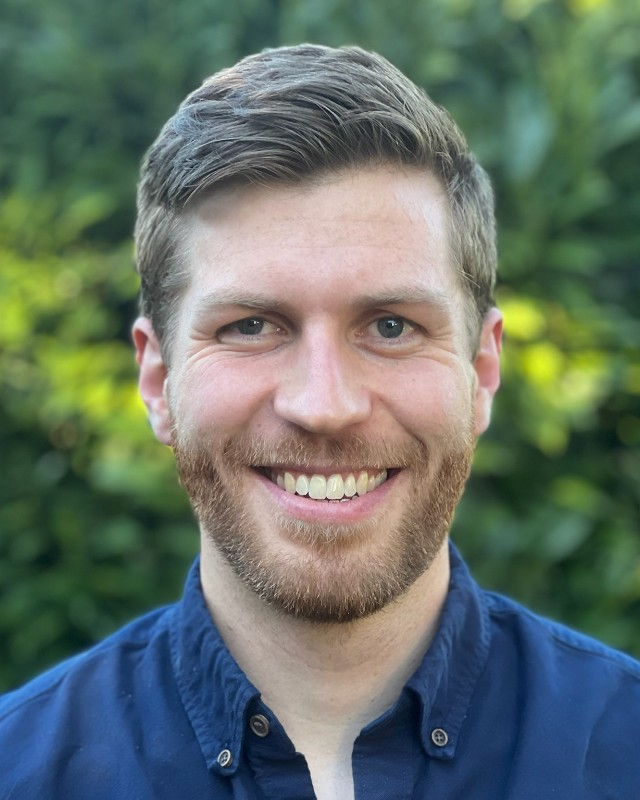}}]{Joseph Norby}
(Member, IEEE) received the B.S.~degree in mechanical engineering from the University of Notre Dame, Notre Dame, IN, USA in 2016, and the Ph.D.~degree in mechanical engineering from Carnegie Mellon University, Pittsburgh, PA, USA, in 2022.

He is currently the Motion Control and Planning Lead at Apptronik, Austin, TX, USA, where he specializes in navigation planning, dynamic locomotion, and contact-rich manipulation for humanoid robots.

\end{IEEEbiography}

\begin{IEEEbiography}[{\includegraphics[width=1in,height=1.25in,clip,keepaspectratio]{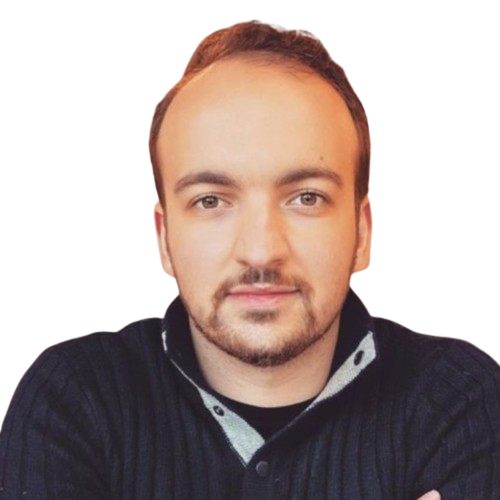}}]{Ardalan  Tajbakhsh}
(Student Member, IEEE) received the B.S.~degree in mechanical engineering with honors from the University of Illinois, Urbana-Champaign, IL, USA in 2018, and the M.S.~degree in mechanical engineering with concentration in robotics from Carnegie Mellon University, Pittsburgh, PA, USA, in 2020.

He is currently a PhD candidate at Carnegie Mellon University. His research interests include scalable motion planning and control for multi-agent systems under uncertain real-world conditions. 
\end{IEEEbiography}

\begin{IEEEbiography}[{\includegraphics[width=1in,height=1.25in,clip,keepaspectratio]{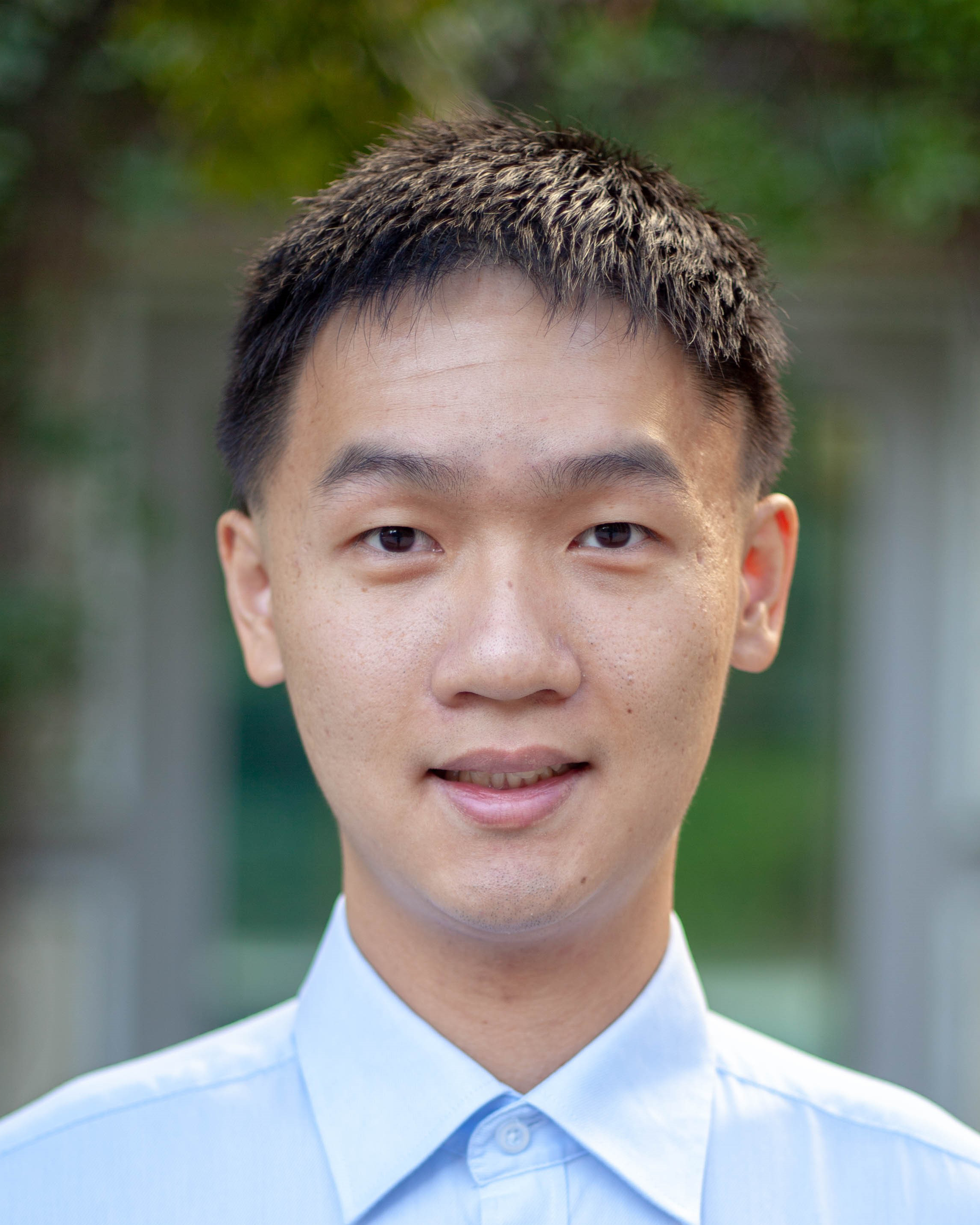}}]{Yanhao Yang}
(Student Member, IEEE) received the B.E.~degree in mechanical engineering from the South China University of Technology, Guangzhou, China in 2020, and the M.S.~degree in mechanical engineering from Carnegie Mellon University, Pittsburgh, PA, USA, in 2022.

He is currently a PhD student at Oregon State University, where he specializes in geometric motion planning and control for bio-inspired robots. 
\end{IEEEbiography}

\begin{IEEEbiography}[{\includegraphics[width=1in,height=1.25in,clip,keepaspectratio]{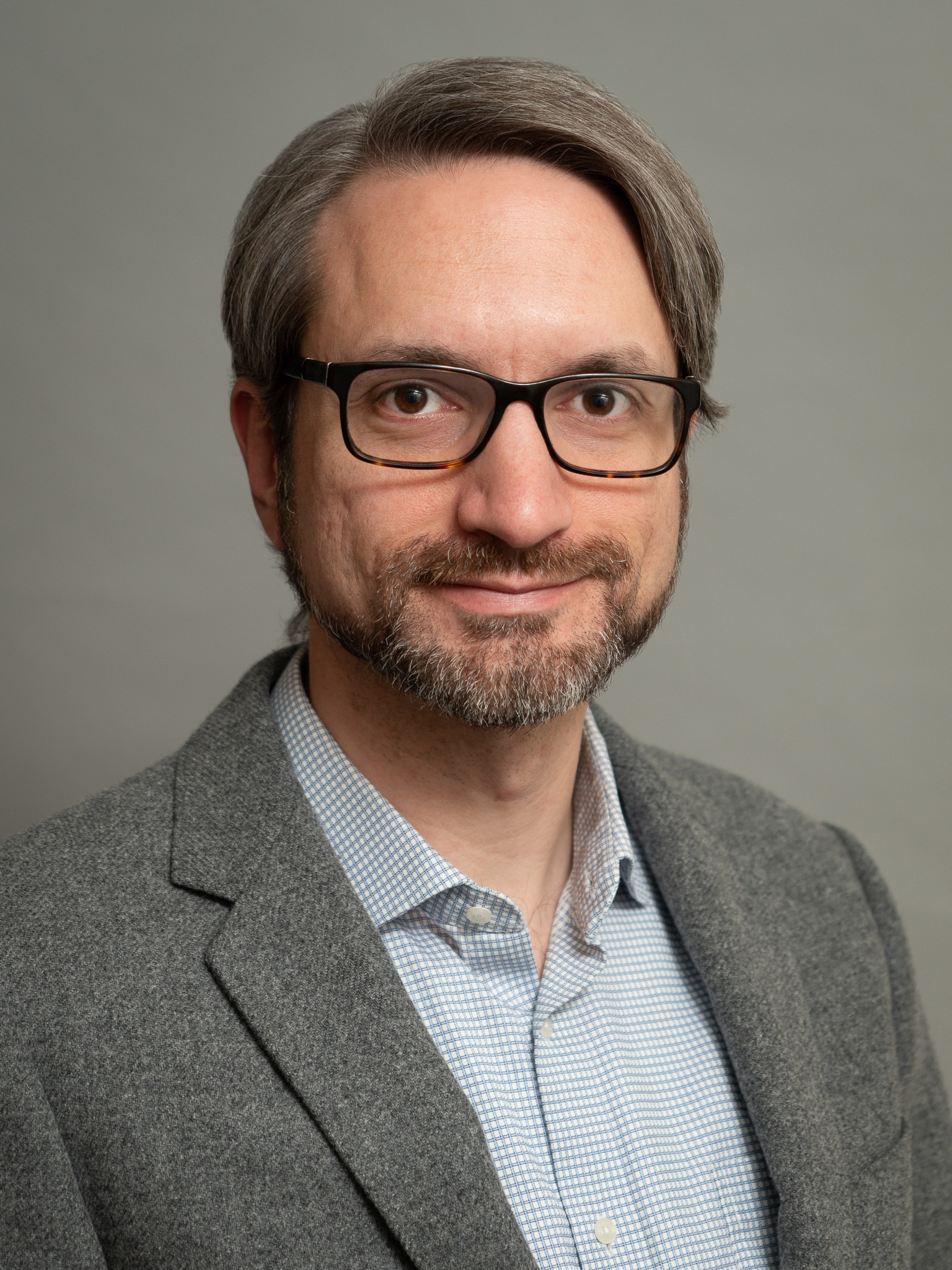}}]{Aaron M. Johnson} (S'06–M'14-SM'19)
received the B.S. degree in electrical and computer engineering from Carnegie Mellon
University, Pittsburgh, PA, USA, in 2008 and the Ph.D.
degree in electrical and systems engineering from the
University of Pennsylvania, Philadelphia, PA, USA, in
2014. 

He is currently an Associate Professor of Mechanical Engineering at Carnegie Mellon University,
with appointments in the Robotics Institute and Electrical and Computer Engineering Department. He was
previously a Postdoctoral Fellow at Carnegie Mellon
University and the University of Pennsylvania. 
His research interests include legged locomotion, hybrid dynamical systems, robust control, and bioinspired robotics. 
\end{IEEEbiography}

\balance

\end{document}